\newif\ifarxiv
\newcommand{\cZ}{\mathcal{Z}}
\newcommand{\cX}{\mathcal{X}}
\newcommand{\cY}{\mathcal{Y}}
\newcommand{\cH}{\mathcal{H}}
\newcommand{\cC}{\mathcal{C}}
\newcommand{\cG}{\mathcal{G}}
\newcommand{\cF}{\mathcal{F}}
\newcommand{\cD}{\mathcal{D}}
\newcommand{\includeimage}[1]{\includegraphics[trim={4cm 2.5cm 2.2cm 2.95cm}, clip, width=.15\linewidth,keepaspectratio,valign=m]{#1}}
\newcommand{\includepic}[1]{\includegraphics[trim={4cm 2.5cm 2.2cm 2.95cm}, clip, width=.17\linewidth,keepaspectratio,valign=m]{#1}}
\newcolumntype{j}{@{\hspace{1ex}}>{\collectcell\includeimage}c<{\endcollectcell}}
\newcolumntype{i}{@{\hspace{1ex}}>{\collectcell\includepic}c<{\endcollectcell}}
\newcolumntype{P}[1]{>{\centering\arraybackslash}p{#1}}
\title{Multicalibration as Boosting for Regression}
\author{Ira Globus-Harris \and Declan Harrison \and Michael Kearns \and Aaron Roth \and Jessica Sorrell}
\newtheorem{theorem}{Theorem}[section]
\newtheorem{corollary}[theorem]{Corollary}
\newtheorem{lemma}[theorem]{Lemma}
\newtheorem{definition}[theorem]{Definition}
\newtheorem{remark}[theorem]{Remark}
\newcommand{\R}{\mathbb{R}}
\newcommand{\dist}{\mathcal{D}}
\newcommand{\ind}{\mathbbm{1}}
\newcommand{\eps}{\varepsilon}
\newcommand{\E}{\mathop{\mathbb{E}}}
\newcommand{\ndim}{\textup{Ndim}}
\newcommand{\pdim}{\textup{Pdim}}
\newcommand{\round}{\textup{Round}}
\newcommand{\X}{\mathcal{X}}
\newcommand{\hyp}{\mathcal{H}}
\newcommand{\hboost}{\mathcal{H}_{\textup{boost}}}
\newcommand{\cL}{\mathcal{L}}
\begin{document}

\ifarxiv
\maketitle
\else
\twocolumn[
\icmltitle{Multicalibration as Boosting for Regression}



\icmlsetsymbol{equal}{*}

\begin{icmlauthorlist}
\icmlauthor{Ira Globus-Harris}{penn}
\icmlauthor{Declan Harrison}{penn}
\icmlauthor{Michael Kearns}{penn}
\icmlauthor{Aaron Roth}{penn}
\icmlauthor{Jessica Sorrell}{penn}
\end{icmlauthorlist}

\icmlaffiliation{penn}{Department of Computer and Information Sciences, University of Pennsylvania, Philadelphia PA, USA}

\icmlcorrespondingauthor{Aaron Roth}{aaroth@seas.upenn.edu}

\icmlkeywords{Machine Learning, Boosting, Multicalibration ICML}

\vskip 0.3in
]
\fi 

\begin{abstract}
We study the connection between multicalibration and boosting for squared error regression. First we prove a useful characterization of multicalibration in terms of a ``swap regret'' like condition on squared error. Using this characterization, we give an exceedingly simple algorithm that can be analyzed both as a boosting algorithm for regression and as a multicalibration algorithm for a class $\cH$ that makes use only of a standard squared error regression oracle for $\cH$. We give a weak learning assumption on $\cH$ that ensures convergence to Bayes optimality without the need to make any realizability assumptions --- giving us an agnostic boosting algorithm for regression. We then show that our weak learning assumption on $\cH$ is both necessary and sufficient for multicalibration with respect to $\cH$ to imply Bayes optimality. We also show that if $\cH$ satisfies our weak learning condition relative to another class $\cC$ then multicalibration with respect to $\cH$ implies multicalibration with respect to $\cC$. Finally we investigate the empirical performance of our algorithm experimentally using an open source implementation that we make available on GitHub\footnote{Our code repository can be found at \url{https://github.com/Declancharrison/Level-Set-Boosting}}. 
\end{abstract}

\section{Introduction}
We revisit the problem of boosting 
for regression, 
and develop a new agnostic regression boosting algorithm via a connection to multicalibration. In doing so, we shed additional light on multicalibration, a recent learning objective that has emerged from the algorithmic fairness literature \citep{hebert2018multicalibration}. In particular, we characterize multicalibration in terms of a ``swap-regret'' like condition, 
and use it to answer the question
``what  property must a collection of functions $\cH$ have so that multicalibration with respect to $\cH$ implies Bayes optimality?'', giving  a complete answer to problem asked by \cite{burhanpurkar2021scaffolding}. Using our swap-regret characterization, we derive an especially simple algorithm for learning a multicalibrated predictor for a class of functions $\cH$ by reduction to a standard squared-error regression algorithm for $\cH$. The same algorithm can also be analyzed as a boosting algorithm for squared error regression that makes calls to a weak learner for squared error regression on subsets of the original data distribution {\em without the need to relabel examples} (in contrast to Gradient Boosting as well as existing  multicalibration algorithms). This lets us specify a weak learning condition that is sufficient for convergence to the Bayes optimal predictor (even if the Bayes optimal predictor does not have zero error), avoiding the kinds of realizability assumptions that are implicit in analyses of boosting algorithms that converge to zero error. We conclude that ensuring multicalibration with respect to $\cH$ corresponds to boosting for squared error regression in which $\cH$ forms the set of weak learners. Finally we define a weak learning condition for $\cH$ relative to a constrained class of functions $\cC$ (rather than with respect to the Bayes optimal predictor). We show that multicalibration with respect to $\cH$ implies multicalibration with respect to $\cC$  if $\cH$ satisfies the weak learning condition with respect to $\cC$, which in turn implies accuracy at least that of the best function in $\cC$.

\paragraph{Multicalibration}
Consider a distribution $\cD \in \Delta \cZ$ defined over a domain $\cZ = \cX \times \mathbb{R}$ of feature vectors $x \in \cX$ paired with real valued labels $y$. Informally, a regression function $f:\cX\rightarrow \mathbb{R}$ is \emph{calibrated} if for every $v$ in the range of $f$, $\E_{(x,y) \sim \cD}[y | f(x) = v] = v$. In other words, $f(x)$ must be an unbiased estimator of $y$, even conditional on the value of its own prediction. Calibration on its own is a weak condition, because it only asks for $f$ to be unbiased on \emph{average} over all points $x$ such that $f(x) = v$. For example, the constant predictor that predicts $f(x) = \E_{(x,y) \sim \cD}[y]$ is calibrated. Thus calibration does not imply accuracy---a calibrated predictor need not make predictions with lower squared error than the best constant predictor. Calibration also does not imply that $f$ is equally representative of the label distribution on different subsets of the feature space $\cX$. For example, given a subset of the feature space $G \subseteq \cX$, even if $f$ is calibrated, it may be that $f$ is not calibrated on the conditional distribution conditional on $x \in G$---it might be e.g. that $\E[y | f(x)=v,x \in G] \gg v$, and $\E[y | f(x) = v, x\not\in G] \ll v$. To correct this last deficiency, \cite{hebert2018multicalibration} defined \emph{multi-calibration}, which is a condition parameterized by a subset of groups $G \subseteq \cX$ each defined by an indicator function $h:\cX\rightarrow \{0,1\}$ in some class $\cH$. It asks (informally) that for each such $h\in \cH$, and for each $v$ in the range of $f$, that $\E[h(x)(y-v) | f(x) = v] = 0$. Since $h$ is a binary indicator function for some set $G$, this is equivalent to asking for calibration not just marginally over $\cD$, but simultaneously for calibration over $\cD$ conditional on $x \in G$. \cite{kim2019multiaccuracy} and \cite{gopalan2022omnipredictors} generalize multicalibration beyond group indicator functions to arbitrary real valued functions $h :\cX\rightarrow \mathbb{R}$. Intuitively, as $\cH$ becomes a richer and richer set of functions, multicalibration becomes an increasingly stringent condition. But if $\cH$ consists of the indicator functions for e.g.~even a very large number of \emph{randomly selected} subsets $G \subseteq \cX$, then the constant predictor $f(x) = \E_{(x,y) \sim \cD}[y]$ will still be approximately multicalibrated with respect to $\cH$. What property of $\cH$  ensures that multicalibration with respect to $\cH$ implies that $f$ is a Bayes optimal regression function? This question was recently asked by \cite{burhanpurkar2021scaffolding} --- and we provide a necessary and sufficient condition. \\\\
\textbf{Boosting for Regression}
\ Boosting refers broadly to a collection of learning techniques that reduce the problem of ``strong learning'' (informally, finding an error optimal model) to a series of ``weak learning'' tasks (informally, finding a model that has only a small improvement over a trivial model)---See \cite{schapire2013boosting} for a textbook treatment. The vast majority of theoretical work on boosting studies the problem of binary classification, in which a weak learner is a learner that obtains classification error bounded below $1/2$. Several recent papers \cite{kim2019multiaccuracy,gopalan2022omnipredictors} have made connections between algorithms for guaranteeing multicalibration and boosting algorithms for binary classification. 

In this paper, we show a direct   connection between multicalibration and the much less well-studied problem of boosting for squared error regression \citep{friedman2001greedy,duffy2002boosting}. There is not a single established notion for what constitutes a weak learner in the regression setting (\cite{duffy2002boosting} introduce several different notions), and unlike boosting algorithms for classification problems which often work by calling a weak learner on a reweighting of the data distribution, existing algorithms for boosting for regression typically resort to calling a learning algorithm on \emph{relabelled} examples.  We give a boosting algorithm for regression that only requires calling a squared error regression learning algorithm on subsets of examples from the original distribution (without relabelling), which lets us formulate a weak learning condition that is sufficient to converge to the Bayes optimal predictor, without making the kinds of realizability assumptions implicit in the analysis of boosting algorithms that assume one can drive error to zero.

\subsection{Our Results}
We focus on classes of real valued functions $\cH$ that are closed under affine transformations --- i.e. classes such that if $f(x) \in \cH$, then for any pair of constants $a,b \in \mathbb{R}$, $(af(x) + b) \in \cH$ as well. Many natural classes of models satisfy this condition already (e.g. linear and polynomial  functions and regression trees), and any neural network architecture that does not already satisfy this condition can be made to satisfy it by adding two additional parameters ($a$ and $b$) while maintaining differentiability. Thus we view closure under affine transformations to be a weak assumption that is enforceable if necessary. 

First in Section \ref{sec:characterization} we prove the following characterization for multicalibration over $\cH$, for any class $\cH$ that is closed under affine transformations. Informally, we show that a model $f$ is multicalibrated with respect to $\cH$ if and only if, for every $v$ in the range of $f$:
\ifarxiv
$$\E_{(x,y) \sim \cD}[(f(x)-y)^2 | f(x) = v] \leq \min_{h \in \cH}\E_{(x,y) \sim \cD}[(h(x)-y)^2 | f(x) = v]$$
\else
\begin{align*}
\E_{(x,y) \sim \cD}[(f(x)-&y)^2 | f(x) = v] \\
&\leq \min_{h \in \cH}\E_{(x,y) \sim \cD}[(h(x)-y)^2 | f(x) = v].
\end{align*}
\fi 
(See Theorem \ref{thm:improve-multical-equivalent} for the formal statement). This is a ``swap regret''-like condition (as in \cite{foster1999regret} and \cite{blum2005external}), that states that $f$ must have lower squared error than any model $h \in \cH$, even conditional on its own prediction. Using this characterization, in Section \ref{sec:algorithm} we give an exceedingly simple algorithm for learning a multicalibrated predictor over $\cH$ given a squared error regression oracle for $\cH$. The algorithm simply repeats the following over $t$ rounds until convergence, maintaining a model $f:\cX\rightarrow \{0,1/m,2/m,\ldots,1\}$ with a discrete range with support over multiples of $1/m$ for some discretization factor $m$:
\begin{enumerate}
\item For each level set $v \in \{0,1/m,2/m,\ldots,1\}$, run a regression algorithm to find the $h^t_v \in \cH$ that minimizes squared error on the distribution $\cD|(f_{t-1}(x) = v)$, the distribution conditional on $f_{t-1}(x) = v$. 
\item Replace each level set $v$ of $f_{t-1}(x)$ with $h^t_v(x)$ to produce a new model $f_t$, and round its output to the discrete range $\{0,1/m,2/m,\ldots,1\}$
\end{enumerate}
Each iteration decreases the squared error of $f_t$, ensuring convergence, and our characterization of multicalibration ensures that we are multicalibrated with respect to $\cH$ at convergence. Compared to existing multicalibration algorithms (e.g. the split and merge algorithm of \cite{gopalan2022omnipredictors}), our algorithm is exceptionally simple and makes use of a standard squared-error regression oracle on subsets of the original distribution, rather than using a classification oracle or requiring example relabelling.

We can also view the same algorithm as a boosting algorithm for squared error regression. Suppose $\cH$ (or equivalently our weak learning algorithm) satisfies the following weak learning assumption: informally, that on any restriction of $\cD$ on which the Bayes optimal predictor is non-constant, there should be some $h \in \cH$ that obtains squared error better than that of the best constant predictor. Then our algorithm converges to the Bayes optimal predictor.  In Section \ref{sec:generalization} we give uniform convergence bounds which guarantee that the algorithm's accuracy and multicalibration guarantees generalize out of sample, with sample sizes that are linear in the pseudodimension of $\cH$. 

We then show in Section \ref{sec:accuracy} that in a strong sense this is the ``right'' weak learning assumption: Multicalibration with respect to $\cH$ implies Bayes optimality if and only if $\cH$ satisfies this weak learning condition. This gives a complete answer to the question of when multicalibration implies Bayes optimality. 

In \ifarxiv Section \ref{sec:constrained}\else Appendix \ref{ap:omni}\fi, we generalize our weak learning condition to a weak learning condition relative to a constrained class of functions $\cC$ (rather than relative to the Bayes optimal predictor), and show that if $\cH$ satisfies the weak learning condition relative to $\cC$, then multicalibration with respect to $\cH$ implies multicalibration with respect to $\cC$, and hence error that is competitive with the best model in $\cC$.

We give a fast, parallelizable implementation of our algorithm and in Section \ref{sec:experimental}  demonstrate its convergence to Bayes optimality on two-dimensional datasets useful for visualization, as well as evaluate the accuracy and calibration guarantees of our algorithm on real Census derived data using the Folktables package \cite{ding2021retiring}.

\subsection{Additional Related Work}
Calibration as a statistical objective dates back at least to \cite{dawid1982well}. \cite{foster1999regret} showed 
a tight connection between marginal calibration and internal (equivalently swap) regret. We extend this characterization to multicalibration.  Multicalibration was introduced by \cite{hebert2018multicalibration}, and variants of the original definition have been studied by a number of works  \citep{kim2019multiaccuracy,jung2021moment,gopalan2022omnipredictors,kim2022universal,rothuncertain}. We use the $\ell_2$ variant of multicalibration studied in \cite{rothuncertain}---but this definition implies all of the other variants of multicalibration up to a change in parameters\ifarxiv\else (Appendix \ref{ap:remarks})\fi. \cite{burhanpurkar2021scaffolding} first asked the question ``when does multicalibration with respect to $\cH$ imply accuracy'', and gave a sufficient condition: when $\cH$ contains (refinements of) the levelsets of the Bayes optimal regression function, together with techniques for attempting to find these. This can be viewed as a ``strong learning'' assumption, in contrast to our weak learning assumption on $\cH$.

Boosting for binary classification was introduced by \cite{schapire1990strength} and has since become a major topic of both theoretical and empirical study --- see \cite{schapire2013boosting} for a textbook overview. Both \cite{kim2019multiaccuracy} and \cite{gopalan2022omnipredictors} have drawn connections between algorithms for multicalibration and boosting for binary classification. In particular, \cite{gopalan2022omnipredictors} draw direct connections between their split-and-merge multicalibration algorithm and agnostic boosting algorithms of \cite{kalai2004learning,kanade2009potential,kalai2008agnostically}. Boosting for squared error regression is much less well studied. \cite{freund1997decision} give a variant of Adaboost (Adaboost.R) that reduces regression examples to infinite sets of classification examples, and requires a base regressor that optimizes a non-standard loss function. \cite{friedman2001greedy} introduced the popular gradient boosting method, which for squared error regression corresponds to iteratively fitting the residuals of the current model and then applying an additive update, but did not give a theoretical analysis. \cite{duffy2002boosting} give a theoretical analysis of several different boosting algorithms for squared error regression under several different weak learning assumptions. \ifarxiv Their algorithms require base regression algorithms that can be called (and guaranteed to succeed) on arbitrarily relabelled examples from the training distribution, and given their weak learning assumption, their analysis shows how to drive the error of the final model arbitrarily close to 0. Weak learning assumptions in this style implicitly make very strong realizabilty assumptions (that the Bayes error is close to 0), but because the weak learner is called on relabelled samples, it is difficult to enunciate a weak learning condition that is consistent with obtaining Bayes optimal error, but not better. The boosting algorithm we introduce only requires calling a standard regression algorithm on subsets of the examples from the training distribution, which makes it easy for us to define a weak learning condition that lets us drive error to the Bayes optimal rate without realizability assumptions --- thus our results can be viewed as giving an agnostic boosting algorithm for regression. \else Their weak learning assumptions imply convergence of their algorithms to $0$ error, which implicitly makes a very strong realizability assumption. Our weak learning assumption implies convergence to the Bayes optimal model without realizability assumptions --- thus our results can be viewed as giving an agnostic boosting algorithm for regression.\fi

\section{Preliminaries}
We study prediction tasks over a domain $\cZ = \cX\times \cY$. Here $\cX$ represents the \emph{feature} domain and $\cY$ represents the label domain. We focus on the bounded regression setting where $\cY=[0,1]$ (the scaling to $[0,1]$ is arbitrary). We write $\cD \in \Delta \cZ$ to denote a distribution over labelled examples, $\cD_\cX$ to denote the induced marginal distribution over features, and write $D \sim \cD^n$ to denote a dataset consisting of $n$ labelled examples sampled i.i.d. from $\cD$. We will be interested in the squared error of a model $f$ with respect to distribution $\dist$, $\E_{(x,y)\sim \dist}[(y - f(x))^2].$ We abuse notation and identify datasets $D = \{(x_1,y_1),\ldots,(x_n,y_n)\}$ with the empirical distribution over the examples they contain, and so we can write the empirical squared error over $D$: as
$\E_{(x,y)\sim D}[(y - f(x))^2]=\frac{1}{n}\sum_{i=1}^n(y_i - f(x_i))^2$.
When taking expectations over a distribution that is clear from context, we will frequently suppress notation indicating the relevant distribution for readability.

We write $R(f)$ to denote the range of a function $f$, and when $R(f)$ is finite, use $m$ to denote the cardinality of its range: $m = |R(f)|$. We are interested in finding models that are \textit{multicalibrated} with respect to a class of real valued functions $\cH$. We use an $\ell_2$ notion of multicalibration as used in \cite{rothuncertain}\ifarxiv:
\else
. This notion has natural relationships to the $\ell_1$ and $\ell_\infty$ notions of multicalibration used by e.g. \cite{hebert2018multicalibration, gopalan2022omnipredictors}. See appendix \ref{ap:remarks} for more details. 
\fi

\begin{definition}[Multicalibration]
\label{def:real-multicalibration}
Fix a distribution $\cD \in \Delta \cZ$ and a model $f:\cX\rightarrow [0,1]$ that maps onto a countable subset of its range. Let $\cH$ be an arbitrary collection of real valued functions $h:\cX \rightarrow \R$. We say that $f$ is $\alpha$-approximately multicalibrated with respect to $\cD$ and $\cH$ if for every $h \in \cH$:
\ifarxiv
$$K_2(f,h,\cD) = \sum_{v \in R(f)}\Pr_{(x,y) \sim \cD}[f(x) = v]\left(\E_{(x,y) \sim \cD}[h(x)(y - v) | f(x) = v]  \right)^2 \leq \alpha.$$
We say that $f$ is $\alpha$-approximately calibrated if:
$$K_2(f,\cD) = \sum_{v \in R(f)}\Pr_{(x,y) \sim \cD}[f(x) = v]\left(\E_{(x,y) \sim \cD}[(y - v) | f(x) = v]  \right)^2 \leq \alpha.$$
\else
\begin{align*}
K_2(f,h,\cD) = &\sum_{v \in R(f)}\Pr_{(x,y) \sim \cD}[f(x) = v] \\
& \cdot \left(\E_{(x,y) \sim \cD}[h(x)(y - v) | f(x) = v]  \right)^2 \leq \alpha.
\end{align*}
We say that $f$ is $\alpha$-approximately calibrated if:
\begin{align*}
K_2(f,\cD) = &\sum_{v \in R(f)} \Pr_{(x,y) \sim \cD}[f(x) = v] \\
& \cdot \left(\E_{(x,y) \sim \cD}[(y - v) | f(x) = v]  \right)^2 \leq \alpha.
\end{align*}
\fi
If $\alpha = 0$, then we simply say that a model is \emph{multicalibrated} or \emph{calibrated}. We will sometimes refer to $K_2(f,\cD)$ as the mean squared calibration error of a model $f$.
\end{definition}

\ifarxiv
\begin{remark}
When the functions $h(x)$ have binary range, we can view them as indicator functions for some subset of the data domain $S \subseteq \cX$, in which case multicalibration corresponds to asking for calibration conditional on membership in these subsets $S$. Allowing the functions $h$ to have real valued range is only a more general condition.  Our notion of approximate multicalibration takes a weighted average over the level sets $v$ of the predictor $f$, weighted by the probability that $f(x) = v$. This is necessary for any kind of out of sample generalization statement --- otherwise we could not even necessarily measure calibration error from a finite sample. Other work on multicalibration use related measures of multicalibration that we think of as $\ell_1$ or $\ell_\infty$ variants, that we can write as $K_1(f,h,\cD) = \sum_{v \in R(f)}\Pr_{(x,y) \sim \cD}[f(x) = v]\left|\E_{(x,y) \sim \cD}[h(x)(y - v) | f(x) = v]  \right|$ and $K_\infty(f,h,\cD) = \max_{v \in R(f)}\Pr_{(x,y) \sim \cD}[f(x) = v]\left(\E_{(x,y) \sim \cD}[h(x)(y - v) | f(x) = v]  \right)$. These notions are related to each other:
$K_2(f,h,\cD) \leq K_1(f,h,\cD) \leq \sqrt{K_2(f,h,\cD)}$ and $K_\infty(f,h,\cD) \leq K_1(f,h,\cD) \leq m K_\infty(f,h,\cD) $ \citep{rothuncertain}.
\end{remark}
\else
\fi 

We will characterize the relationship between multicalibration and Bayes optimality.
\begin{definition}[Bayes Optimal Predictor]
Let $f^*: \cX \rightarrow [0,1]$. We say that $f^*$ is the Bayes optimal predictor for $\cD$ if:
$$\E_{(x,y)\sim \dist}[(y - f^*(x))^2] \leq \min_{f:\cX\rightarrow [0,1]}[(y - f(x))^2]$$
The Bayes Optimal predictor satisfies:
$f^*(x) = \E_{(x',y)\sim \cD} \left[ y \vert x' = x \right].$
We say that a function $f:\cX\rightarrow [0,1]$ is $\gamma$-approximately Bayes optimal if 
\[
\E_{(x,y)\sim \dist}[(y - f(x))^2] \le \E_{(x,y)\sim \dist}[(y - f^*(x))^2] + \gamma.
\]
\end{definition}

Throughout this paper, we will denote the Bayes optimal predictor as $f^*$.

\section{A Characterization of Multicalibration}
\label{sec:characterization}
In this section we give a simple ``swap-regret'' like characterization of multicalibration for any class of functions $\cH$ that is closed under affine transformations:
\begin{definition}
A class of functions $\cH$ is closed under affine transformations if for every $a,b \in \R$, if $h(x) \in \cH$ then $h'(x) := a h(x) + b \in \cH$.
\end{definition}
As already discussed, closure under affine transformation is a mild assumption: it is already satisfied by many classes of functions $\cH$ like linear and polynomial functions and decision trees, and can be enforced for neural network architectures when it is not already satisfied by adding two additional parameters $a$ and $b$ without affecting our ability to optimize over the class.

 The first direction of our characterization states that if $f$ fails the multicalibration condition for some $h \in \cH$, then there is some other $h' \in \cH$ that improves over $f$ in terms of squared error, \emph{when restricted to a level set of $f$}. The second direction states the opposite: if $f$ is calibrated (but not necessarily multicalibrated), and if there is some level set of $f$ on which $h$ improves over $f$ in terms of squared error, then in fact $f$ must fail the multicalibration condition for  $h$. 

\begin{restatable}{theorem}{improveMulticalEquivalent}
\label{thm:improve-multical-equivalent}
Suppose $\cH$ is closed under affine transformation. Fix a model $f:\cX\rightarrow \R$ and a levelset $v \in R(f)$ of $f$. Then:
\begin{enumerate}
\item If there exists an $h \in \cH$ such that:
$$\E[h(x)(y-v)| f(x) = v] \geq \alpha, $$
for $\alpha > 0$, then there exists an $h' \in \cH$ such that:
\ifarxiv
$$\E[(f(x)-y)^2 - (h'(x) - y)^2 | f(x) = v] \geq \frac{\alpha^2}{\E[h(x)^2 | f(x) =v]},$$
\else
\begin{align*}
\E[(f(x)-y)^2 - & (h'(x) - y)^2 | f(x) = v]\\
&\geq \frac{\alpha^2}{\E[h(x)^2 | f(x) =v]}.
\end{align*}
\fi 
\item If $f$ is calibrated  and there exists an $h \in \cH$ such that 
$$\E[(f(x)-y)^2 - (h(x) - y)^2 | f(x) = v] \geq \alpha,$$ then:
$$\E[h(x)(y-v) | f(x) = v] \geq \frac{\alpha}{2}.$$
\end{enumerate}
\end{restatable}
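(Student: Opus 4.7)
The plan is to handle both directions by direct algebraic manipulation of the conditional squared-error gap on the level set, with closure under affine transformation doing the work in Part 1 and exact calibration doing the work in Part 2.

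For Part 1, given $h \in \cH$ with $\E[h(x)(y-v) \mid f(x)=v] \geq \alpha$, I would consider the one-parameter family of candidates $h'(x) := v + t\, h(x)$ for $t \in \R$. Since $\cH$ is closed under affine transformation, $h' \in \cH$ for every choice of $t$. On the level set $\{f(x) = v\}$ we have $f(x) - y = v - y$, and I would rely on the pointwise identity
\[
(v-y)^2 - (v + t\, h(x) - y)^2 \;=\; 2 t\, h(x)(y-v) - t^2 h(x)^2,
\]
which follows from direct expansion. Taking the conditional expectation on $\{f(x) = v\}$ and applying the hypothesis lower-bounds this by the concave quadratic $2 t \alpha - t^2 \E[h(x)^2 \mid f(x) = v]$. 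Optimizing at $t^\star = \alpha / \E[h(x)^2 \mid f(x) = v]$ (well-defined, since $\alpha > 0$ forces $\E[h(x)^2 \mid f(x)=v] > 0$) yields the claimed improvement $\alpha^2 / \E[h(x)^2 \mid f(x)=v]$, witnessed by $h' = v + t^\star h \in \cH$.

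For Part 2, I would start from the pointwise algebraic identity
\[
(v - y)^2 - (h(x) - y)^2 \;=\; 2(h(x) - v)(y - v) - (h(x) - v)^2,
\]
which is immediate from $(v-y)^2 - (h-y)^2 = (h-v)(2y - h - v)$ after rewriting $2y - h - v = 2(y-v) - (h-v)$. Taking the conditional expectation on $\{f(x) = v\}$, using the hypothesis that the left side has expectation at least $\alpha$, and discarding the non-positive term $-\E[(h(x)-v)^2 \mid f(x)=v]$ yields $\E[(h(x)-v)(y-v) \mid f(x)=v] \geq \alpha/2$. Finally, calibration of $f$ gives $\E[y - v \mid f(x) = v] = 0$, so $\E[(h(x)-v)(y-v) \mid f(x)=v] = \E[h(x)(y-v) \mid f(x)=v]$, which is the desired bound.

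There is no deep obstacle; both parts reduce to short algebraic manipulations. The only real design choice is the parameterization in Part 1: the naive scaling $h' = t\, h$ would leave a stray $\E[y \mid f(x)=v]$ term that cannot be controlled without a calibration assumption, whereas translating by $v$ inside the affine family makes $f$ and $h'$ agree on the level set by default and converts the improvement into a clean concave quadratic in $t$ whose optimum scales as $\alpha^2 / \E[h^2 \mid f(x)=v]$. Correspondingly, Part 2 uses calibration in exactly the place where Part 1 uses affine closure: to cancel the additive offset between $h$ and $h - v$.
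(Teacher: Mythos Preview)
Your proof is correct and essentially matches the paper's. Part 1 is identical: the paper sets $h' = v + \eta h$ with $\eta = \alpha/\E[h(x)^2 \mid f(x)=v]$ and expands to the same concave quadratic. For Part 2 the paper takes the minor variant of first proving the inequality with $\bar y_v := \E[y \mid f(x)=v]$ in place of $v$ (using $\E[y] = \bar y_v$ to rewrite $-\bar y_v^2$ as $-2\bar y_v y + \bar y_v^2$) and then invoking calibration to identify $v = \bar y_v$, whereas you work with $v$ throughout and use calibration at the end to kill the $-v\,\E[y-v \mid f(x)=v]$ cross term; this is the same algebra in a slightly different order.
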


\ifarxiv
\begin{proof}
We prove each direction in turn.
\begin{lemma}
\label{lem:multicalToImprovement}
Fix a  model $f:\cX\rightarrow \R$. Suppose for some $v \in R(f)$ there is an $h \in \cH$ such that:
$$\E[h(x)(y-v)| f(x) = v] \geq \alpha$$
Let $h' = v + \eta h(x)$ for $\eta = \frac{\alpha}{\E[h(x)^2 | f(x) =v]}$. Then:
$$\E[(f(x)-y)^2 - (h'(x) - y)^2 | f(x) = v] \geq \frac{\alpha^2}{\E[h(x)^2 | f(x) =v]}$$
\end{lemma}
\begin{proof}
We calculate:
\begin{eqnarray*}
&& \E[(f(x)-y)^2 - (h'(x) - y)^2 | f(x) = v] \\
&=& \E[(v-y)^2 - (v+\eta h(x) - y)^2 | f(x) = v] \\
\ifarxiv
&=& \E[v^2 - 2vy + y^2 - (v + \eta h(x))^2 + 2y(v+\eta h(x)) - y^2 | f(x) = v] \\
\else
&=& \E[v^2 - 2vy + y^2 - (v + \eta h(x))^2 \\
&& + 2y(v+\eta h(x)) - y^2 | f(x) = v] \\
\fi 
&=& \E[2y\eta h(x) - 2v \eta h(x) - \eta^2 h(x)^2 | f(x) = v] \\
&=& \E[2\eta h(x)(y-v) - \eta^2 h(x)^2 | f(x) = v] \\
&\geq& 2\eta \alpha - \eta^2 \E[h(x)^2 | f(x) = v] \\
&=& \frac{\alpha^2}{\E[h(x)^2|f(x) = v]}
\end{eqnarray*}
Where the last line follows from the definition of $\eta$. 
\end{proof}
The first direction of Theorem \ref{thm:improve-multical-equivalent} follows from Lemma \ref{lem:multicalToImprovement}, and the observation that since $\cH$ is closed under affine transformations, the function $h'$ defined in the statement of Lemma \ref{lem:multicalToImprovement} is in $\cH$. Now for the second direction.
\begin{lemma}
\label{lem:improvementToMultical}
Fix a model  $f:\cX\rightarrow \R$. Suppose for some $v \in R(f)$ there is an $h \in \cH$ such that:
$$\E[(\bar{y}_v-y)^2 - (h(x) - y)^2 | f(x) = v] \geq \alpha,$$
where $\bar{y}_v = \E[y \mid f(x) = v]$.
Then it must be that:
$$\E[h(x)(y-\bar{y}_v) | f(x) = v] \geq \frac{\alpha}{2}$$
\end{lemma}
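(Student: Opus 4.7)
The plan is to expand the squared-error difference into a form that exposes the inner product $\E[h(x)(y-\bar{y}_v) \mid f(x) = v]$, and then control the extra cross terms via a variance-style bound. Throughout I condition on $f(x) = v$, and to streamline notation I write $B = \E[h(x)(y - \bar{y}_v) \mid f(x) = v]$, $M = \E[h(x) \mid f(x) = v]$, and $S = \E[h(x)^2 \mid f(x) = v]$; the goal is to show $B \ge \alpha/2$.

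First I would use the identity $(\bar{y}_v - y)^2 - (h(x) - y)^2 = (\bar{y}_v - h(x))(\bar{y}_v + h(x) - 2y)$ and take conditional expectations. Using the defining property $\E[y \mid f(x) = v] = \bar{y}_v$, a short calculation collapses this to
\[
\E[(\bar{y}_v - y)^2 - (h(x) - y)^2 \mid f(x) = v] = -\bar{y}_v^2 - S + 2\,\E[h(x)\,y \mid f(x) = v].
\]
Next I would split $\E[h(x)\,y \mid f(x) = v] = B + \bar{y}_v M$, which turns the hypothesis $\alpha \le \E[(\bar{y}_v - y)^2 - (h(x) - y)^2 \mid f(x) = v]$ into
\[
\alpha \le 2B + \bigl(2\bar{y}_v M - \bar{y}_v^2 - S\bigr).
\]

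The remaining step is to argue the parenthesized term is nonpositive, which is where one has to be a little careful and is the only nontrivial move. The key observation is that by Jensen's inequality $S \ge M^2$, so
\[
2\bar{y}_v M - \bar{y}_v^2 - S \;\le\; 2\bar{y}_v M - \bar{y}_v^2 - M^2 \;=\; -(\bar{y}_v - M)^2 \;\le\; 0.
\]
Substituting this back gives $\alpha \le 2B$, i.e.\ $B \ge \alpha/2$, which is exactly the conclusion of the lemma. The second direction of Theorem~\ref{thm:improve-multical-equivalent} then follows immediately by applying the lemma with the calibration hypothesis $\bar{y}_v = v$, which replaces $(\bar{y}_v - y)^2$ with $(f(x) - y)^2$ and $y - \bar{y}_v$ with $y - v$ in the two displayed expressions.

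The only genuine obstacle is recognizing that one cannot conclude $B \ge \alpha/2$ without disposing of the extra $2\bar{y}_v M - \bar{y}_v^2 - S$ term; the Jensen bound $S \ge M^2$ is what makes this work, and it is crucial that we are comparing against the \emph{conditional mean} predictor $\bar{y}_v$ (not an arbitrary constant) so that the cross term collapses into a perfect square $-(\bar{y}_v - M)^2$.
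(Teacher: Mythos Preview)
Your proof is correct and is essentially the same argument as the paper's, just run in the opposite direction: the paper starts from $B$, subtracts the nonnegative quantity $\E[(h(x)-\bar{y}_v)^2 \mid f(x)=v]$, and recognizes the result as the squared-error difference; you start from the squared-error difference and isolate $2B$ plus a residual. One small simplification: your residual term $2\bar{y}_v M - \bar{y}_v^2 - S$ is already exactly $-\E[(h(x)-\bar{y}_v)^2 \mid f(x)=v]$, which is manifestly $\le 0$ without invoking Jensen, and this is precisely the nonnegative term the paper drops.
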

\begin{proof}
We calculate: 

\begin{eqnarray*}
&&\E_{(x,y) \sim \cD}[h(x)(y-\bar{y}_v) | f(x) = v] \\ &=& \E_{(x,y) \sim \cD}[h(x)y | f(x) = v] - \bar{y}_v \E_{(x,y) \sim \cD}[h(x) | f(x) = v] \\
&=& \frac{1}{2}\left(2\E_{(x,y) \sim \cD}[h(x)y | f(x) = v] - 2  \bar{y}_v \E_{(x,y) \sim \cD}[h(x) | f(x) = v] \right) \\
\ifarxiv
&\geq&  \frac{1}{2}\left(2\E_{(x,y) \sim \cD}[h(x)y | f(x) = v] - 2  \bar{y}_v \E_{(x,y) \sim \cD}[h(x) | f(x) = v] - \E_{(x,y) \sim \cD}[(h(x) - \bar{y}_v)^2 | f(x) = v] \right) \\
\else 
&\geq&  \frac{1}{2}\bigg(2\E_{(x,y) \sim \cD}[h(x)y | f(x) = v] - 2  \bar{y}_v \E_{(x,y) \sim \cD}[h(x) | f(x) = v] \\
&& - \E_{(x,y) \sim \cD}[(h(x) - \bar{y}_v)^2 | f(x) = v] \bigg) \\
\fi 
&=& \frac{1}{2} \left( \E_{(x,y) \sim \cD}[2h(x) y - h(x)^2 - \bar{y}_v^2 | f(x) = v] \right) \\ 
&=& \frac{1}{2} \left( \E_{(x,y) \sim \cD}[2h(x) y - h(x)^2 - 2\bar{y}_vy +  \bar{y}_v^2 | f(x) = v] \right) \\
&=& \frac{1}{2}\left( \E_{(x,y) \sim \cD}[(\bar{y}_v - y)^2 - (h(x)- y)^2 | f(x) = v]\right) \\
&\geq& \frac{\alpha}{2}
\end{eqnarray*}
where the 3rd to last line follows from adding and subtracting $\bar{y}_v^2$.
\end{proof}
For any calibrated $f$ it follows that $v = \E[y \mid f(x) = v] = \bar{y}_v$, and so for calibrated $f$ we have that if 
$$\E[(v-y)^2 - (h(x) - y)^2 | f(x) = v] \geq \alpha,$$
then:
$$\E[h(x)(y-v) | f(x) = v] \geq \frac{\alpha}{2}.$$
\end{proof}
\else
See Appendix \ref{ap:proofs} for the proof.
\fi 

\section{An Algorithm (For Multicalibration And Regression Boosting)}
\label{sec:algorithm}
We now give a single algorithm, and then show how to analyze it both as an algorithm for obtaining a multicalibrated predictor $f$, and as a boosting algorithm for squared error regression. \ifarxiv\else We give generalization bounds for the algorithm in Appendix \ref{sec:generalization}. \fi 

 Let $m \in \mathbb{N}^+$ be a discretization term, and let $[1/m] := \{0, \frac{1}{m}, \ldots, \frac{m-1}{m}, 1\}$ denote the set of points in $[0,1]$ that are multiples of $1/m$. We will learn a model $f$ whose range is $[1/m]$, which we will enforce by \emph{rounding} its outputs to this range as necessary using the following operation:

\begin{definition}[$\textrm{Round}(f; m)$] Let $\cF$ be the family of all functions $f: \cX \rightarrow \R$. Let $\textup{Round}: \cF \times \mathbb{N}^+ \rightarrow \cF$ be a function such that $\textup{Round}(f; m)$ outputs $\tilde h(x) = \min_{v \in [1/m]} \vert h(x) - v \vert$.
\end{definition}
Unlike other algorithms for multicalibration which make use of \emph{agnostic learning} oracles for binary classification, our algorithm  makes use of an algorithm for solving squared-error regression problems over $\cH$:
\begin{definition}
$A_\cH$ is a squared error regression oracle for a class of real valued functions $\cH$ if for every $\cD \in \Delta \cZ$, $A_\cH(\cD)$ outputs a function $h \in \cH$ such that 
\ifarxiv
\[h \in \arg\min_{h' \in \cH}\E_{(x,y) \sim \cD}[(h'(x) - y)^2].\]
\else
$h \in \arg\min_{h' \in \cH}\E_{(x,y) \sim \cD}[(h'(x) - y)^2].$
\fi 
\end{definition}
For example, if $\cH$ is the set of all linear functions, then $A_\cH$ simply solves a linear regression problem (which has a closed form solution). Algorithm \ref{alg:regression-multicalibrator} (LSBoost\footnote{LSBoost can be taken to stand for either ``Level Set Boost" or ``Least Squares Boost'', at the reader's discretion.})repeats the following operation until it no longer decreases overall squared error: it runs squared error regression on each of the level-sets of $f_t$, and then replaces those levelsets with the solutions to the regression problems, and rounds the output to $[1/m]$. 

We will now analyze the algorithm first as a multicalibration algorithm, and then as a boosting algorithm. For simplicity, in this section we will analyze the algorithm as if it is given direct access to the distribution $\cD$. In practice, the algorithm will be run on the empirical distribution over a dataset $D \sim \cD^n$, and the multicalibration guarantees proven in this section will hold for this empirical distribution. In \ifarxiv Section \else Appendix \fi \ref{sec:generalization} we prove generalization theorems, which allow us to translate our in-sample error and multicalibration guarantees over $D$ to out-of-sample guarantees over $\cD$.


\begin{algorithm}[H]
\begin{algorithmic}
\STATE Let $m = \frac{2B}{\alpha}$.
\STATE \textbf{Let} $f_0 = \textrm{Round}(f ; m)$, $\textrm{err}_0 = \E_{(x,y) \sim \cD}[(f_0(x) - y)^2]$,
\STATE $\textrm{err}_{-1} = \infty$ and $t = 0$.

\WHILE{$\left(\textrm{err}_{t-1} - \textrm{err}_t\right) \geq \frac{\alpha}{2B}$}
   \FOR{each $v \in [1/m]$}
    \STATE \textbf{Let} $\cD_v^{t+1} = \cD | (f_t(x) = v)$.
    \STATE \textbf{Let} $h_v^{t+1} = A_\cH(\cD_v^{t+1})$.
  \ENDFOR
  \ifarxiv
  \STATE \textbf{Let:} 
  $$\tilde f_{t+1}(x) = \sum_{v \in [1/m]} \mathbbm{1}[f_t(x) = v]\cdot h_v^{t+1}(x) \ \ \ f_{t+1} = \textrm{Round}(\tilde f_{t+1}, m)$$
  \else
  \STATE \textbf{Let:} 
  $$\tilde f_{t+1}(x) = \sum_{v \in [1/m]} \mathbbm{1}[f_t(x) = v]\cdot h_v^{t+1}(x) $$
  \STATE  $$ f_{t+1} = \textrm{Round}(\tilde f_{t+1}, m)$$
  \fi 
  \STATE \textbf{Let} $\textrm{err}_{t+1} =\E_{(x,y) \sim \cD}[(f_{t+1}(x) - y)^2]$ and $t = t+1$.
\ENDWHILE
\STATE \textbf{Output} $f_{t-1}$. 
\end{algorithmic}
\caption{LSBoost($f,\alpha,A_{\cH},\cD,B$)}
\label{alg:regression-multicalibrator}
\end{algorithm}

\subsection{Analysis as a Multicalibration Algorithm}
\begin{restatable}{theorem}{AlgAnalysis}
\label{thm:alg-analysis}
Fix any distribution $\cD \in \Delta \cZ$, any model $f:\cX \rightarrow [0,1]$, any $\alpha < 1$, any class of real valued functions $\cH$ that is closed under affine transformations, and a squared error regression oracle $A_\cH$ for $\cH$. 
For any bound $B > 0$ let:
$$\cH_B = \{h \in \cH : \max_{x\in \cX} h(x)^2 \leq B\}$$
be the set of functions in $h$ with squared magnitude bounded by $B$. 
Then LSBoost$(f,\alpha,A_{\cH},\cD,B)$ (Algorithm \ref{alg:regression-multicalibrator}) halts after at most $T \leq \frac{2B}{\alpha}$ many iterations and outputs a model $f_{T-1}$ such that $f_{T-1}$ is $\alpha$-approximately multicalibrated with respect to $\cD$ and $\cH_B$.
\end{restatable}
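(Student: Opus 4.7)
The plan is to establish a per-iteration dichotomy: either $f_t$ is already $\alpha$-approximately multicalibrated with respect to $\cH_B$, or the next iterate $f_{t+1}$ reduces the squared error by at least $\alpha/(2B)$. This dichotomy simultaneously yields the iteration bound, via a telescoping argument on $\textrm{err}_t \in [0,1]$, and the multicalibration certificate on the output, via the contrapositive: the while loop exits exactly when the most recent transition failed to achieve this threshold reduction, so the previous iterate must already have been multicalibrated.

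For the dichotomy, I would first analyze the regression step without rounding. Suppose $f_t$ is not $\alpha$-multicalibrated with respect to $\cH_B$, so there is some $h \in \cH_B$ with $K_2(f_t, h, \cD) > \alpha$. Writing $\beta_v = \E[h(x)(y - v) \mid f_t(x) = v]$, and invoking the affine closure of $\cH$ to replace $h$ by $-h$ on level sets where $\beta_v < 0$, Theorem \ref{thm:improve-multical-equivalent}(1) produces for each $v$ an $h'_v \in \cH$ satisfying
\[
\E\bigl[(v - y)^2 - (h'_v(x) - y)^2 \,\bigm|\, f_t(x) = v\bigr] \;\geq\; \frac{\beta_v^2}{\E[h(x)^2 \mid f_t(x) = v]} \;\geq\; \frac{\beta_v^2}{B},
\]
where the last step uses $h \in \cH_B$. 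Since $h_v^{t+1} = A_\cH(\cD_v^{t+1})$ is optimal in squared error on the conditional distribution, it improves at least as much as $h'_v$, so weighting by $\Pr[f_t(x) = v]$ and summing over $v$ gives
\[
\E[(f_t(x) - y)^2] - \E[(\tilde f_{t+1}(x) - y)^2] \;\geq\; K_2(f_t, h, \cD) / B \;>\; \alpha / B.
\]

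The main technical obstacle is the rounding step, since $\tilde f_{t+1}(x)$ need not lie in $[0,1]$ and in principle $\textrm{Round}(\tilde f_{t+1}, m)$ could be worse than $\tilde f_{t+1}$. The key observation is that $y \in [0,1]$: if $\tilde f_{t+1}(x) > 1$ then $f_{t+1}(x) = 1$ and $(f_{t+1}(x) - y)^2 \leq (\tilde f_{t+1}(x) - y)^2$, and symmetrically if $\tilde f_{t+1}(x) < 0$; whereas if $\tilde f_{t+1}(x) \in [0,1]$ then $|f_{t+1}(x) - \tilde f_{t+1}(x)| \leq 1/(2m)$ and $|f_{t+1}(x) + \tilde f_{t+1}(x) - 2y| \leq 2$, so the identity $(a - y)^2 - (b - y)^2 = (a - b)(a + b - 2y)$ bounds the per-point increase in squared error by $1/m$. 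With $m = 2B/\alpha$ the expected rounding overhead is at most $\alpha/(2B)$, so the net decrease over the iteration is at least $\alpha/B - \alpha/(2B) = \alpha/(2B)$, which matches the while-loop threshold exactly and is the reason the discretization is chosen to be $m = 2B/\alpha$.

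The theorem then follows immediately. Since $f_t : \cX \to [0,1]$ and $y \in [0,1]$, we have $\textrm{err}_t \in [0, 1]$ for every $t$; combined with the per-iteration decrease of at least $\alpha/(2B)$ forced by the loop-continuation condition, telescoping yields that the loop executes at most $2B/\alpha$ iterations. At the exit step we have $\textrm{err}_{T-1} - \textrm{err}_T < \alpha/(2B)$, so by the contrapositive of the dichotomy applied to the transition $f_{T-1} \to f_T$, the output $f_{T-1}$ must be $\alpha$-approximately multicalibrated with respect to $\cH_B$.
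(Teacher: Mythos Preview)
Your proposal is correct and follows essentially the same approach as the paper's proof: convert a multicalibration failure into a per-level-set squared-error improvement via Theorem~\ref{thm:improve-multical-equivalent}(1), use optimality of the regression oracle to transfer the improvement to $\tilde f_{t+1}$, bound the rounding loss by $1/m=\alpha/(2B)$ via the same case analysis ($\tilde f$ outside $[0,1]$ cannot hurt since $y\in[0,1]$; inside $[0,1]$ the shift is at most $1/(2m)$), and telescope. Your explicit sign-flip $h\mapsto -h$ on level sets with $\beta_v<0$ is a detail the paper glosses over, and your factorization $(a-y)^2-(b-y)^2=(a-b)(a+b-2y)$ is just a cosmetic variant of the paper's $2\Delta(f-y)+\Delta^2$ expansion.
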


\ifarxiv
\begin{remark}
\label{rem:hbound}
Note the form of this theorem --- we do not promise multicalibration at approximation parameter $\alpha$ for all of $\cH$, but only for $\cH_B$ --- i.e. those functions in $\cH$ satisfying a bound on their squared value. This is necessary, since $\cH$ is closed under affine transformations. To see this, note that if $\E[h(x)(y-v)] \geq \alpha$, then it must be that $\E[c\cdot h(x)(y-v)] \geq c\cdot \alpha$. Since $h'(x) = c h(x)$ is also in $\cH$ by assumption, approximate multicalibration bounds must always also be paired with a bound on the norm of the functions for which we promise those bounds. 
\end{remark}

\begin{proof}
Since $f_0$ takes values in $[0,1]$ and $y \in [0,1]$, we have $\textrm{err}_0 \leq 1$, and by definition $\textrm{err}_T \geq 0$ for all $T$. By construction, if the algorithm has not halted at round $t$ it must be that $\textrm{err}_t \leq \textrm{err}_{t-1} - \frac{\alpha}{2B}$, and so the algorithm must halt after at most $T \leq \frac{2B}{\alpha}$ many iterations to avoid a contradiction.

It remains to show that when the algorithm halts at round $T$, the model $f_{T-1}$ that it outputs is $\alpha$-approximately multi-calibrated with respect to $\cD$ and $\cH_B$. We will show that if this is not the case, then $\textrm{err}_{T-1} - \textrm{err}_{T} > \frac{\alpha}{2B}$, which will be a contradiction to the halting criterion of the algorithm. 

Suppose that $f_{T-1}$ is not $\alpha$-approximately multicalibrated with respect to $\cD$ and $\cH_B$. This means there must be some $h \in \cH_B$ such that:
\ifarxiv
$$\sum_{v \in [1/m]}\Pr_{(x,y) \sim \cD}[f_{T-1}(x) = v]\left(\E_{(x,y) \sim \cD}[h(x)(y-v) | f_{T-1}(x) = v] \right)^2 > \alpha$$
\else 
$\sum_{v \in [1/m]}\Pr_{(x,y) \sim \cD}[f_{T-1}(x) = v]\left(\E_{(x,y) \sim \cD}[h(x)(y-v) | f_{T-1}(x) = v] \right)^2 > \alpha$.
\fi 

For each $v \in [1/m]$ define 
\ifarxiv 
$$\alpha_v =\Pr_{(x,y) \sim \cD}[f_{T-1}(x) = v]\left(\E_{(x,y) \sim \cD}[h(x)(y-v) | f_{T-1}(x) = v] \right)^2$$
\else
$\alpha_v =\Pr_{(x,y) \sim \cD}[f_{T-1}(x) = v]\left(\E_{(x,y) \sim \cD}[h(x)(y-v) | f_{T-1}(x) = v] \right)^2$
\fi 
So we have $\sum_{v \in [1/m]} \alpha_v > \alpha$. 

Applying the 1st part of Theorem \ref{thm:improve-multical-equivalent} we learn that for each $v$, there must be some $h_v \in \cH$ such that:
\ifarxiv
\begin{eqnarray*}
\E[(f_{T-1}(x)-y)^2 - (h_v(x) - y)^2 | f_{T-1}(x) = v] &>& \frac{1}{\E[h(x)^2 | f_{T-1}(x) =v]}\cdot \frac{\alpha_v}{\Pr_{(x,y) \sim \cD}[f_{T-1}(x) = v]} \\
&\geq& \frac{1}{B}\frac{\alpha_v}{\Pr_{(x,y) \sim \cD}[f_{T-1}(x) = v]}
\end{eqnarray*}
\else
\begin{eqnarray*}
&& \E[(f_{T-1}(x)-y)^2 - (h_v(x) - y)^2 | f_{T-1}(x) = v] \\
&>& \frac{1}{\E[h(x)^2 | f_{T-1}(x) =v]}\cdot 
  \frac{\alpha_v}{\Pr_{(x,y) \sim \cD}[f_{T-1}(x) = v]} \\
&\geq& \frac{1}{B}\frac{\alpha_v}{\Pr_{(x,y) \sim \cD}[f_{T-1}(x) = v]}
\end{eqnarray*}
\fi 
where the last inequality follows from the fact that $h \in \cH_B$
Now we can compute:
\begin{eqnarray*}
&& \E_{(x,y) \sim \cD}[(f_{T-1}(x) - y)^2 - (\tilde f_{T}(x) - y)^2] \\
&=& \sum_{v \in [1/m]} \Pr_{(x,y) \sim \cD}[f_{T-1}(x)=v]\E_{(x,y) \sim \cD}[(f_{T-1}(x) - y)^2 - (\tilde f_{T}(x) - y)^2 | f_{T-1}(x)=v] \\
&=& \sum_{v \in [1/m]} \Pr_{(x,y) \sim \cD}[f_{T-1}(x)=v]\E_{(x,y) \sim \cD}[(f_{T-1}(x) - y)^2 - (h_v^T(x) - y)^2 | f_{T-1}(x)=v] \\
&\geq&  \sum_{v \in [1/m]} \Pr_{(x,y) \sim \cD}[f_{T-1}(x)=v]\E_{(x,y) \sim \cD}[(f_{T-1}(x) - y)^2 - (h_v(x) - y)^2 | f_{T-1}(x)=v]  \\
&\geq&  \sum_{v \in [1/m]} \frac{\alpha_v}{B} \\
&>& \frac{\alpha}{B}
\end{eqnarray*}
Here the third line follows from the definition of $\tilde f_T$ and the fourth line follows from the fact $h_v \in \cH$ and that $h_v^T$ minimizes squared error on $\cD^T_v$ amongst all $h \in \cH$.

Finally we calculate:
\begin{eqnarray*}
&& \textrm{err}_{T-1} - \textrm{err}_{T} \\
&=& \E_{(x,y) \sim \cD}[(f_{T-1}(x) - y)^2 - (f_{T}(x) - y)^2] \\
&=& \E_{(x,y) \sim \cD}[(f_{T-1}(x) - y)^2 - (\tilde f_{T}(x) - y)^2]  + \E_{(x,y) \sim \cD}[(\tilde f_{T}(x) - y)^2 - (f_{T}(x) - y)^2] \\
&>& \frac{\alpha}{B} + \E_{(x,y) \sim \cD}[(\tilde f_{T}(x) - y)^2 - (f_{T}(x) - y)^2] \\
&>& \frac{\alpha}{B} - \frac{1}{m} \\
&\geq& \frac{\alpha}{2B}
\end{eqnarray*}
where the last equality follows from the fact that $m \geq \frac{2B}{\alpha}$.

The 2nd inequality follows from the fact that for every pair $(x,y)$: 
$$(\tilde f_{T}(x) - y)^2 - (f_{T}(x) - y)^2 \geq -\frac{1}{m}$$
To see this we consider two cases. Since $y \in [0,1]$, if $\tilde f_T(x) > 1$ or $\tilde f_T(x) < 0$ then the Round operation decreases squared error and we have  $(\tilde f_{T}(x) - y)^2 - (f_{T}(x) - y)^2 \geq 0$. In the remaining case we have $f_T(x) \in [0,1]$ and $\Delta =  \tilde f_T(x) -  f_T(x)$ is such that $|\Delta| \leq \frac{1}{2m}$. In this case we can compute:  
\begin{eqnarray*}
(\tilde f_{T}(x) - y)^2 - (f_{T}(x) - y)^2 
&=& (f_T(x) + \Delta - y)^2 - (f_{T}(x) - y)^2 \\ 
&=& 2\Delta (f(x) - y) + \Delta^2 \\
&\geq& -2|\Delta| + \Delta^2 \\
&\geq& -\frac{1}{m} 
\end{eqnarray*}
\end{proof}
\else
See Appendix \ref{ap:proofs} for the proof.
\fi 

\subsection{Analysis as a Boosting Algorithm}
We now analyze the same algorithm (Algorithm \ref{alg:regression-multicalibrator}) as a boosting algorithm designed to boost a ``weak learning'' algorithm $A_\cH$ to a strong learning algorithm. Often in the boosting literature, a ``strong learning'' algorithm is one that can obtain accuracy arbitrarily close to perfect, which is only possible under strong realizability assumptions. In this paper, by ``strong learning'', we mean that Algorithm \ref{alg:regression-multicalibrator} should output a model that is close to Bayes optimal, which is a goal we can enunciate for any distribution $\cD$ without needing to make realizability assumptions. (Observe that if the Bayes optimal predictor has zero error, then our meaning of strong learning corresponds to the standard meaning, so our analysis is only more general). 

We now turn to our definition of weak learning. Intuitively, a weak learning algorithm should return a hypothesis that makes predictions that are slightly better than trivial whenever doing so is possible. We take ``trivial'' predictions to be those of the best \emph{constant} predictor as measured by squared error --- i.e. the squared error obtained by simply returning the label mean. 
A ``weak learning'' algorithm for us can be run on any restriction of the data distribution $\cD$ to a subset $S \subseteq \cX$, and must return a hypothesis with squared error slightly better than the squared error of the best constant prediction, whenever the Bayes optimal predictor $f^*$ has squared error slightly better than a constant predictor; on restrictions for which the Bayes optimal predictor also does not improve over constant prediction, our weak learning algorithm is not required to do better either. 

Traditionally, ``weak learning'' assumptions do not distinguish between the optimization ability of the algorithm and the representation ability of the hypothesis class it optimizes over. Since we have defined a squared error regression oracle $A_\cH$ as exactly optimizing the squared error over some class $\cH$, we will state our weak learning assumption as an assumption on the representation ability of $\cH$---but this is not important for our analysis here. To prove Theorem \ref{thm:boosting} we could equally well assume that $A_\cH$ returns a hypothesis $h$ that improves over a constant predictor whenever one exists, without assuming that $h$ optimizes squared error over all of $\cH$. 
\begin{definition}[Weak Learning Assumption]
\label{def:weaklearner-quant}
Fix a distribution $\cD \in \Delta \cZ$ and a class of functions $\cH$. Let $f^*(x) = \E_{y \sim \cD(x)}[y]$ denote the true conditional label expectation conditional on $x$. We say that $\cH$ satisfies the $\gamma$-\emph{weak learning} condition relative to $\cD$ if for every $S \subseteq \cX$ with $\Pr_{x \sim \cD_\cX}[x \in S] > 0$, if:
$$\E[(f^*(x) - y)^2 | x \in S] < \min_{c \in \R} \E[(c - y)^2 | x \in S] - \gamma $$
then there exists an $h \in \cH$ such that:
$$\E[(h(x) - y)^2 | x \in S] < \min_{c \in \R} \E[(c - y)^2 | x \in S] - \gamma $$
When $\gamma = 0$ we simply say that $\cH$ satisfies the weak learning condition relative to $\cD$.
\end{definition}
Observe why our weak learning assumption is ``weak'': the Bayes optimal predictor $f^*$ may improve arbitrarily over the best constant predictor on some set $S$ in terms of squared error, but in this case we only require of $\cH$  that it include a hypothesis that improves by some $\gamma$ which might be very small. 

Since the $\gamma$-weak learning condition does not make any requirements on $\cH$ on sets for which $f^*(x)$ improves over a constant predictor by less than $\gamma$, the best we can hope to prove under this assumption is $\gamma$-approximate Bayes optimality, which is what we do next.

\begin{restatable}{theorem}{thmBoosting}
\label{thm:boosting}
Fix any distribution $\cD \in \Delta \cZ$, any model $f:\cX\rightarrow [0,1]$, any $\gamma >0$,  any class of real valued functions $\cH$ that satisfies the $\gamma$-weak learning condition relative to $\cD$, and a squared error regression oracle $A_\cH$ for $\cH$. Let $\alpha = \gamma$ and $B = 1/\gamma$ (or any pair such that $\alpha/B = \gamma^2$). Then LSBoost$(f,\alpha,A_\cH,\cD,B)$ halts after at most $T \leq \frac{2}{\gamma^2}$ many iterations and outputs a model $f_{T-1}$ such that $f_{T-1}$ is $2\gamma$-approximately Bayes optimal over $\cD$:
$$\E_{(x,y) \sim \cD}[(f_{T-1}(x) - y)^2] \leq \E_{(x,y) \sim \cD}[(f^*(x) - y)^2] + 2\gamma$$
where $f^*(x) = \E_{(x,y) \sim \cD}[y]$ is the function that minimizes squared error over $\cD$. 
\end{restatable}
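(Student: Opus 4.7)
The plan splits into a halting argument and a per-level-set Bayes optimality argument. For halting, the argument from Theorem \ref{thm:alg-analysis} applies verbatim: each non-terminating iteration decreases $\textrm{err}_t$ by at least $\alpha/(2B)=\gamma^2/2$, and since $\textrm{err}_0\in[0,1]$ and $\textrm{err}_t\geq 0$, termination occurs within $T\leq 2/\gamma^2$ iterations. For Bayes optimality, I would decompose the excess squared error across the level sets of $f_{T-1}$. For each $v\in[1/m]$ define $\textrm{err}_v=\E[(v-y)^2\mid f_{T-1}(x)=v]$, $\textrm{err}_v^*=\E[(f^*(x)-y)^2\mid f_{T-1}(x)=v]$, $\textrm{err}_v^c=\min_{c\in\R}\E[(c-y)^2\mid f_{T-1}(x)=v]$, and $\textrm{err}_v^h=\E[(h_v^T(x)-y)^2\mid f_{T-1}(x)=v]$, where $h_v^T=A_\cH(\cD_v^T)$ is what the algorithm would fit at the halting iteration. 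Affine closure of $\cH$ implies $\cH$ contains all constant functions, so $\textrm{err}_v^h\leq\textrm{err}_v^c$. I would then partition the level sets into $S_1=\{v:\textrm{err}_v^c-\textrm{err}_v^h\leq\gamma\}$ and $S_2=\{v:\textrm{err}_v^c-\textrm{err}_v^h>\gamma\}$.

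Two observations drive the argument. On $v\in S_1$, the contrapositive of the $\gamma$-weak learning condition applied to the set $\{x:f_{T-1}(x)=v\}$ yields $\textrm{err}_v^c-\textrm{err}_v^*\leq\gamma$; combined with $\textrm{err}_v^*\leq\textrm{err}_v^h$, this gives $\textrm{err}_v^h-\textrm{err}_v^*\leq\gamma$. On $v\in S_2$, the one-step unrounded improvement satisfies $\textrm{err}_v-\textrm{err}_v^h\geq\textrm{err}_v^c-\textrm{err}_v^h>\gamma$, so the total probability mass placed on $S_2$ is bounded by the available one-step improvement budget divided by $\gamma$.

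Combining: the halting criterion $\textrm{err}_{T-1}-\textrm{err}_T<\gamma^2/2$ together with the rounding bound from the proof of Theorem \ref{thm:alg-analysis} (rounding loses at most $1/m=\gamma^2/2$) gives total one-step unrounded improvement $\sum_v\Pr[f_{T-1}(x)=v](\textrm{err}_v-\textrm{err}_v^h)<\gamma^2$. This simultaneously forces $\Pr[f_{T-1}(x)\in S_2]<\gamma$ and caps the $S_1$ improvement sum by $\gamma^2$. Writing $\textrm{err}_v-\textrm{err}_v^*=(\textrm{err}_v-\textrm{err}_v^h)+(\textrm{err}_v^h-\textrm{err}_v^*)$, summing over $S_1\cup S_2$, and using $\textrm{err}_v,\textrm{err}_v^*\in[0,1]$ to bound contributions from $S_2$, yields $\E[(f_{T-1}(x)-y)^2]-\E[(f^*(x)-y)^2]\leq\gamma^2+\gamma+\gamma=2\gamma+\gamma^2$, which matches the claimed $2\gamma$ up to a lower-order term (absorbable by minor tuning of constants). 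The main obstacle is this coupling: the halting condition must simultaneously certify that $S_2$ has small mass (via the per-level-set threshold $\gamma$) and that aggregate improvement on $S_1$ is small (via the $\gamma^2$ budget), while the weak learning condition offers only per-level-set slack on $S_1$. The parameter choice $m=2B/\alpha=2/\gamma^2$ is exactly what aligns the rounding error with the per-iteration improvement threshold so that a single halting criterion controls both sources of error.
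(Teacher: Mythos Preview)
Your argument is correct, though it delivers $2\gamma+\gamma^2$ rather than the stated $2\gamma$; as you note, the extra $\gamma^2$ is a lower-order term.

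The overall skeleton matches the paper's (same halting argument, same level-set decomposition of excess error, same rounding bound $1/m=\gamma^2/2$), but you partition differently and argue directly rather than by contradiction. The paper assumes the excess error exceeds $2\gamma$, defines $S=\{v:\textrm{err}_v-\textrm{err}_v^*\geq\gamma\}$, shows $\Pr[f_{T-1}(x)\in S]\geq\gamma$ by an averaging argument, and then invokes the weak learning condition on each $v\in S$ to conclude $\textrm{err}_v-\textrm{err}_v^h\geq\gamma$; this forces a one-step unrounded drop of at least $\gamma^2$, contradicting halting. You instead partition by $\textrm{err}_v^c-\textrm{err}_v^h$, which is precisely the quantity appearing in the contrapositive of the $\gamma$-weak learning condition, so its invocation on $S_1$ is immediate and clean. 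The paper's partition, by contrast, conditions on distance from $f^*$, whereas the weak learning hypothesis compares $f^*$ to the \emph{best constant} rather than to the specific value $v$; your decomposition sidesteps having to reconcile $\textrm{err}_v$ with $\textrm{err}_v^c$ before applying weak learning. What the paper's contradiction framing buys is the tighter constant $2\gamma$: your direct accounting pays the extra $\gamma^2$ because on $S_2$ you bound $\textrm{err}_v^h-\textrm{err}_v^*$ crudely by $1$ and multiply by $\Pr[S_2]<\gamma$, rather than folding that mass into a contradiction.
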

\ifarxiv
\begin{proof}
At each round $t$ before the algorithm halts, we have by construction that $\textrm{err}_{t} \leq \textrm{err}_{t-1} - \frac{\alpha}{2 B}$, and since the squared error of $f_0$ is at most $1$, and squared error is non-negative, we must have $T \leq \frac{2B}{\alpha} = \frac{2}{\gamma^2}$.

Now suppose the algorithm halts at round $T$ and outputs $f_{T-1}$. It must be that $\textrm{err}_T > \textrm{err}_{T-1} - \frac{\gamma^2}{2}$. Suppose also that $f_{T-1}$ is not $2\gamma$-approximately Bayes optimal:
$$\E_{(x,y) \sim \cD}[(f_{T-1}(x) - y)^2 - (f^*(x) - y)^2] >  2\gamma$$
We can write this condition as:
$$\sum_{v \in [1/m]}\Pr[f_{T-1}(x) = v]\cdot \E_{(x,y) \sim \cD}[(f_{T-1}(x) - y)^2 - (f^*(x) - y)^2 | f_{T-1}(x) = v] >  2\gamma$$
Define the set:
$$S = \{v \in [1/m] : \E_{(x,y) \sim \cD}[(f_{T-1}(x) - y)^2 - (f^*(x) - y)^2 | f_{T-1}(x) = v] \geq \gamma\}$$
to denote the set of values $v$ in the range of $f_{T-1}$ such that conditional on $f_{T-1}(x) = v$, $f_{T-1}$ is at least $\gamma$-sub-optimal. Since we have both $y \in [0,1]$ and $f_{T-1}(x) \in [0,1]$, for every $v$ we must have that $\E[(f_{T-1}(x) - y)^2 - (f^*(x) - y)^2 | f_{T-1}(x) = v] \leq 1$. Therefore we can bound:
\begin{eqnarray*}
2\gamma &<& \sum_{v \in [1/m]}\Pr[f_{T-1}(x) = v]\cdot \E_{(x,y) \sim \cD}[(f_{T-1}(x) - y)^2 - (f^*(x) - y)^2 | f_{T-1}(x) = v] \\
&\leq& \Pr_{(x,y) \sim \cD}[x \in S] + (1-\Pr_{(x,y) \sim \cD}[x \in S]) \gamma
\end{eqnarray*}
Solving we learn that:
$$\Pr_{(x,y) \sim \cD}[x \in S] \geq \frac{2\gamma-\gamma}{(1-\gamma)} \geq 2\gamma - \gamma = \gamma$$

Now observe that by the fact that $\cH$ is assumed to satisfy the $\gamma$-weak learning assumption with respect to $\cD$, at the final round $T$ of the algorithm, for every $v \in S$ we have that $h_v^T$ satisfies:
$$\E_{(x,y) \sim \cD}[(f_{T-1}(x) - y)^2 - (h_v^T(x) - y)^2 | f_{T-1}(x) = v] \geq \gamma$$
Let $\tilde {\textrm{err}}_T = \E_{(x,y) \sim \cD}[(\tilde f_T(x) - y)^2]$
Therefore we have:
\begin{eqnarray*}
\textrm{err}_{T-1}-\tilde{\textrm{err}}_T &=& \sum_{v \in [1/m]}\Pr_{(x,y) \sim \cD}[f_{T-1}(x) = v]\E_{(x,y) \sim \cD}[(f_{T-1}(x) - y)^2 - (h_v^T(x) - y)^2 | f_{T-1}(x) = v] \\
&\geq& \Pr_{(x,y) \sim \cD}[f_{T-1}(x) \in S] \gamma \\
&\geq& \gamma^2
\end{eqnarray*}
We recall that $|\tilde{\textrm{err}}_T - \textrm{err}_T| \leq 1/m = \frac{\gamma^2}{2}$ and so we can conclude that 
$$\textrm{err}_{T-1}-\textrm{err}_T \geq \frac{\gamma^2}{2}$$
which contradicts the fact that the algorithm halted at round $T$, completing the proof. 
\end{proof}
\else 
See Appendix \ref{ap:proofs} for the proof.
\fi

\section{When Multicalibration Implies Accuracy}
\label{sec:accuracy}
We analyzed the same algorithm (Algorithm \ref{alg:regression-multicalibrator}) as both an algorithm for obtaining multicalibration with respect to $\cH$, and, when $\cH$ satisfied the weak learning condition given in Definition \ref{def:weaklearner-quant}, as a boosting algorithm that converges to the Bayes optimal model. In this section we show that this is no coincidence: multicalibration with respect to $\cH$ implies Bayes optimality if and only if $\cH$ satisfies the weak learning condition from Definition \ref{def:weaklearner-quant},

First we define what we mean when we say that multicalibration with respect to $\cH$ implies  Bayes optimality. Note that the Bayes optimal model $f^*(x)$ is multicalibrated with respect to any set of functions, so it is not enough to require that there \emph{exist} Bayes optimal functions $f$ that are multicalibrated with respect to $\cH$. Instead, we have to require that \emph{every} function that is multicalibrated with respect to $\cH$ is Bayes optimal:
\begin{definition}
Fix a distribution $\cD \in \Delta \cZ$. We say that multicalibration with respect to $\cH$ implies Bayes optimality over $\cD$ if for every $f:\cX \rightarrow \mathbb{R}$ that is multicalibrated with respect to $\cD$ and $\cH$, we have:
$$\E_{(x,y) \sim \cD}[(f(x) - y)^2] = \E_{(x,y) \sim \cD}[(f^*(x) - y)^2]$$
Where $f^*(x) = \E_{y \sim \cD(x)}[y]$ is the function that has minimum squared error over the set of all functions. 
\end{definition}

Recall that when the weak learning parameter $\gamma$ in Definition \ref{def:weaklearner-quant} is set to $0$, we simply call it the ``weak learning condition'' relative to $\cD$. We first state and prove our characterization for the exact case when $\gamma = 0$, because it leads to an exceptionally simple statement. We subsequently extend this characterization to relate approximate Bayes optimality and approximate multicalibration under quantitative weakenings of the weak learning condition. 

\begin{restatable}{theorem}{thmWeaklearnercharacterize}
\label{thm:weaklearnercharacterize}
Fix a distribution $\cD \in \Delta \cZ$. Let $\cH$ be a class of functions that is closed under affine transformation.  Multicalibration with respect to $\cH$ implies Bayes optimality over $\cD$ if and only if $\cH$ satisfies the weak learning condition relative to $\cD$. 
\end{restatable}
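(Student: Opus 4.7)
My plan is to prove both implications via the swap-regret characterization of multicalibration (Theorem~\ref{thm:improve-multical-equivalent}), exploiting closure of $\cH$ under affine transformations in both directions. Note that affine closure (together with nonemptiness) immediately puts all constant functions into $\cH$ and puts $-h$ into $\cH$ whenever $h \in \cH$; both facts will be used repeatedly.

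For the forward implication, suppose $\cH$ satisfies the weak learning condition and $f$ is multicalibrated with respect to $\cH$. Applying multicalibration to the constant $1 \in \cH$ gives $\E[y - v \mid f(x) = v] = 0$ on every level set, so $f$ is calibrated. If $f$ were not Bayes optimal, then summing over level sets would yield some $v \in R(f)$ with $\E[(v-y)^2 - (f^*(x)-y)^2 \mid f(x) = v] > 0$. On the set $S = \{x : f(x) = v\}$, calibration makes the best constant predictor exactly $v$, so this inequality says that $f^*$ strictly beats the best constant on $S$. Weak learning then produces $h \in \cH$ that also strictly beats the best constant on $S$, and Theorem~\ref{thm:improve-multical-equivalent}(2) (applicable because $f$ is calibrated) upgrades this improvement into $\E[h(x)(y - v) \mid f(x) = v] > 0$, contradicting multicalibration.

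For the reverse implication, I would construct an explicit witness. Assume $\cH$ fails weak learning and fix a set $S$ with $\Pr[x \in S] > 0$ on which $f^*$ beats every constant but no $h \in \cH$ does. Define
\[
f(x) = \begin{cases} \bar{y}_S & x \in S \\ f^*(x) & x \notin S \end{cases}
\]
where $\bar{y}_S := \E[y \mid x \in S]$. Since $f$ restricts to a suboptimal constant on $S$ while $f^*$ is strictly better there, $f$ is not Bayes optimal. To verify multicalibration I would case-split on each level set $v$. If $v \ne \bar{y}_S$, the level set lies entirely in $\cX \setminus S$ where $f \equiv f^*$, so $\E[y \mid x] = v$ pointwise and $\E[h(x)(y - v) \mid f(x) = v] = 0$ for every $h$. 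If $v = \bar{y}_S$, the level set decomposes as $S \cup T$ with $T = \{x \notin S : f^*(x) = \bar{y}_S\}$; the $T$-contribution vanishes by the same Bayes argument, while the $S$-contribution vanishes because the contrapositive of Theorem~\ref{thm:improve-multical-equivalent}(1) combined with negation closure forces $\E[h(x)(y-\bar{y}_S) \mid x \in S] = 0$ for every $h \in \cH$: if this correlation were nonzero for some $h$, applying either $h$ or $-h$ to that theorem would produce $h'\in \cH$ strictly improving over the constant predictor $\bar{y}_S$ on $S$, contradicting our choice of $S$.

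The main obstacle I anticipate is precisely the level-set overlap in the reverse direction --- a priori $\{f = \bar{y}_S\}$ may strictly contain $S$, since $f^*$ could coincidentally equal $\bar{y}_S$ at some points outside $S$, and one must confirm that these extra points do not spoil multicalibration on the combined level set. The resolution above is that on such points $\E[y \mid x] = f^*(x) = \bar{y}_S$, so every $h \in \cH$ has zero correlation there automatically, and the two zero contributions combine additively. With this in place, both directions should fit comfortably into a few lines each.
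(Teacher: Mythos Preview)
Your proposal is correct and follows essentially the same approach as the paper: the forward direction is identical, and in the reverse direction you use the same witness $f = \bar y_S$ on $S$ and $f^*$ elsewhere, with the same handling of the level-set overlap $T = \{x \notin S : f^*(x) = \bar y_S\}$. Your verification of multicalibration is slightly more direct---you show the correlation vanishes on each piece of the level set, invoking Theorem~\ref{thm:improve-multical-equivalent}(1) on $S$ itself (which is fine since the calculation behind Lemma~\ref{lem:multicalToImprovement} applies to any conditioning set and constant, not just level sets of $f$)---whereas the paper argues by contradiction, passing through a squared-error improvement on the full level set and then localizing to $S$; but the underlying idea is the same.
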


\ifarxiv
\begin{proof}
To avoid measurability issues we assume that models $f$ have a countable range (which is true in particular whenever $\cX$ is countable).

First we show that if $\cH$ satisfies the weak learning condition relative to $\cD$, then multicalibration with respect to $\cH$ implies Bayes optimality over $\cD$. Suppose not. Then there exists a function $f$ that is multicalibrated with respect to $\cD$ and $\cH$, but is such that:
$$\E_{(x,y) \sim \cD}[(f(x) - y)^2] > \E_{(x,y) \sim \cD}[(f^*(x) - y)^2]$$

By linearity of expectation we have:
$$\sum_{v \in R(f)}\Pr[f(x) = v]\cdot \E_{(x,y) \sim \cD}[(f(x) - y)^2 - (f^*(x) - y)^2 | f(x) = v] > 0$$

In particular  there must be some $v \in R(f)$ with $\Pr_{x \sim \cD_\cX}[f(x) = v] > 0$ such that:
$$\E_{(x,y) \sim \cD}[(f(x) - y)^2 | f(x) = v] > \E_{(x,y) \sim \cD}[(f^*(x) - y)^2 | f(x) = v]$$

Let $S = \{x : f(x) = v\}$. Observe that if $\cH$ is closed under affine transformation, the constant function $h(x) = 1$ is in $\cH$, and hence multicalibration with respect to $\cH$ implies calibration. Since $f$ is calibrated, we know that: 
$$\E_{(x,y) \sim \cD}[(v - y)^2 | x \in S] = \min_{c \in \R} \E_{(x,y) \sim \cD}[(c - y)^2 | x \in S]$$ Thus by the weak learning assumption there must exist some $h \in \cH$ such that:
$$\E[(v-y)^2 - (h(x) - y)^2 | x \in S] = \E[(f(x)-y)^2 - (h(x) - y)^2 | f(x) = v] > 0$$

By Theorem \ref{thm:improve-multical-equivalent}, there must therefore exist some $h' \in \cH$ such that:
$$\E_{(x,y) \sim \cD}[h'(x)(y-v) | f(x) = v] > 0$$ implying that $f$ is \emph{not} multicalibrated with respect to $\cD$ and $\cH$, a contradiction. 

In the reverse direction, we show that for any $\cH$ that does \emph{not} satisfy the weak learning condition with respect to $\cD$, then multicalibration with respect to $\cH$ and $\cD$ does not imply Bayes optimality over $\cD$. In particular, we exhibit a function $f$ such that $f$ is multicalibrated with respect to $\cH$ and $\cD$, but such that:
$$\E_{(x,y) \sim \cD}[(f(x)-y)^2] > \E_{(x,y) \sim \cD}[(f^*(x)-y)^2]$$

Since $\cH$ does not satisfy the weak learning assumption over $\cD$, there must exist some set $S \subseteq \cX$ with $\Pr[x \in S] > 0$ such that 
$$\E_{(x,y) \sim \cD}[(f^*(x) - y)^2 | x \in S] < \min_{c \in \R} \E_{(x,y) \sim \cD}[(c - y)^2 | x \in S] $$
but for every $h \in \cH$:
$$\E_{(x,y) \sim \cD}[(h(x) - y)^2 | x \in S] \geq  \min_{c \in \R} \E_{(x,y) \sim \cD}[(c - y)^2 | x \in S] $$.

Let $c(S) = \E_{(x,y) \sim \cD}[y | x \in S]$. We define $f(x)$ as follows:
$$f(x) = \begin{cases}
  f^*(x)  & x \not \in S \\
  c(S) & x \in S
\end{cases}$$
We can calculate that:
\begin{eqnarray*}
&& \E_{(x,y) \sim \cD}[(f(x)-y)^2] \\ &=& \Pr_{(x,y) \sim \cD}[x \in S] \E_{(x,y) \sim \cD}[(c(S)-y)^2 | x \in S] + \Pr_{(x,y) \sim \cD}[x \not\in S] \E_{(x,y) \sim \cD}[(f^*(x)-y)^2 | x \not \in S] \\
&>& \Pr_{(x,y) \sim \cD}[x \in S] \E_{(x,y) \sim \cD}[(f^*(x)-y)^2 | x \in S] + \Pr_{(x,y) \sim \cD}[x \not\in S] \E_{(x,y) \sim \cD}[(f^*(x)-y)^2 | x \not \in S] \\
&=&  \E_{(x,y) \sim \cD}[(f^*(x)-y)^2]
\end{eqnarray*}
In other words, $f$ is not Bayes optimal. So if we can demonstrate that $f$ is multicalibrated with respect to $\cH$ and $\cD$ we are done. Suppose otherwise. Then there exists some $h \in \cH$ and some $v \in R(f)$ such that 
$$\E_{(x,y) \sim \cD}[h(x)(y-v) | f(x) = v] > 0$$
By Theorem \ref{thm:improve-multical-equivalent}, there exists some $h' \in \cH$ such that:
$$\E_{(x,y) \sim \cD}[(h'(x) - y)^2 | f(x) = v] < \E_{(x,y) \sim \cD}[(f(x) - y)^2 | f(x) = v] $$

We first observe that it must be that $v = c(S)$. If this were not the case, by definition of $f$ we would have that:
$$\E_{(x,y) \sim \cD}[(h'(x) - y)^2 | f(x) = v] < \E_{(x,y) \sim \cD}[(f^*(x) - y)^2 | f(x) = v]$$
which would contradict the Bayes optimality of $f^*$. 
Having established that $v = c(S)$ we can calculate:
\begin{eqnarray*}
&& \E_{(x,y) \sim \cD}[(h'(x) - y)^2 | f(x) = c(S)] \\
&=& \Pr_{(x,y) \sim \cD} [x \in S] \E_{(x,y) \sim \cD}[(h'(x) - y)^2 | x \in S] + \\ && \Pr_{(x,y) \sim \cD} [x \not \in S, f(x) = c(S)] \E_{(x,y) \sim \cD}[(h'(x) - y)^2 | x \not \in S, f(x) = c(S)] \\
&\geq& \Pr_{(x,y) \sim \cD} [x \in S] \E_{(x,y) \sim \cD}[(h'(x) - y)^2 | x \in S] + \\ 
&& \Pr_{(x,y) \sim \cD} [x \not \in S, f(x) = c(S)] \E_{(x,y) \sim \cD}[(f(x) - y)^2 | x \not \in S, f(x) = c(S)] 
\end{eqnarray*}
where in the last inequality we have used the fact that by definition, $f(x) = f^*(x)$ for all $x \not\in S$, and so is pointwise Bayes optimal for all $x \not \in S$. 

Hence the only way we can have $\E_{(x,y) \sim \cD}[(h'(x) - y)^2 | f(x) = c(S)] < \E_{(x,y) \sim \cD}[(f(x) - y)^2 | f(x) = c(S)] $ is if:
$$\E_{(x,y) \sim \cD}[(h'(x) - y)^2 | x \in S] < \E_{(x,y) \sim \cD}[(c(S) - y)^2 | x \in S]   $$
But this contradicts our assumption that $\cH$ violates the weak learning condition on $S$, which completes the proof. 

\end{proof}
\else
See Appendix \ref{ap:proofs} for proof. We additionally derive a relationship between approximate multicalibration and approximate Bayes optimality in Appendix \ref{ap:approx}.
\fi 

\ifarxiv
We now turn our attention to deriving a relationship between approximate multicalibration and approximate Bayes optimality. To do so, we'll introduce an even weaker weak learning condition that has one additional parameter $\rho$, lower bounding the mass of sets $S$ that we can condition on while still requiring the weak learning condition to hold. We remark that Algorithm \ref{alg:regression-multicalibrator} can be analyzed as a boosting algorithm under this weaker weak learning assumption as well, with only minor modifications in the analysis.

\begin{definition}[ $(\gamma, \rho)$-weak learning condition]\label{def:wkl-condition} 
Fix a distribution $\cD \in \Delta \cZ$ and let $\cH$ be a class of arbitrary real-valued functions. 
We say that $\cH$ satisfies the \emph{$(\gamma, \rho)$-weak learning condition} for $\cD$ if the following holds. 
For every set $S \subseteq \X$ such that $\Pr_{x \sim \cD_{\cX}}[x \in S] > \rho$, if 
$$\E_{(x,y)\sim D}[(f^{*} - y)^2 \mid x \in S] < \E_{(x,y)\sim D}[(\bar{y}_S - y)^2 \mid x \in S] - \gamma,$$
where $\bar{y}_S = \E_{(x,y)\sim \dist}[y \mid x \in S]$,  
then there exists $h\in \hyp$ such that 
$$\E_{(x,y)\sim D}[(h(x) - y)^2 \mid x \in S] < \E_{(x,y)\sim D}[(\bar{y}_S - y)^2 \mid x \in S] - \gamma.$$
\end{definition}

We may now prove our theorem showing that approximate multicalibration with respect to a class $\cH$ implies approximate Bayes optimality if and only if $\cH$ satisfies the $(\gamma, \rho)$-weak learning condition. We recall Remark~\ref{rem:hbound}, which notes that we must restrict approximate multicalibration to a bounded subset of $\cH$, as we will assume that $\cH$ is closed under affine transformation. 
\begin{theorem}
Fix any distribution $\cD \in \Delta \cZ$, any model $f: \cX \rightarrow [0,1]$, and any class of real valued functions $\cH$ that is closed under affine transformation. 
Let:
$$\cH_1 = \{h \in \cH : \max_{x\in \cX} h(x)^2 \leq 1\}$$
be the set of functions in $\cH$ upper-bounded by $1$ on $\cX$. 
Let $m = |R(f)|$, $\gamma >0$, and $\alpha \leq \frac{\gamma^3}{16m}$. 
Then if $\cH$ satisfies the $(\gamma, \gamma/m)$-weak learning condition and $f$ is $\alpha$-approximately multicalibrated with respect to $\hyp_1$ on $\dist$, then $f$ has squared error $$\E_{(x,y)\sim \dist}[(f(x) - y)^2] \leq \E_{(x,y)\sim \dist}[(f^* - y)^2] + 3\gamma .$$ 
Conversely, if $\hyp$ does not satisfy the $(\gamma, \gamma/m)$-weak learning condition, there exists a model $f: \cX \rightarrow [0,1]$ that is $\alpha$-approximately multicalibrated with respect to $\hyp_1$ on $\dist$, for $\alpha = \gamma$, and is perfectly calibrated on $\cD$, but $f$ has squared error $$\E_{(x,y)\sim \dist}[(f(x) - y)^2] \geq \E_{(x,y)\sim \dist}[(f^* - y)^2] + \gamma^2/m.$$
\end{theorem}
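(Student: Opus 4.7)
The proof splits into two implications paralleling Theorem~\ref{thm:weaklearnercharacterize}, with quantitative accounting for the slack parameters. For the forward direction, I would argue by contradiction: suppose $\E[(f-y)^2] - \E[(f^*-y)^2] > 3\gamma$. For each $v \in R(f)$ write the per-level-set Bayes regret as $\delta_v = (v-\bar{y}_v)^2 + \tilde{\delta}_v$, where $\bar{y}_v = \E[y \mid f(x)=v]$ and $\tilde{\delta}_v = \E[(\bar{y}_v-y)^2 - (f^*-y)^2 \mid f(x)=v] \geq 0$. Since $\cH$ is closed under affine transformations, the constant $1$ lies in $\cH_1$, so approximate multicalibration yields the approximate calibration bound $\sum_v \Pr[f(x)=v](v-\bar{y}_v)^2 \leq \alpha$, and hence $\sum_v \Pr[f(x)=v]\tilde{\delta}_v > 3\gamma - \alpha$. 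I would then carry out a three-stage filtering that successively discards (i)~level sets with $\Pr[f(x)=v] < \gamma/m$, (ii)~level sets with $\tilde{\delta}_v < \gamma$, and (iii)~level sets with $(v-\bar{y}_v)^2 > \gamma^2/16$. Using that discarded mass in stage (i) is at most $m \cdot \gamma/m = \gamma$, and that stage (iii) removes mass at most $16\alpha/\gamma^2 \leq \gamma/m$ by the choice $\alpha \leq \gamma^3/(16m)$, the residual contribution remains strictly positive, leaving a nonempty set $B$ of ``bad'' level sets satisfying all three desiderata simultaneously.

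On each $v \in B$ the $(\gamma,\gamma/m)$-weak learning hypothesis applies (since $\tilde{\delta}_v \geq \gamma$ and $\Pr[f(x)=v] \geq \gamma/m$), producing $h_v \in \cH$ with $\E[(h_v-y)^2 \mid f(x)=v] \leq \E[(\bar{y}_v-y)^2 \mid f(x)=v] - \gamma$. Apply Lemma~\ref{lem:improvementToMultical} (with center $\bar{y}_v$) to deduce $\E[h_v(y-\bar{y}_v) \mid f(x)=v] \geq \gamma/2$, and then shift from $\bar{y}_v$ to $v$ using $|v-\bar{y}_v| \leq \gamma/4$ together with the conditional $L^2$ bound $\E[h_v^2 \mid f(x)=v] \leq 4$ (which follows from $\E[(h_v-y)^2 \mid f(x)=v] \leq 1$ and the triangle inequality, since $y \in [0,1]$) to conclude $\E[h_v(y-v) \mid f(x)=v] \geq \gamma/4$. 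Normalize $h_v$ to a member $h_v/c_v$ of $\cH_1$; its single-level-set contribution of $\Pr[f(x)=v] \cdot (\gamma/(4c_v))^2$ to $K_2(f,\cdot,\cD)$ then strictly exceeds $\alpha \leq \gamma^3/(16m)$, contradicting approximate multicalibration.

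For the converse, let $S \subseteq \cX$ witness the failure of $(\gamma,\gamma/m)$-weak learning: $\Pr[x \in S] > \gamma/m$, $\E[(f^*-y)^2 \mid x\in S] < \E[(c(S)-y)^2 \mid x \in S] - \gamma$ where $c(S)=\E[y\mid x \in S]$, and no $h \in \cH$ beats $c(S)$ on $S$ by more than $\gamma$. Define $f(x) = f^*(x)$ for $x \notin S$ and $f(x) = c(S)$ for $x \in S$, exactly as in the proof of Theorem~\ref{thm:weaklearnercharacterize}. Direct computation shows $f$ is perfectly calibrated and that $\E[(f-y)^2] - \E[(f^*-y)^2] = \Pr[x \in S]\cdot \gamma \geq \gamma^2/m$. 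For multicalibration, at any level set $v \neq c(S)$ we have $\{f(x)=v\} \subseteq \{f^*(x)=v\}$, so $\E[y\mid x]=v$ and $\E[h(y-v)\mid f(x)=v]=0$; only $v=c(S)$ contributes, and that contribution simplifies to $\Pr[x\in S] \cdot (\E[h(y-c(S))\mid x \in S])^2$. For $h \in \cH_1$, apply the contrapositive of Lemma~\ref{lem:multicalToImprovement} to the affine family $\{c(S) + \eta h : \eta \in \mathbb{R}\} \subseteq \cH$: since no such member improves over $c(S)$ by $\gamma$ on $S$, using $\E[h^2\mid x \in S]\leq 1$ and optimizing over $\eta$ yields $|\E[h(y-c(S))\mid x\in S]| \leq \sqrt{\gamma}$. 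Hence $K_2(f,h,\cD) \leq \Pr[x\in S]\cdot\gamma \leq \gamma$, so $f$ is $\gamma$-approximately multicalibrated with respect to $\cH_1$.

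The primary obstacle is the final normalization step of the forward direction: converting the potentially unbounded $h_v \in \cH$ produced by weak learning into a witness inside $\cH_1$ while preserving enough of its $\E[h_v(y-v) \mid f(x)=v]$ signal to exceed the tiny budget $\alpha \leq \gamma^3/(16m)$. The setting of $\alpha$ appears precisely calibrated to simultaneously absorb the mass cutoff $\gamma/m$, the swap-regret shift from $\bar{y}_v$ to $v$ by $\gamma/4$, and the unit-norm projection factor $c_v^2$, and the three-stage thresholding argument needs the constants tuned to this bookkeeping.
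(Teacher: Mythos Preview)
Your overall strategy matches the paper's in both directions. For the converse, the construction and multicalibration bound are essentially identical to the paper's; the perfect-calibration claim you add is correct.

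In the forward direction the paper is more economical than your three-stage filter. From $\sum_v \Pr[f=v]\,\delta_v > 3\gamma$ with each $\delta_v \leq 1$, a two-step pigeonhole yields a single $v$ satisfying both $\Pr[f=v] > \gamma/m$ and $\delta_v > 2\gamma$; the one-term calibration bound $\Pr[f=v](v-\bar y_v)^2 \leq \alpha$ then gives $(v-\bar y_v)^2 < \alpha m/\gamma \leq (\gamma/4)^2$ for that particular $v$ automatically. Your stage~(iii) becomes redundant once the first two properties are secured this way.

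The genuine gap in your sketch is the normalization to $\cH_1$, exactly where you flag it. The paper does \emph{not} normalize: having obtained $h\in\cH$ from weak learning and $\E[h(y-\bar y_v)\mid f=v] > \gamma/2$ from Lemma~\ref{lem:improvementToMultical}, it writes $\E[h(y-v)\mid f=v] > \gamma/2 - |\bar y_v - v| > \gamma/4$ and uses this $h$ itself as the multicalibration witness. Your more cautious route fails on two counts. First, the conditional bound $\E[h_v^2\mid f=v]\leq 4$ gives only $|\E[h_v\mid f=v]|\leq 2$, so the shift from $\bar y_v$ to $v$ loses up to $2\cdot(\gamma/4)=\gamma/2$ rather than $\gamma/4$, leaving you at $\geq 0$ instead of $\geq \gamma/4$. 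Second, and fatally for the rescaling idea, $c_v=\max_{x\in\cX}|h_v(x)|$ is a global sup norm that no conditional moment bound controls; the rescaled contribution $\Pr[f=v]\cdot(\gamma/(4c_v))^2$ can therefore be made arbitrarily small and need not exceed $\alpha$. To reproduce the paper's argument, drop the rescaling and mirror its shift step verbatim.
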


\begin{proof}
We begin by arguing that $\alpha$-approximate multicalibration with respect to $\hyp_1$ on $\dist$ implies approximate Bayes optimality when $\hyp$ satisfies the $(\gamma, \gamma/m)$-weak learning condition. 
Suppose not, and there exists a function $f$ that is $\alpha$-multicalibrated with respect to $\hyp_1$, but 
$$\E_{(x,y)\sim \dist}[(f^* - y)^2] < \E_{(x,y)\sim \dist}[(f(x) - y)^2] - 3\gamma .$$
Then there must exist some $v \in R(f)$ such that $\Pr_{(x,y)\sim \dist}[f(x) =v ] > \gamma/m$ and
$$\E_{(x,y)\sim \dist}[(f^* - y)^2  \mid f(x) = v ] < \E_{(x,y)\sim \dist}[(f(x) - y)^2 \mid f(x) = v] - 2\gamma .$$
We  observe that since $\cH$ is closed under affine transformation, the constant function $h(x) = 1$ is in $\cH$, and so $\alpha$-approximate multicalibration with respect to $\cH_1$ implies $\alpha$-approximate calibration as well.  Thus by definition,
$$\Pr[f(x) = v]\cdot\left(\E_{(x,y)\sim \cD}[v - y \mid f(x) = v] \right)^2 \leq \alpha.$$
Letting $\bar{y}_v = \E[y \mid f(x) = v]$, our lower-bound that $\Pr[f(x) = v] > \gamma/m$ gives us that $(v - \bar{y}_v )^2 < \alpha m/\gamma \leq \left(\frac{\gamma}{4}\right)^2$.
We now use this upper-bound on calibration error in conjuction with our lower-bound on distance from Bayes optimality to show that the squared error of the constant predictor $\bar{y}_v$ must also be far from Bayes optimal. 
\begin{align*} 
\E_{(x,y)\sim \dist}[(f^{*}(x) - y)^2 \mid f(x) = v] 
& < \E_{(x,y)\sim \dist}[(f(x) - y)^2 \mid f(x) = v] - 2\gamma \\
& = \E_{(x,y)\sim \dist}[(v - \bar{y}_v + \bar{y}_v - y)^2 \mid f(x) = v] - 2\gamma \\
&= \E_{(x,y)\sim \dist}[(\bar{y}_v - y)^2 \mid f(x) = v] + (v - \bar{y}_v)^2  - 2\gamma \\
&< \E_{(x,y)\sim \dist}[(\bar{y}_v - y)^2 \mid f(x) = v] - \gamma.
\end{align*}
The $(\gamma, \gamma/m)$-weak learning condition then guarantees that there exists some $h \in \hyp$ such that 
$$\E_{(x,y)\sim D}[(h - y)^2 \mid f(x) =v] < \E_{(x,y)\sim D}[(\bar{y}_v - y)^2 \mid f(x) = v] - \gamma.$$
By Lemma~\ref{lem:improvementToMultical}, the fact that $h$ improves on the squared loss of $\bar{y}_v$ by an additive factor $\gamma$, on the set of $x$ such that $f(x) =v$, implies that $\E[h(x)(y - \bar{y}_v)\mid f(x) =v] > \gamma/2$. 
Because $f$ is $\alpha$-approximately calibrated on $\cD$, we can use the existence of such an $h$ to witness a failure of multicalibration:
\begin{align*}
\E[h(y - v) &\mid f(x) = v] \\
&= \E[h(x)(y - \bar{y}_v + \bar{y}_v - v) \mid f(x) = v] \\
&= \E[h(x)(y - \bar{y}_v) \mid f(x) = v] + \E[h(x)(\bar{y}_v - v) \mid f(x) = v] \\
&> \gamma/2 - \left|\bar{y}_v - v \right| \\
&> \gamma/4.
\end{align*}
Then 
$$ \Pr[f(x) = v]\cdot \left(\E_{(x,y)\sim \dist}[h(x)(y - v)\mid f(x) = v]\right)^2 > \frac{\gamma^3}{16m},$$
contradicting our assumption that $f$ is $\alpha$-approximately multicalibrated with respect to $\cH_1$ for $\alpha < \frac{\gamma^3}{16m}$.
Therefore approximate multicalibration with respect to $\hyp_1$ must imply that $f$ is approximately Bayes optimal. 

It remains to show the other direction, that $\alpha$-approximate multicalibration with respect to a class $\hyp_1$ implies approximate Bayes optimality only if $\hyp$ satisfies the $(\gamma, \gamma/m)$-weak learning condition. If this claim were not true for the stated parameters, then there must exist a class $\hyp$ such that every predictor $f$ that:
\begin{itemize}
\item is $\alpha$-approximately multicalibrated with respect to $\hyp_1$
\item is perfectly calibrated on $\cD$
\item has range with cardinality $|R(f)| = m$
\end{itemize}
also has squared error within $\gamma^2/m$ of Bayes optimal, but $\hyp$ does not satisfy the weak learning condition. 
We will show that no such class exists by defining, for any class $\hyp$ not satisfying the weak learning condition, a predictor $f$ that is $\alpha$-approximately multicalibrated with respect to that class, but has squared error that is not within $\gamma^2/m$ of Bayes optimal. 

Recall that if a class $\hyp$ does not satisfy the $(\gamma, \gamma/m)$-weak learning condition, then there must be some set $S_{\hyp}$ such that $\Pr[x \in S_{\hyp}] > \gamma/m$, there does not exist an $h \in \hyp$ such that 
$$\E_{(x,y)\sim \cD}[(h - y)^2 \mid x \in S_{\hyp}] < \E_{(x,y)\sim \cD}[(\bar{y}_{S_{\hyp}} - y)^2 \mid x \in S_{\hyp}] - \gamma,$$
but for the Bayes optimal predictor, it holds that its squared loss satisfies
$$\E_{(x,y)\sim D}[(f^{*} - y)^2 \mid x \in S_{\hyp}] < \E_{(x,y)\sim D}[(\bar{y}_{S_{\hyp}} - y)^2 \mid x \in S_{\hyp}] - \gamma,$$
where $\bar{y}_{S_{\hyp}} = \E[y \mid x \in S_{\hyp}]$.
For some hypothesis class $\hyp$ not satisfying the weak learning condition, and associated set $S_{\hyp}$, let $f_{\hyp}$ be defined as follows:
$$f_{\hyp}(x) = 
\begin{cases} 
f^{*}(x), & x \not\in S_{\hyp} \\
\bar{y}_{S_{\hyp}}, & x \in S_{\hyp}.
\end{cases}.$$
Note that, because $f_{\hyp}$ is constant on $S_{\hyp}$, there must be some $v \in R(f)$ such that the level set $S_v = \{x \in \X : f(x) = v\}$ contains $S_{\hyp}$. 
To see that $f_{\hyp}$ is $\alpha$-approximately multicalibrated with respect to $\hyp_1$, we first consider the contribution to multicalibration error from the level sets not containing $S_{\hyp}$. For all $h \in \hyp$ and $v \in R(f)$ such that $v \neq \bar{y}_{S_{\hyp}}$,
\begin{align*}
\E_{(x,y)\sim \dist}[h(x)(y - f_{\hyp}(x)) \mid f_{\hyp}(x) = v] 
& = \E_{(x,y)\sim \dist}[h(x)(y - f^{*}(x)) \mid f_{\hyp}(x) = v] \\
& = \E_{x\sim \dist_x}\E_{y\sim \dist_y(x)}[h(x)y \mid f_{\hyp}(x) = v] - \E_{x\sim \dist_x}[h(x)f^{*}(x) \mid f_{\hyp}(x) = v] \\
& = \E_{x\sim \dist_x}\E_{y\sim \dist_y(x)}[h(x)y \mid f_{\hyp}(x) = v] - \E_{x\sim \dist_x}\E_{y\sim \dist_y(x)}[h(x)y \mid f_{\hyp}(x) = v] \\
& = 0.
\end{align*}

For the level set $S_v$ for which $S_{\hyp} \subseteq S_v$, we know from the argument above that the elements $x \in S_v\setminus S_{\hyp}$ contribute nothing to the multicalibration error, as $f(x) = f^{*}(x)$ on these elements. So,
\begin{align*}
\E_{(x,y)\sim \dist}[h(x)(y - f_{\hyp}(x)) \mid f(x) = v] 
& = \Pr_{x\sim \cD_{\cX}}[x \in S_{\hyp}]\cdot\E_{(x,y)\sim \dist}[h(x)(y - \bar{y}_{S_{\hyp}}) \mid x \in S_{\hyp}] \\
& \quad \quad \quad + \Pr_{x\sim \cD_{\cX}}[x \not\in S_{\hyp}]\cdot \E_{(x,y)\sim \dist}[h(x)(y - f^{*}(x)) \mid x \in S_v \setminus S_{\hyp}] \\
& = \Pr_{x\sim \cD_{\cX}}[x \in S_{\hyp}]\cdot\E_{(x,y)\sim \dist}[h(x)(y - \bar{y}_{S_{\hyp}}) \mid x \in S_{\hyp}]
\end{align*}
Therefore if $f_{\cH}$ is not $\alpha$-approximately multicalibrated with respect to $\cH_1$ on $\cD$, it must be the case that there exists some $h\in \hyp_1$ such that $\E[h(x)(y - \bar{y}_{S_{\hyp}}) \mid x \in S_{\hyp}] > \sqrt{\alpha}$. Then by Theorem~\ref{thm:improve-multical-equivalent}, there must exist a $h'\in \hyp$ such that $$\E_{(x,y)\sim \dist}[(\bar{y}_{S_{\hyp}} - y)^2 - (h'(x) - y)^2 \mid x \in S_{\cH}] > \alpha = \gamma.$$
But $S_{\cH}$ was defined to be a subset of $\X$ for which no such $h'$ exists and for which $\Pr[x \in S_{\cH}] > \gamma/m$. This would contradict our assumption that $\hyp$ does not satisfy the $(\gamma, \gamma/m)$-weak learning condition on $\cD$, and therefore $f_{\hyp}$ is $\alpha$-approximately multicalibrated with respect to $\hyp_1$ on $\dist$. 

It remains to prove that $f_{\hyp}$ is far from Bayes optimal. 
\begin{align*}
\E_{(x,y)\sim \dist}[(f_{\hyp}(x) - y)^2] 
& = \Pr_{x \sim \cD_{\cX}}[x \in S_{\hyp}] \E_{(x,y)\sim \cD}[(\bar{y}_{S_{\hyp}} - y)^2 \mid x \in S_{\hyp}] +  \Pr[x \not\in S_{\hyp}]\E_{(x,y)\sim \cD}[(f^{*}(x) - y)^2 \mid x \not\in S_{\hyp}] \\
& \geq  \Pr_{x \sim \cD_{\cX}}[x \in S_{\hyp}] \left(\E_{(x,y)\dist}[(f^{*} - y)^2 \mid x \in S_{\hyp}] + \gamma \right) +  \Pr[x \not\in S_{\hyp}] \E_{(x,y)\sim \cD}[(f^{*}(x) - y)^2 \mid x \not\in S_{\hyp}] \\
& = \E_{(x,y) \sim \cD}[(f^{*} - y)^2] + \gamma\Pr_{x \sim \cD_{\cX}}[x \in S_{\hyp}] \\
& \geq \E_{(x,y)\sim \cD}[(f^{*} - y)^2] + \gamma^2/m.
\end{align*}
\end{proof}
\else
\fi

\ifarxiv
\section{Weak Learners With Respect to Constrained Classes}
\label{sec:constrained}

\ifarxiv
Thus far we have studied function classes $\cH$ that satisfy a weak learning condition with respect to the Bayes optimal predictor $f^*$. But we can also study function classes $\cH$ that satisfy a weak learning condition defined with respect to another constrained class of real valued functions. 

\begin{definition}[Weak Learning Assumption Relative to $\cC$]
\label{def:weaklearner-C}
Fix a distribution $\cD \in \Delta \cZ$ and two classes of functions $\cH$ and $\cC$.  We say that $\cH$ satisfies the $\gamma$-\emph{weak learner} condition relative to $\cC$ and $\cD$ if for every $S \subseteq \cX$ with $\Pr_{x \sim \cD_\cX}[x \in S] > 0$, if:
$$\min_{c \in \cC}\E_{(x,y)\sim D}[(c(x) - y)^2 \mid x \in S] < \E_{(x,y)\sim D}[(\bar{y}_S - y)^2 \mid x \in S] - \gamma,$$
where $\bar{y}_S = \E_{(x,y)\sim \dist}[y \mid x \in S]$,  
then there exists $h\in \hyp$ such that 
$$\E_{(x,y)\sim D}[(h(x) - y)^2 \mid x \in S] < \E_{(x,y)\sim D}[(\bar{y}_S - y)^2 \mid x \in S] - \gamma.$$
When $\gamma = 0$ we simply say that $\cH$ satisfies the weak learning condition relative to $\cC$ and $\cD$.
\end{definition}

We will show that if a predictor $f$ is multicalibrated with respect to $\cH$, and $\cH$ satisfies the weak learning assumption with respect to $\cC$, then in fact:
\begin{enumerate}
\item $f$ is multicalibrated with respect to $\cC$, and
\item $f$ has squared error at most that of the minimum error predictor in $\cC$. 
\end{enumerate}
In fact, \cite{gopalan2022omnipredictors} show that if $f$ is multicalibrated with respect to $\cC$, then it is an \emph{omnipredictor} for $\cC$, which implies that $f$ has loss no more than the best function $c(x) \in \cC$, where loss can be measured with respect to any Lipschitz convex loss function (not just squared error). Thus our results imply that to obtain an omnipredictor for $\cC$, it is sufficient to be multicalibrated with respect to a class $\cH$ that satisfies our weak learning assumption with respect to $\cC$. 

\begin{theorem}
Fix a distribution $\cD \in \Delta \cZ$ and two classes of functions $\cH$ and $\cC$ that are closed under affine transformations. Then if $f:\cX\rightarrow [0,1]$ is multicalibrated with respect to $\cD$ and $\cH$, and if $\cH$ satisfies the weak learning condition relative to $\cC$ and $\cD$, then in fact $f$ is multicalibrated with respect to $\cD$ and $\cC$ as well.
\end{theorem}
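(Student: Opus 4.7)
The plan is to prove the contrapositive via chaining the two directions of Theorem~\ref{thm:improve-multical-equivalent} together with the weak learning condition. Suppose for contradiction that $f$ is multicalibrated with respect to $\cH$ but fails to be multicalibrated with respect to some $c \in \cC$: there is a level set $v \in R(f)$ such that $\E[c(x)(y-v)\mid f(x)=v] \neq 0$. Without loss of generality (replacing $c$ with $-c \in \cC$ if needed) we may assume this quantity is strictly positive.

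Since $\cC$ is closed under affine transformations, direction~(1) of Theorem~\ref{thm:improve-multical-equivalent} applied to the class $\cC$ yields some $c' \in \cC$ with
\[\E[(f(x)-y)^2 - (c'(x)-y)^2 \mid f(x) = v] > 0.\]
Next I would use the fact that $\cH$ contains the constant function (it is closed under affine transformations and contains at least one function), so multicalibration of $f$ with respect to $\cH$ implies calibration of $f$, and hence $v = \bar{y}_{S}$ where $S = \{x : f(x) = v\}$. Substituting gives
\[\min_{c \in \cC} \E[(c(x)-y)^2 \mid x \in S] < \E[(\bar{y}_S - y)^2 \mid x \in S].\]

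Now I would invoke the weak learning condition of $\cH$ relative to $\cC$ (with $\gamma = 0$), which produces some $h \in \cH$ satisfying
\[\E[(h(x)-y)^2 \mid x \in S] < \E[(\bar{y}_S - y)^2 \mid x \in S].\]
Finally, I would close the loop by applying direction~(2) of Theorem~\ref{thm:improve-multical-equivalent} — which requires calibration of $f$, already established — to conclude $\E[h(x)(y - v) \mid f(x) = v] > 0$. This contradicts the assumption that $f$ is multicalibrated with respect to $\cH$, completing the proof.

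The argument is essentially a bookkeeping exercise once the two directions of Theorem~\ref{thm:improve-multical-equivalent} are in hand; the only subtle point is making sure the weak learning condition can be invoked, which requires knowing that $v = \bar{y}_S$ (i.e.\ calibration on that level set). That step is the main place to be careful, but it follows immediately from closure of $\cH$ under affine transformations plus multicalibration with respect to $\cH$. I do not foresee a significant obstacle beyond verifying these preconditions cleanly.
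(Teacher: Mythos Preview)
Your proposal is correct and follows essentially the same approach as the paper's proof: both argue by contradiction, use closure under affine transformations to get a positive multicalibration violation for some $c \in \cC$ at a level set $v$, apply direction~(1) of Theorem~\ref{thm:improve-multical-equivalent} to obtain a squared-error improvement in $\cC$, use calibration (from $\cH$ containing constants) to identify $v = \bar{y}_S$, invoke the weak learning condition to transfer the improvement to some $h \in \cH$, and then apply direction~(2) of Theorem~\ref{thm:improve-multical-equivalent} to derive the contradiction with multicalibration over $\cH$. The steps and their order match the paper almost exactly.
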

\begin{proof}
We assume for simplicity that $f$ has a countable range (which is without loss of generality e.g. whenever $\cX$ is countable). 
Suppose for contradiction that $f$ is not multicalibrated with respect to $\cC$ and $\cD$. In this case there must be some $c \in \cC$ such that:
$$\sum_{v \in R(f)}\Pr[f(x) = v]\left(\E_{(x,y) \sim \cD}[c(x)(y-v)|f(x)=v]\right)^2 > 0$$
Since $\cC$ is closed under affine transformations (and so both $c$ and $-c$ are in $\cC$), there must be some $c' \in \cC$ and some $v \in R(f)$ with $\Pr[f(x) = v] > 0$ such that:
$$\E_{(x,y) \sim \cD}[c'(x)(y-v)|f(x)=v] > 0$$
Therefore, by the first part of Theorem \ref{thm:improve-multical-equivalent}, there must be some $c'' \in \cC$ such that:
$$\E_{(x,y) \sim \cD}[(c''(x)-y)^2 | f(x) = v] < \E_{(x,y) \sim \cD}[(v-y)^2 | f(x) = v] $$
Since $\cH$ is closed under affine transformations, the function $h(x) = 1$ is in $\cH$ and so multicalibration with respect to $\cH$ implies calibration. Thus $v = \bar y_{S_v}$ for $S_v = \{x : f(x) = v\}$. Therefore, the fact that $\cH$ satisfies the weak learning condition relative to $\cC$ and $\cD$ implies that there must be some $h \in \cH$ such that:
$$\E_{(x,y) \sim \cD}[(h(x)-y)^2 | f(x) = v] < \E_{(x,y) \sim \cD}[(v-y)^2 | f(x) = v]$$
Finally, the second part of Theorem \ref{thm:improve-multical-equivalent} implies that:
$$\E_{(x,y) \sim \cD}[h(x)(y-v)|f(x)=v] > 0$$
which is a violation of our assumption that $f$ is multicalibrated with respect to $\cH$ and $\cD$, a contradiction. 
\end{proof}

\begin{theorem}
Fix a distribution $\cD \in \Delta \cZ$ and two classes of functions $\cH$ and $\cC$. Then if $f:\cX\rightarrow [0,1]$ is calibrated and multicalibrated with respect to $\cD$ and $\cH$, and if $\cH$ satisfies the weak learning condition relative to $\cC$ and $\cD$, then:
$$\E_{(x,y) \sim \cD}[(f(x)-y)^2] \leq \min_{c \in \cC}\E_{(x,y) \sim \cD}[(c(x)-y)^2] $$
\end{theorem}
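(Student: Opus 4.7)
The plan is to prove this by contradiction, invoking in sequence calibration, the weak learning condition for $\cH$ relative to $\cC$, and the second part of Theorem~\ref{thm:improve-multical-equivalent}. Suppose to the contrary that there is some $c \in \cC$ with $\E_{(x,y) \sim \cD}[(c(x)-y)^2] < \E_{(x,y) \sim \cD}[(f(x)-y)^2]$. Decomposing both sides over the (at most countable) level sets of $f$, there must exist $v \in R(f)$ with $\Pr[f(x) = v] > 0$ such that
\[
\E[(c(x)-y)^2 \mid f(x) = v] < \E[(v-y)^2 \mid f(x) = v].
\]

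Next I would translate this into a statement about the set $S_v = \{x : f(x) = v\}$ that the weak learning condition can consume. Since $f$ is calibrated, $\bar{y}_{S_v} = \E[y \mid x \in S_v] = v$, so the displayed inequality reads $\E[(c(x) - y)^2 \mid x \in S_v] < \E[(\bar{y}_{S_v} - y)^2 \mid x \in S_v]$. In particular, the minimum over $c \in \cC$ strictly improves on the constant predictor $\bar{y}_{S_v}$ on $S_v$. Invoking the weak learning condition of $\cH$ relative to $\cC$ and $\cD$ (with $\gamma = 0$), there must exist some $h \in \cH$ satisfying $\E[(h(x)-y)^2 \mid x \in S_v] < \E[(\bar{y}_{S_v} - y)^2 \mid x \in S_v]$, equivalently $\E[(h(x)-y)^2 \mid f(x) = v] < \E[(v-y)^2 \mid f(x) = v]$, with strict inequality.

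Finally, because $f$ is calibrated, the second part of Theorem~\ref{thm:improve-multical-equivalent} applies to this $h$ and level set $v$: it yields $\E[h(x)(y - v) \mid f(x) = v] > 0$. But then $f$ witnesses a violation of multicalibration with respect to $\cH$ at $h$, since the level-set-weighted sum of squared conditional biases is bounded below by the strictly positive contribution from $v$. This contradicts the assumption that $f$ is multicalibrated with respect to $\cD$ and $\cH$, completing the proof.

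There is no hard technical step here; each link in the chain is an almost mechanical invocation of an assumption or a previously proved result. The only subtlety to flag is the passage from the strict average inequality to a strict conditional inequality on a single level set, which uses only linearity of expectation together with the countability of $R(f)$, and the fact that the weak learning condition tolerates any $S \subseteq \cX$ of strictly positive mass so we may directly apply it to $S_v$.
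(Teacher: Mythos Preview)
Your proof is correct and follows essentially the same approach as the paper's own proof: proceed by contradiction, find a level set $v$ on which some $c \in \cC$ beats the constant predictor $v = \bar y_{S_v}$ (using calibration), invoke the weak learning condition relative to $\cC$ to obtain $h \in \cH$ that also beats $v$ on $S_v$, and then apply the second part of Theorem~\ref{thm:improve-multical-equivalent} to derive a violation of multicalibration with respect to $\cH$. The steps, their order, and the lemmas invoked are the same.
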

\begin{proof}
We assume for simplicity that $f$ has a countable range (which is without loss of generality e.g. whenever $\cX$ is countable). Suppose for contradiction that there is some $c \in \cC$ such that:
$$\E_{(x,y) \sim \cD}[(c(x)-y)^2] 
 < \E_{(x,y) \sim \cD}[(f(x)-y)^2] $$
 Then there must be some $v \in R(f)$ with $\Pr[f(x) = v] > 0$ and:
$$\E_{(x,y) \sim \cD}[(c(x)-y)^2 | f(x) = v] 
 < \E_{(x,y) \sim \cD}[(v-y)^2 | f(x) = v] $$
 Since $f$ is calibrated, $v = \bar y_{S_v}$ for $S_v = \{x : f(x) = v\}$. Therefore, the fact that $\cH$ satisfies the weak learning condition relative to $\cC$ and $\cD$ implies that there must be some $h \in \cH$ such that:
$$\E_{(x,y) \sim \cD}[(h(x)-y)^2 | f(x) = v] < \E_{(x,y) \sim \cD}[(v-y)^2 | f(x) = v]$$
Finally, the second part of Theorem \ref{thm:improve-multical-equivalent} implies that:
$$\E_{(x,y) \sim \cD}[h(x)(y-v)|f(x)=v] > 0$$
which is a violation of our assumption that $f$ is multicalibrated with respect to $\cH$ and $\cD$, a contradiction.
\end{proof}
We now turn to approximate versions of these statements.
\else
 In this section we prove approximate versions of the exact statements from Section~\ref{sec:omniprediction}, showing that approximate multicalibration with respect to a class $\cH$ implies approximate multicalibration with respect to $\cC$ when $\cH$ satisfies a weak learning condition relative to $\cC$.
 \fi 
 To do so, we need a refined version of one direction of Theorem \ref{thm:improve-multical-equivalent} that shows us that if $f$ witnesses a failure of multicalibration with respect to some $h \in \cH$, then there is another function $h' \in \cH$ that can be used to improve on $f$'s squared error, \emph{while controlling the norm} of $h'$.
\begin{lemma}
Suppose $\cH$ is closed under affine transformation. Fix a model $f:\cX\rightarrow [0,1]$, a levelset $v \in R(f)$, and a bound $B > 0$. Then if there exists an $h \in \cH$ such that $\max_{x \in \cX} h(x)^2 \leq B$ and
$$\E[h(x)(y-v)| f(x) = v] \geq \alpha,$$
for $\alpha \geq 0$, then there exists an $h' \in \cH$ such that $\max_{x \in \cX} h'(x)^2 \leq (1 + \frac{\sqrt{B}}{\alpha})^2$ and:
$$\E[(f(x)-y)^2 - (h'(x) - y)^2 | f(x) = v] \geq \frac{\alpha^2}{B}.$$
\end{lemma}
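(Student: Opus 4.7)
The plan is to adapt the construction from Lemma~\ref{lem:multicalToImprovement} and add a new norm-control step. As before, I would set
$$h'(x) = v + \eta\, h(x), \qquad \eta = \frac{\alpha}{\E[h(x)^2 \mid f(x) = v]},$$
which lies in $\cH$ by affine closure. The squared-error improvement calculation is identical to the one in the original lemma and yields
$$\E[(f(x)-y)^2 - (h'(x)-y)^2 \mid f(x) = v] \geq \frac{\alpha^2}{\E[h(x)^2 \mid f(x) = v]} \geq \frac{\alpha^2}{B},$$
where the second inequality uses the hypothesis $h(x)^2 \leq B$ pointwise. So the improvement half of the lemma is essentially free given what was already proved.

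The real content is bounding $\sup_x h'(x)^2$, and the key new idea is to lower-bound the denominator of $\eta$ using Cauchy--Schwarz. Since $f$ maps into $[0,1]$ and $y \in [0,1]$, we have $(y-v)^2 \leq 1$, so
$$\alpha \;\leq\; \E[h(x)(y-v) \mid f(x) = v] \;\leq\; \sqrt{\E[h(x)^2 \mid f(x) = v]}\cdot \sqrt{\E[(y-v)^2 \mid f(x) = v]} \;\leq\; \sqrt{\E[h(x)^2 \mid f(x) = v]}.$$
Squaring gives $\E[h(x)^2 \mid f(x) = v] \geq \alpha^2$, hence $\eta \leq 1/\alpha$.

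With this in hand, the pointwise bound on $h'$ follows from the triangle inequality: for every $x$,
$$|h'(x)| \;\leq\; |v| + \eta \, |h(x)| \;\leq\; 1 + \frac{\sqrt{B}}{\alpha},$$
using $v \in [0,1]$ and $|h(x)| \leq \sqrt{B}$. Squaring yields the desired bound $h'(x)^2 \leq (1 + \sqrt{B}/\alpha)^2$. The edge case $\alpha = 0$ is handled separately by taking $h'(x) \equiv v$, which is in $\cH$ by affine closure and trivially satisfies the improvement statement. I do not expect any step to be a real obstacle; the only piece that was not already present in Lemma~\ref{lem:multicalToImprovement} is the Cauchy--Schwarz lower bound on $\E[h^2 \mid f = v]$, which is the critical ingredient for converting a pointwise bound on $h$ into a pointwise bound on $h'$.
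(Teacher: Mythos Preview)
Your proof is correct and follows essentially the same route as the paper: the same affine construction $h' = v + \eta h$ with $\eta = \alpha/\E[h^2\mid f=v]$, the same improvement calculation, the same lower bound $\E[h^2\mid f=v]\ge \alpha^2$ to get $\eta\le 1/\alpha$, and the same pointwise triangle-inequality bound $|h'|\le 1+\sqrt{B}/\alpha$. The only difference is that you obtain $\E[h^2\mid f=v]\ge\alpha^2$ via Cauchy--Schwarz together with $(y-v)^2\le 1$, whereas the paper chains Jensen with the inequality $\E[h]^2\ge \E[h(y-v)]^2$; your Cauchy--Schwarz argument is in fact the cleaner justification of this step.
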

\begin{proof}
Let $h'(x) = v + \eta h(x)$ where $\eta = \frac{\alpha}{\E[h(x)^2 \mid f(x) = v]}$, as in Theorem~\ref{thm:improve-multical-equivalent}. Because $h(x)^2$ is uniformly bounded by $B$ on $\cX$, it follows that $\E[h(x)^2] \leq B$, and we have already shown in the proof of Theorem~\ref{thm:improve-multical-equivalent} that this implies 
$$\E[(f(x)-y)^2 - (h'(x) - y)^2 | f(x) = v] \geq \frac{\alpha^2}{B}.$$
It only remains to bound $\max_{x \in \cX} h'(x)^2$. We begin by lower-bounding $\E[h(x)^2 \mid f(x) = v]$ in terms of $\alpha$.
\begin{align*}
\E[h(x)^2 \mid f(x) = v]
&\geq \E[h(x) \mid f(x) = v]^2 \\
&\geq \E[h(x)(y - v) \mid f(x) = v]^2\\
&\geq \alpha^2. 
\end{align*}
It follows that $\eta \leq 1/\alpha$, and so
\begin{align*}
\max_{x \in \cX} h'(x)^2
& = \max_{x \in \cX} (v + \eta h(x))^2 \\
&\leq (1 + \eta \sqrt{B})^2 \\
& \leq \left(1 + \frac{\sqrt{B}}{\alpha}\right)^2.
\end{align*}
\end{proof}

We will also need a parameterized version of our weak learning condition. Recalling Remark~\ref{rem:hbound}, for approximate multicalibration to be meaningful with respect to a class that is closed under affine transformation, we must specify a bounded subset of that class with respect to which a predictor is approximately multicalibrated. Then to show that approximate multicalibration with respect to one potentially unbounded class implies approximate multicalibration with respect to another, we will need to specify the subsets of each class with respect to which a predictor is claimed to be approximately multicalibrated. This motivates a parameterization of our previous weak learning condition relative to a class $\cC$. We will need to assume that whenever there is a $B$-bounded function in $\cC$ that improves over the best constant predictor on a restriction of $\cD$, there also exists a $B$-bounded function in $\cH$ that improves on the restriction as well.
\begin{definition}[$B$-Bounded Weak Learning Assumption Relative to $\cC$]
\label{def:weaklearner-C-B}
Fix a distribution $\cD \in \Delta \cZ$ and two classes of functions $\cH$ and $\cC$. Fix a bound $B >0 $ and let $\cH_B$ and $\cC_B$ denote the sets
$$\cH_B = \{ h \in \cH : \max_{x \in \cX} h(x)^2 \leq B \}$$ and $$\cC_B = \{ c \in \cC : \max_{x \in \cX} c(x)^2 \leq B \}$$ respectively. We say that $\cH$ satisfies the $B$-bounded $\gamma$-\emph{bounded weak learning} condition relative to $\cC$ and $\cD$ if for every $S \subseteq \cX$ with $\Pr_{x \sim \cD_\cX}[x \in S] > 0$, if:
$$\min_{c \in \cC_B}\E_{(x,y)\sim D}[(c(x) - y)^2 \mid x \in S] < \E_{(x,y)\sim D}[(\bar{y}_S - y)^2 \mid x \in S] - \gamma,$$
where $\bar{y}_S = \E[y \mid x \in S]$,  
then there exists $h\in \cH_B$ such that 
$$\E_{(x,y)\sim D}[(h(x) - y)^2 \mid x \in S] < \E_{(x,y)\sim D}[(\bar{y}_S - y)^2 \mid x \in S] - \gamma.$$
\end{definition}

\begin{theorem}\label{thm:approx-wkl-mc-transference}
Fix a distribution $\cD \in \Delta \cZ$ and two classes of functions $\cH$ and $\cC$ that are closed under affine transformations. Fix $\alpha_{\cC}, B > 0$. Let $B' = (1 + \sqrt{\frac{2B}{\alpha_{\cC}}})^2$ and $\gamma = \frac{\alpha_{\cC}}{4B}$.  
Fix a function $f:\cX \rightarrow [0,1]$ that maps into a countable subset of its range, and let $m = |R(f)|$, $\alpha_{\cH} < \frac{\alpha_{\cC}^3}{2^9m{B'}^2}$, and $\alpha < \frac{\alpha_{\cC}\gamma^2}{32m{B'}^2}$. Then if 
\begin{itemize}
\item $\cH$ satisfies the $B'$-bounded $\gamma$-weak learning condition relative to $\cC$ and $\cD$
\item $f$ is $\alpha_{\cH}$-approximately multicalibrated with respect to $\cD$ and $\cH_{B'}$
\item $f$ is $\alpha$-approximately calibrated on $\cD$,
\end{itemize}
then $f$ is $\alpha_{\cC}$-approximately multicalibrated with respect to $\cD$ and $\cC_{B}$.
\end{theorem}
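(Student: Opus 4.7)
The plan is to argue by contradiction: assume $f$ is not $\alpha_{\cC}$-approximately multicalibrated with respect to $\cD$ and $\cC_B$, so there exists $c \in \cC_B$ with $K_2(f,c,\cD) > \alpha_{\cC}$. From this I will extract a single level set $v^* \in R(f)$ on which $c$ witnesses a significant violation, feed that into the new refined version of Theorem~\ref{thm:improve-multical-equivalent} (the lemma stated immediately before the theorem), pass through the weak learning condition to get an $h \in \cH_{B'}$, and use Lemma~\ref{lem:improvementToMultical} to turn that $h$ into a witness of multicalibration failure for $\cH_{B'}$ --- contradicting $\alpha_{\cH}$-approximate multicalibration.

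The key structural step is the extraction of a well-behaved level set. Let $\alpha^*_v = \E[c(x)(y-v)\mid f(x)=v]$. I will split $R(f)$ into $V_{\mathrm{big}} = \{v : (\alpha^*_v)^2 \ge \alpha_{\cC}/2\}$ and its complement. On the complement each term is smaller than $\alpha_{\cC}/2$, so these terms together contribute at most $\alpha_{\cC}/2$ to $K_2(f,c,\cD)$, which means $V_{\mathrm{big}}$ must contribute at least $\alpha_{\cC}/2$. Since $|V_{\mathrm{big}}| \le m$, pigeonhole yields a level $v^*$ with $|\alpha^*_{v^*}|\ge\sqrt{\alpha_{\cC}/2}$ and $\Pr[f(x)=v^*]\cdot(\alpha^*_{v^*})^2 \ge \alpha_{\cC}/(2m)$; combined with $(\alpha^*_{v^*})^2 \le B$ this also gives the mass bound $\Pr[f(x)=v^*]\ge \alpha_{\cC}/(2mB)$. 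Using closure of $\cC$ under affine transformation, replace $c$ by $-c$ if necessary to take $\alpha^*_{v^*}>0$.

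Next I apply the refined lemma (immediately preceding the theorem) to $c$ at the level $v^*$ with parameter $\alpha = \alpha^*_{v^*}$. The lower bound $\alpha^*_{v^*}\ge\sqrt{\alpha_{\cC}/2}$ makes the norm bound $(1+\sqrt{B}/\alpha^*_{v^*})^2 \le (1+\sqrt{2B/\alpha_{\cC}})^2 = B'$, so the produced $c'$ lies in $\cC_{B'}$, and its squared-error improvement over $v^*$ on $S:=\{f(x)=v^*\}$ is at least $(\alpha^*_{v^*})^2/B \ge \alpha_{\cC}/(2B) = 2\gamma$. I then convert this into improvement over the best constant $\bar y_{v^*}=\E[y\mid x \in S]$ by using
\[
\E[(\bar y_{v^*}-y)^2\mid S] = \E[(v^*-y)^2\mid S] - (v^*-\bar y_{v^*})^2.
\]
The $\alpha$-approximate calibration hypothesis bounds $\Pr[f=v^*](v^*-\bar y_{v^*})^2 \le \alpha$, hence $(v^*-\bar y_{v^*})^2 \le \alpha/\Pr[f=v^*] \le 2\alpha m B/\alpha_{\cC}$, and for the stated choice of $\alpha$ this is at most $\gamma$, so $c'$ beats $\bar y_{v^*}$ by at least $\gamma$ on $S$.

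Now invoking the $B'$-bounded $\gamma$-weak learning condition (which applies since $\Pr[S]>0$ and a witness $c'\in\cC_{B'}$ exists), I obtain $h\in\cH_{B'}$ that also beats $\bar y_{v^*}$ by $\gamma$ in squared error on $S$. Lemma~\ref{lem:improvementToMultical} then gives $\E[h(x)(y-\bar y_{v^*})\mid S]\ge\gamma/2$. Translating back to $y - v^*$ costs at most $|\bar y_{v^*}-v^*|\cdot\sqrt{\E[h^2\mid S]}\le \sqrt{\alpha m B'/(2\gamma)\cdot 2}$; the stated bound $\alpha < \alpha_{\cC}\gamma^2/(32m{B'}^2)$ is chosen so that this slack is at most $\gamma/4$, giving $\E[h(x)(y-v^*)\mid S]\ge\gamma/4$. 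Consequently
\[
K_2(f,h,\cD) \;\ge\; \Pr[f=v^*]\cdot(\gamma/4)^2 \;\ge\; \frac{\alpha_{\cC}\gamma^2}{32mB},
\]
and the choice of $\alpha_{\cH}$ in the theorem makes this strictly larger than $\alpha_{\cH}$, contradicting the assumption that $f$ is $\alpha_{\cH}$-multicalibrated with respect to $\cH_{B'}$.

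The main obstacle is the bookkeeping of the intertwined parameters $\alpha,\alpha_{\cH},\alpha_{\cC},\gamma,B,B',m$: each of the three transitions (from ``$c$ violates multicalibration'' to ``$c'$ improves squared error on a single level set with bounded norm'', from ``improves over $v^*$'' to ``improves over $\bar y_{v^*}$'', and from ``$h$ improves over $\bar y_{v^*}$'' to ``$h$ witnesses multicalibration failure at $v^*$'') needs the calibration/approximation slack to be dominated by $\gamma$ scaled by the relevant $B'$-factor. The case split used to isolate $V_{\mathrm{big}}$ is what makes the norm bound from the refined lemma land exactly at $B'$ rather than some worse quantity; without it the direct pigeonhole would give only $\alpha^*_{v^*}\gtrsim\sqrt{\alpha_{\cC}/m}$, which would blow up the norm bound by a $\sqrt{m}$ factor.
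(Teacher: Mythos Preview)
Your approach is the same as the paper's: argue by contradiction, isolate a level set $v^*$ with $(\alpha^*_{v^*})^2 \ge \alpha_{\cC}/2$ via exactly the case-split you describe, apply the refined lemma to obtain $c'\in\cC_{B'}$ improving squared error by $2\gamma$, use approximate calibration to convert improvement over $v^*$ into improvement over $\bar y_{v^*}$, invoke the $B'$-bounded $\gamma$-weak learning condition to get $h\in\cH_{B'}$, then Lemma~\ref{lem:improvementToMultical} and a translation back to $y-v^*$ to contradict $\alpha_{\cH}$-multicalibration. The one discrepancy is your mass bound $\Pr[f=v^*]\ge\alpha_{\cC}/(2mB)$ versus the paper's asserted $\alpha_{\cC}/(2m)$: your version is what actually follows from $(\alpha^*_{v^*})^2\le B$, and the paper's stronger bound is not justified for general $B$, but your extra factor of $B$ then propagates to the final inequality $K_2(f,h,\cD)\ge \alpha_{\cC}\gamma^2/(32mB)$, which the stated threshold $\alpha_{\cH}<\alpha_{\cC}^3/(2^9 m {B'}^2)$ does not quite absorb --- this is a minor bookkeeping wrinkle present in both arguments, not a structural gap.
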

\begin{proof}
Suppose not and there exists some $c \in \cC_{B}$ such that 
$$\sum_{v \in R(f)}\Pr_{x \sim \cD_x}[f(x) = v]\cdot\left(\E_{(x,y)\sim \cD}[c(x)(y - v) \mid f(x) = v] \right)^2 > \alpha_{\cC}.$$
 Then there must exist some $v \in R(f)$ such that
$\Pr[f(x) = v] > \frac{\alpha_{\cC}}{2m}$ and 
$$\E_{(x,y)\sim \cD}[c(x)(y - v) \mid f(x) = v]^2 > \alpha_{\cC}/2.$$
Because $\cC$ is closed under affine transformations, $\cC_{B}$ is closed under negation, so there must also exist some $c' \in \cC_{B}$ such that
$$ \E_{(x,y)\sim \cD}[c'(x)(y - v) \mid f(x) = v] > \sqrt{\alpha_{\cC}/2}.$$
Then Lemma~\ref{lem:multicalToImprovement} shows that there is a $c'' \in \cC_{(1 + \sqrt{\frac{2B}{\alpha_{\cC}}})^2} = \cC_{B'}$ such that 
$$\E_{(x,y)\sim \cD}[(y - f(x))^2 - (y - c''(x))^2 \mid f(x) = v] \geq \frac{\alpha_{\cC}}{2B} = 2\gamma.$$
Because $f$ is $\alpha$-calibrated on $\cD$, by definition we have
$$\Pr_{x\sim \cD_x}[f(x) = v]\cdot\left( \E_{(x,y)\sim \cD}[v - y \mid f(x) = v]\right)^2 < \alpha.$$
Letting $\bar{y}_v = \E[y \mid f(x) = v]$, our lower-bound that $\Pr[f(x) = v] > \frac{\alpha_{\cC}}{2m}$ gives us that $(v - \bar{y}_v)^2 < \frac{2\alpha m}{\alpha_{\cC}} \leq \frac{\gamma^2}{16{B'}^2} < \gamma$.
So, because $v$ is close to $\bar{y}_v$, we can show the squared error of $f$ must be close to the squared error of $\bar{y}_v$ on this level set.
\begin{align*}
\E_{(x,y)\sim \cD}[(y - f(x))^2 \mid f(x) = v] 
&= \E_{(x,y)\sim \cD}[(y - \bar{y}_v + \bar{y}_v - f(x))^2 \mid f(x) = v] \\
&= \E_{(x,y)\sim \cD}[(y - \bar{y}_v)^2 + 2(y - \bar{y}_v)(\bar{y}_v - v)\mid f(x) = v] + (\bar{y}_v - v)^2  \\
&= \E_{(x,y)\sim \cD}[(y - \bar{y}_v)^2 \mid f(x) = v] + (\bar{y}_v - v)^2 \\
&< \E_{(x,y)\sim \cD}[(y - \bar{y}_v)^2 \mid f(x) = v] + \gamma.
\end{align*}
Then, because the squared error of $c''$ on this level set is much less than the squared error of $f$, we find that $c''$ must also have squared error less than that of $\bar{y}_v$:
\begin{align*}
\E_{(x,y)\sim \cD}[(y - \bar{y}_v)^2 - (y - c''(x))^2 \mid f(x) = v]
& > \E_{(x,y)\sim \cD}[(y - f(x))^2 - \gamma - (y - c''(x))^2 \mid f(x) = v] \\
& \geq 2\gamma - \gamma \\
&= \gamma
\end{align*}
We assumed $\cH$ satisfies the $B'$-bounded $\gamma$-weak learning condition relative to $\cC$, so this gives us a function $h \in \cH_{B'}$ such that
$$\E_{(x,y)\sim \cD}[(y - \bar{y}_v)^2 - (y - h(x))^2 \mid f(x) = v] > \gamma .$$
Then Lemma~\ref{lem:multicalToImprovement} shows that 
$$\E[h(x)(y - \bar{y}_v)\mid f(x) = v] > \gamma/2.$$
So $h$ witnesses a failure of multicalibration of $f$, since it follows that
\begin{align*}
\E[h(x)(y-v) \mid f(x) = v]
&= \E[h(x)(y-\bar{y}_v) \mid f(x) = v] + \E[h(x)(\bar{y}_v - v) \mid f(x) = v]\\
&> \gamma/2 - B'\left|\bar{y}_v - v \right| \\
&\geq \gamma/2 - \frac{B'\gamma}{4B'} \\
&= \gamma/4
\end{align*}
and so
$$\Pr_{x\sim \cD_x}[f(x) = v]\left(\E_{(x,y)\sim \cD}[h(x)(y-v) \mid f(x) = v] \right)^2 > \frac{\alpha_{\cC}\gamma^2}{32m} > \alpha_{\cH},$$
contradicting $\alpha_{\cH}$-approximate multicalibration of $f$ on $\cH_{B'}$ and $\cD$.
\end{proof}

In \cite{gopalan2022omnipredictors}, Gopalan, Kalai, Reingold, Sharan, and Wieder show that any predictor that is approximately multicalibrated for a class $\cH$ and distribution $\cD$ can be efficiently post-processed to approximately minimize any convex, Lipschitz loss function relative to the class $\cH$. The theorem we have just proved can now be used to extend their result to approximate loss minimization over any other class $\cC$, so long as $\cH$ satisfies the $B$-bounded $\gamma$-weak learning assumption relative to $\cC$. Intuitively, this follows from the fact that if $f$ is approximately multicalibrated with respect to $\cH$ on $\cD$, it is also approximately multicalibrated with respect to $\cC$. However, the notion of approximate multicalibration adopted in \cite{gopalan2022omnipredictors} differs from the one in this work. So, to formalize our intuition above, we will first state the covariance-based definition of approximate multicalibration appearing in \cite{gopalan2022omnipredictors} and prove a lemma relating it to our own. We note that, going forward, we will restrict ourselves to distributions $\cD$ over $\cX \times \{0,1\}$, as in this case the two definitions of approximate multicalibration are straightforwardly connected. 

\begin{definition}[Approximate Covariance Multicalibration \cite{gopalan2022omnipredictors}]
Fix a distribution $\cD$ over $\cX \times \{0,1\}$ and a function $f: \cX \rightarrow [0,1]$ that maps onto a countable subset of its range, denoted $R(f)$. Let $\cH$ be an arbitrary collection of real valued functions $h: \X \rightarrow \mathbb{R}$. Then $f$ is $\alpha$-\emph{approximately covariance multicalibrated} with respect to $\cH$ on $\cD$ if
$$\sum_{v \in R(f)}\Pr_{x\sim \cD_{\cX}}[f(x) = v]\cdot \left| \E[(h(x) - \bar{h}_v)(y - \bar{y}_v) \mid f(x) = v]\right| \leq \alpha,$$
where $\bar{h}_v = \E[h(x) \mid f(x) = v]$ and $\bar{y}_v = \E[y \mid f(x) = v]$.
\end{definition}

\begin{lemma}\label{lem:Too-Many-MCs-Not-Enough-Mics}
Fix a distribution $\cD$ over $\cX \times \{0,1\}$ and a class of functions on $\cX$, $\cH$. Let $\cH_B$ denote the subset
$$\cH_B = \{h \in \cH : \max_{x \in \cX} h(x)^2 \leq B\}.$$
Fix a function $f: \cX \rightarrow [0,1]$ that maps onto a countable subset of its range, denoted $R(f)$. Then if $f$ is $\alpha$-approximately multicalibrated with respect to $\cH_B$ on $\cD$, then $f$ is $(\sqrt{\alpha}(1 + \sqrt{B}))$-approximately covariance multicalibrated. That is, for all $h\in \cH_B$, $f$ satisfies
$$\sum_{v\in R(f)}\Pr[f(x) = v]\cdot \left|\E[(h(x) - \bar{h}_v)(y - \bar{y}_v)\mid f(x) = v] \right| \leq \sqrt{\alpha}(1+\sqrt{B}).$$
\end{lemma}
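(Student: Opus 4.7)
The plan is to reduce the covariance-based multicalibration error to two pieces: a swap-regret-like term that directly matches the hypothesis, and a pure calibration term. Both can then be controlled by Cauchy--Schwarz from the $K_2$-based $\alpha$-approximate multicalibration assumption.

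First I would compute the algebraic identity
\[
\E[(h(x) - \bar h_v)(y - \bar y_v) \mid f(x) = v]
= \E[h(x)(y-v) \mid f(x)=v] + \bar h_v \cdot (v - \bar y_v),
\]
obtained by expanding the product, using $\E[h - \bar h_v \mid f=v] = 0$, and then writing $\E[hy \mid f=v] = \E[h(y-v) \mid f=v] + v\bar h_v$. The triangle inequality together with the norm bound $|\bar h_v| \leq \sqrt{B}$ (which follows because $h \in \cH_B$) then yields, for each $v$,
\[
|\E[(h - \bar h_v)(y - \bar y_v) \mid f=v]| \leq |\E[h(x)(y-v) \mid f=v]| + \sqrt{B}\,|\bar y_v - v|.
\]

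Next I would sum over $v$ weighted by $\Pr[f(x)=v]$ and apply Cauchy--Schwarz twice (writing $\Pr[f=v] = \sqrt{\Pr[f=v]} \cdot \sqrt{\Pr[f=v]}$) to convert each $\ell_1$-type sum into an $\ell_2$-type sum. The first piece becomes
\[
\sum_v \Pr[f=v]\,|\E[h(x)(y-v) \mid f=v]| \leq \sqrt{\sum_v \Pr[f=v]\,(\E[h(x)(y-v)\mid f=v])^2} \leq \sqrt{\alpha},
\]
using $\alpha$-approximate multicalibration of $f$ with respect to $h \in \cH_B$. For the second piece, since the constant function $h_0(x) = 1$ lies in $\cH_B$ for any $B \geq 1$ (which is the interesting regime), the same multicalibration bound applied to $h_0$ reduces to approximate calibration, $\sum_v \Pr[f=v] (\bar y_v - v)^2 \leq \alpha$, and another application of Cauchy--Schwarz gives $\sum_v \Pr[f=v]\,|\bar y_v - v| \leq \sqrt{\alpha}$.

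Combining the two bounds yields the target inequality $\sqrt{\alpha} + \sqrt{B}\cdot \sqrt{\alpha} = \sqrt{\alpha}(1 + \sqrt{B})$. The only delicate point is the calibration step: one needs the constant function to be included in $\cH_B$ to extract calibration from the multicalibration hypothesis, which is consistent with the convention (used throughout the paper) that $\cH$ is closed under affine transformations. If one did not want to assume this, the calibration term would have to be carried separately as an extra hypothesis, but in the setting of this paper it is harmless.
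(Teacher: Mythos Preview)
Your proposal is correct and is essentially the paper's own proof: the same algebraic decomposition $\E[(h-\bar h_v)(y-\bar y_v)\mid f=v]=\E[h(x)(y-v)\mid f=v]+\bar h_v(v-\bar y_v)$, the same triangle inequality and bound $|\bar h_v|\le\sqrt{B}$, and the same Jensen/Cauchy--Schwarz step $\sum_v p_v|a_v|\le\sqrt{\sum_v p_v a_v^2}$ applied to both pieces. You are in fact slightly more explicit than the paper about why the calibration term is controlled (noting that the constant function $1\in\cH_B$ when $B\ge 1$), which the paper's proof uses implicitly in its final inequality.
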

\begin{proof}
\begin{align*}
\sum_{v\in R(f)}\Pr[f(x) = v]\cdot &\left|\E[(h(x) - \bar{h}_v)(y - \bar{y}_v)\mid f(x) = v] \right| \\
&= \sum_{v\in R(f)}\Pr[f(x) = v]\cdot \left|\E[h(x)y \mid f(x) = v] - \bar{y}_v\bar{h}_v\right| \\
&= \sum_{v\in R(f)}\Pr[f(x) = v]\cdot \left|\E[h(x)y \mid f(x) = v] -v\bar{h}_v + v\bar{h}_v - \bar{y}_v\bar{h}_v\right| \\
&= \sum_{v\in R(f)}\Pr[f(x) = v]\cdot \left|\E[h(x)(y - v) \mid f(x) = v]  + \bar{h}_v (v- \bar{y}_v)\right| \\
&\leq \sum_{v\in R(f)}\Pr[f(x) = v]\cdot \left(\left|\E[h(x)(y - v) \mid f(x) = v]\right|  + \left|\bar{h}_v (v- \bar{y}_v)\right|\right) \\
&\leq \sqrt{\alpha} + \sqrt{B}\sum_{v\in R(f)}\Pr[f(x) = v]\cdot \left|v- \bar{y}_v\right|\\
&\leq \sqrt{\alpha}(1 + \sqrt{B}).
\end{align*}
where the second inequality follows from the fact that 
$\E[x] \leq \sqrt{\E[x^2]}$ and the bound $\max_{x \in \cX}h(x)^2 \leq B$. 
\end{proof}

We now recall a theorem of \cite{gopalan2022omnipredictors}, showing that approximate covariance multicalibration with respect to a class $\cH$ implies approximate loss minimization relative to $\cH$, for convex, Lipschitz losses. 

\begin{theorem}
Fix a distribution $\cD$ over $\cX \times \{0,1\}$ and a class of real-valued functions on $\cX$, $\cH$. Fix a function $f: \cX \rightarrow [0,1]$ that maps onto a countable subset of its range, denoted $R(f)$. Let $\cL$ be a class of functions on $\{0,1\}\times \mathbb{R}$ that are convex and $L$-Lipschitz in their second argument. If $f$ is $\alpha$-approximately covariance multicalibrated with respect to $\cH_B$ on $\cD$, then for every $\ell \in \cL$ there exists an efficient post-processing function $k_{\ell}$ such that 
$$\E_{(x,y)\sim \cD}[\ell(y, k_{\ell}(f(x)))] \leq \min_{h \in \cH_B}\E_{(x,y)\sim \cD}[\ell(y, h(x))] + 2\alpha L.$$
\end{theorem}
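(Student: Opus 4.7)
The plan is to follow the standard omnipredictor argument: define the level-set-optimal post-processing explicitly, use convexity of $\ell$ to derive a subgradient inequality that compares it to any $h \in \cH_B$, and then show that the resulting cross term reduces to exactly the conditional covariances controlled by $\alpha$-approximate covariance multicalibration.

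First I would set $k_{\ell}(v) = \arg\min_{t \in \mathbb{R}} \E_{(x,y)\sim \cD}[\ell(y, t) \mid f(x) = v]$ for each $v \in R(f)$. Because $y \in \{0,1\}$, this objective equals $\bar{y}_v \,\ell(1, t) + (1 - \bar{y}_v)\, \ell(0, t)$, a one-dimensional convex program depending only on $\bar{y}_v$, and hence efficient to compute (given empirical estimates of $\bar{y}_v$). Next, by convexity of $\ell(y, \cdot)$, for every $h \in \cH_B$ and every $y$ we have the subgradient inequality
$$\ell(y, h(x)) \geq \ell(y, k_\ell(f(x))) + g(y, k_\ell(f(x)))\cdot (h(x) - k_\ell(f(x))),$$
where $g(y, t) \in \partial_t \ell(y, t)$; the $L$-Lipschitz hypothesis yields $|g(y, t)| \leq L$. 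Taking expectations, it suffices to prove that $\E[g(y, k_\ell(f(x)))(h(x) - k_\ell(f(x)))] \geq -2L\alpha$.

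The key step conditions on each level set. First-order optimality of $k_\ell(v)$ gives $\E[g(y, k_\ell(v)) \mid f(x) = v] = 0$, so the inner expectation equals the conditional covariance $\E[g(y, k_\ell(v))(h(x) - \bar{h}_v) \mid f(x) = v]$. Since $y \in \{0,1\}$, we may write $g(y, t) = (1-y)\,g(0, t) + y\,g(1, t)$ as an affine function of $y$, and so this covariance factors as
$$(g(1, k_\ell(v)) - g(0, k_\ell(v))) \cdot \E[(y - \bar{y}_v)(h(x) - \bar{h}_v) \mid f(x) = v].$$
The first factor is at most $2L$ in absolute value; summing over $v$ weighted by $\Pr[f(x) = v]$, applying the triangle inequality, and invoking the definition of $\alpha$-approximate covariance multicalibration with respect to $\cH_B$ yields the $2L\alpha$ bound. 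Since $\E[\ell(y, k_\ell(f(x)))] \leq \E[\ell(y, h(x))] + 2L\alpha$ then holds for every $h \in \cH_B$, we may take the minimum over $h$ to conclude.

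The main technical obstacle is handling the first-order optimality condition cleanly when $\ell(y, \cdot)$ is non-differentiable at $k_\ell(v)$ (e.g.\ hinge loss) or when $k_\ell(v)$ lies at the boundary of the admissible range. In that case one must choose compatible subgradient elements $g(0, k_\ell(v))$ and $g(1, k_\ell(v))$ so that their $\bar{y}_v$-weighted combination vanishes; this is a standard convex-analysis construction, but it is the place where a fully rigorous argument requires care, as in \cite{gopalan2022omnipredictors}. Everything else is a direct computation using only the subgradient inequality, the binary nature of $y$, and the definition of approximate covariance multicalibration already introduced.
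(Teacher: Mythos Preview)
The paper does not give its own proof of this theorem: it is stated explicitly as a result ``recall[ed]'' from \cite{gopalan2022omnipredictors} and used as a black box to derive the subsequent corollary. So there is no in-paper proof to compare against.

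That said, your sketch is correct and is essentially the argument of \cite{gopalan2022omnipredictors}. You define the level-set-optimal post-processing $k_\ell$, apply the subgradient inequality for the convex map $t\mapsto \ell(y,t)$, use first-order optimality of $k_\ell(v)$ to kill the mean of $g(y,k_\ell(v))$ on each level set, and then exploit the binary label to write $g(y,k_\ell(v))$ as an affine function of $y$, which converts the remaining term into $(g(1,k_\ell(v))-g(0,k_\ell(v)))\cdot \mathrm{Cov}(y,h(x)\mid f(x)=v)$. The factor in front is at most $2L$ in absolute value by the Lipschitz bound, and summing over level sets gives the $2L\alpha$ slack via the covariance multicalibration definition. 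The one nontrivial point you flag --- choosing subgradients $g(0,k_\ell(v)),g(1,k_\ell(v))$ whose $\bar y_v$-weighted combination vanishes when $\ell$ is nondifferentiable at the optimum --- is exactly where the argument needs the subdifferential sum rule, and you have identified it correctly.
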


\begin{corollary}
Fix a distribution $\cD$ over $\cX \times \{0,1\}$ and two classes of real-valued functions on $\cX$ that are closed under affine transformation, $\cH$ and $\cC$. Fix a function $f: \cX \rightarrow [0,1]$ that maps onto a countable subset of its range, denoted $R(f)$. Let $\cL$ be a class of functions on $\{0,1\}\times \mathbb{R}$ that are convex and $L$-Lipschitz in their second argument. 
Fix $\alpha_{\cC}, B > 0$. Let $B' = (1 + \sqrt{\frac{2B}{\alpha_{\cC}}})^2$ and $\gamma = \frac{\alpha_{\cC}}{4B}$.  
 Let $\alpha_{\cH} < \frac{\alpha_{\cC}^3}{2^9m{B'}^2}$, and $\alpha < \frac{\alpha_{\cC}\gamma^2}{32m{B'}^2}$. Then if 
\begin{itemize}
\item $\cH$ satisfies the $B'$-bounded $\gamma$-weak learning condition relative to $\cC$ and $\cD$
\item $f$ is $\alpha_{\cH}$-approximately multicalibrated with respect to $\cD$ and $\cH_{B'}$
\item $f$ is $\alpha$-approximately calibrated on $\cD$,
\end{itemize}
then for every $\ell \in \cL$ there exists an efficient post-processing function $k_{\ell}$ such that 
$$\E_{(x,y)\sim \cD}[\ell(y, k_{\ell}(f(x)))] \leq \min_{c \in \cC_B}\E_{(x,y)\sim \cD}[\ell(y, c(x))] + 2L\sqrt{\alpha_{\cC}}(1 + \sqrt{B}).$$
\end{corollary}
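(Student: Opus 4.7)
The plan is to chain together the three results that immediately precede the corollary: Theorem~\ref{thm:approx-wkl-mc-transference}, Lemma~\ref{lem:Too-Many-MCs-Not-Enough-Mics}, and the quoted theorem of \cite{gopalan2022omnipredictors} on loss minimization from approximate covariance multicalibration. All the parameter choices in the corollary statement are exactly those needed to invoke Theorem~\ref{thm:approx-wkl-mc-transference}, so the bulk of the work is really just unwinding the implications.

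First I would apply Theorem~\ref{thm:approx-wkl-mc-transference} directly. The three bulleted hypotheses of the corollary are identical to those of that theorem, and the parameter settings $B' = (1 + \sqrt{2B/\alpha_{\cC}})^2$, $\gamma = \alpha_{\cC}/(4B)$, $\alpha_{\cH} < \alpha_{\cC}^3/(2^9 m {B'}^2)$, and $\alpha < \alpha_{\cC}\gamma^2/(32 m {B'}^2)$ match exactly. The conclusion we extract is that $f$ is $\alpha_{\cC}$-approximately multicalibrated with respect to $\cD$ and $\cC_B$.

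Next I would pass from this $\ell_2$-style multicalibration over $\cC_B$ to the covariance notion used in \cite{gopalan2022omnipredictors}, by invoking Lemma~\ref{lem:Too-Many-MCs-Not-Enough-Mics} with the class $\cC_B$ in place of $\cH_B$ and the approximation parameter $\alpha_{\cC}$ in place of $\alpha$. This yields that $f$ is $\sqrt{\alpha_{\cC}}(1+\sqrt{B})$-approximately covariance multicalibrated with respect to $\cC_B$ on $\cD$, which is exactly the hypothesis required by the quoted theorem.

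Finally I would apply that theorem with covariance approximation parameter $\sqrt{\alpha_{\cC}}(1+\sqrt{B})$ and the class $\cC_B$, yielding for every $\ell \in \cL$ an efficient post-processing function $k_{\ell}$ satisfying
\[
\E_{(x,y)\sim \cD}[\ell(y, k_{\ell}(f(x)))] \leq \min_{c \in \cC_B}\E_{(x,y)\sim \cD}[\ell(y, c(x))] + 2L\sqrt{\alpha_{\cC}}(1+\sqrt{B}),
\]
which is the conclusion of the corollary. There is no real obstacle here, since each step is a straightforward invocation; the only thing worth double-checking is that the parameter bookkeeping between $\cH_{B'}$ and $\cC_B$ is consistent, in particular that Lemma~\ref{lem:Too-Many-MCs-Not-Enough-Mics} is being applied to the bounded class $\cC_B$ (the same class appearing in the final minimum), so that the $\sqrt{B}$ factor in the bound matches the $\sqrt{B}$ factor in the corollary's conclusion.
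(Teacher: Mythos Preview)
Your proposal is correct and follows exactly the same three-step chain as the paper: apply Theorem~\ref{thm:approx-wkl-mc-transference} to obtain $\alpha_{\cC}$-approximate multicalibration with respect to $\cC_B$, then Lemma~\ref{lem:Too-Many-MCs-Not-Enough-Mics} to convert to $\sqrt{\alpha_{\cC}}(1+\sqrt{B})$-approximate covariance multicalibration, then the cited result of \cite{gopalan2022omnipredictors} to conclude. Your parameter bookkeeping is also on point.
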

\begin{proof}
We have from Theorem~\ref{thm:approx-wkl-mc-transference} that given the assumed conditions, $f$ will be $\alpha_{\cC}$-approximately multicalibrated with respect to $\cC_B$ on $\cD$. It follows from Lemma~\ref{lem:Too-Many-MCs-Not-Enough-Mics} that $f$ is $\sqrt{\alpha_{\cC}}(1+ \sqrt{B})$-approximately covariance multicalibrated with respect to $\cC_B$ on $\cD$. The result of \cite{gopalan2022omnipredictors} then gives us that for all $\ell \in \cL$, there exists an efficient post-processing function $k_{\ell}$ such that 
$$\E_{(x,y)\sim \cD}[\ell(y, k_{\ell}(f(x)))] \leq \min_{c \in \cC_B}\E_{(x,y)\sim \cD}[\ell(y, c(x))] + 2L\sqrt{\alpha_{\cC}}(1 + \sqrt{B}).$$
\end{proof}
\else
\section{Weak Learners with Respect to Constrained Classes}\label{sec:omniprediction}

Thus far we have studied function classes $\cH$ that satisfy a weak learning condition with respect to the Bayes optimal predictor $f^*$. But we can also study function classes $\cH$ that satisfy a weak learning condition defined with respect to another constrained class of real valued functions. 

\begin{definition}[Weak Learning Assumption Relative to $\cC$]
\label{def:weaklearner-C}
Fix a distribution $\cD \in \Delta \cZ$ and two classes of functions $\cH$ and $\cC$.  We say that $\cH$ satisfies the $\gamma$-\emph{weak learning} condition relative to $\cC$ and $\cD$ if for every $S \subset \cX$ with $\Pr_{x \sim \cD_\cX}[x \in S] > 0$, if:
$$\min_{c \in \cC}\E[(c(x) - y)^2 \mid x \in S] < \E[(\bar{y}_S - y)^2 \mid x \in S] - \gamma,$$
where $\bar{y}_S = \E[y \mid x \in S]$,  
then there exists $h\in \hyp$ such that 
$$\E[(h(x) - y)^2 \mid x \in S] < \E[(\bar{y}_S - y)^2 \mid x \in S] - \gamma.$$
When $\gamma = 0$ we simply say that $\cH$ satisfies the weak learning condition relative to $\cC$ and $\cD$.
\end{definition}

We will show that if a predictor $f$ is multicalibrated with respect to $\cH$, and $\cH$ satisfies the weak learning assumption with respect to $\cC$, then in fact:
\begin{enumerate}
\item $f$ is multicalibrated with respect to $\cC$, and
\item $f$ has squared error at most that of the minimum error predictor in $\cC$. 
\end{enumerate}
In fact, \cite{gopalan2022omnipredictors} show that if $f$ is multicalibrated with respect to $\cC$, then it is an \emph{omnipredictor} for $\cC$, which implies that $f$ has loss no more than the best function $c(x) \in \cC$, where loss can be measured with respect to any Lipschitz convex loss function (not just squared error). Thus our results imply that to obtain an omnipredictor for $\cC$, it is sufficient to be multicalibrated with respect to a class $\cH$ that satisfies our weak learning assumption with respect to $\cC$. 

\begin{restatable}{theorem}{thmmctransference}
Fix a distribution $\cD \in \Delta \cZ$ and two classes of functions $\cH$ and $\cC$ that are closed under affine transformations. Then if $f:\cX\rightarrow [0,1]$ is multicalibrated with respect to $\cD$ and $\cH$, and if $\cH$ satisfies the weak learning condition relative to $\cC$ and $\cD$, then in fact $f$ is multicalibrated with respect to $\cD$ and $\cC$ as well.
Furthermore, 
$$\E_{(x,y) \sim \cD}[(f(x)-y)^2] \leq \min_{c \in \cC}\E_{(x,y) \sim \cD}[(c(x)-y)^2]. $$
\end{restatable}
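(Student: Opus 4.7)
The plan is to handle both conclusions with essentially the same two-step chain, using Theorem~\ref{thm:improve-multical-equivalent} as the bridge between multicalibration failures and squared-error improvements in both directions. The key structural observation is that since $\cH$ is closed under affine transformations, the constant $1 \in \cH$, so multicalibration with respect to $\cH$ immediately implies (marginal) calibration of $f$. In particular, for every level set value $v \in R(f)$, we have $v = \bar{y}_{S_v}$ where $S_v = \{x : f(x) = v\}$, which lets us identify the level-set prediction $v$ with the best constant predictor on $S_v$.

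For the first claim, I would proceed by contradiction. Suppose $f$ is not multicalibrated with respect to $\cC$: then there exist $c \in \cC$ and $v \in R(f)$ with $\Pr[f(x)=v] > 0$ such that $\E[c(x)(y-v) \mid f(x)=v] \neq 0$. Using that $\cC$ is closed under affine transformations (so both $c$ and $-c$ lie in $\cC$), we may assume the inner expectation is strictly positive. Applying Part~1 of Theorem~\ref{thm:improve-multical-equivalent} (to the class $\cC$, which is permissible since $\cC$ is closed under affine transformations), we obtain some $c'' \in \cC$ strictly improving over $f$ in squared error conditional on $f(x)=v$. Since $v = \bar{y}_{S_v}$ by calibration, this improvement is strictly better than the best constant on $S_v$. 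The weak learning condition of $\cH$ relative to $\cC$ then supplies some $h \in \cH$ that also strictly beats the best constant on $S_v$, i.e.\ beats $v$ conditionally. By Part~2 of Theorem~\ref{thm:improve-multical-equivalent} (invoked for $\cH$, using calibration to justify its hypothesis), this yields $\E[h(x)(y-v) \mid f(x)=v] > 0$, contradicting multicalibration with respect to $\cH$.

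For the second claim, I would again argue by contradiction: if some $c \in \cC$ had strictly smaller expected squared error than $f$, then by linearity of expectation over level sets there is some $v \in R(f)$ with $\Pr[f(x)=v] > 0$ on which $c$ strictly beats $v$ in conditional squared error. From that point onward the argument is identical to the final two steps of the previous paragraph: calibration converts this to an improvement over the best constant on $S_v$, the weak learning condition transfers the improvement from $\cC$ to some $h \in \cH$, and Part~2 of Theorem~\ref{thm:improve-multical-equivalent} converts that improvement back into a multicalibration violation with respect to $\cH$, contradicting our hypothesis on $f$.

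The main subtlety, rather than an obstacle, is making sure the hypotheses of each invocation of Theorem~\ref{thm:improve-multical-equivalent} are met: Part~1 only needs closure under affine transformations (applied to $\cC$), while Part~2 additionally requires calibration of $f$ (applied to $\cH$, where calibration is free from the constant $1 \in \cH$). A minor bookkeeping point is that I am tacitly assuming $f$ has countable range so that the sum over $R(f)$ in the multicalibration definition is well-defined; this is without loss of generality when $\cX$ is countable, and can otherwise be noted as a measurability caveat.
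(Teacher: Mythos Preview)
Your proposal is correct and follows essentially the same approach as the paper's proof: both split the theorem into two lemmas, use closure of $\cH$ under affine transformations to obtain calibration (hence $v=\bar y_{S_v}$), and run the chain ``multicalibration failure in $\cC$ $\to$ Part~1 of Theorem~\ref{thm:improve-multical-equivalent} $\to$ weak learning condition $\to$ Part~2 of Theorem~\ref{thm:improve-multical-equivalent} $\to$ multicalibration failure in $\cH$'' to derive the contradiction. Your remarks on the countable-range caveat and on which hypotheses each part of Theorem~\ref{thm:improve-multical-equivalent} requires also match the paper's treatment.
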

See Appendix~\ref{ap:proofs} for proof. For the corresponding relationship between approximate multicalibration and approximate loss minimization, see Appendix~\ref{ap:omni}.

\fi 

\section{Empirical Evaluation}
\label{sec:experimental}
\ifarxiv
In this section, we study Algorithm \ref{alg:regression-multicalibrator} empirically via an efficient, open-source Python implementation of our algorithm on both synthetic and real regression problems. Our code is available here: \url{https://github.com/Declancharrison/Level-Set-Boosting}. 
An important feature of Algorithm \ref{alg:regression-multicalibrator} which distinguishes it from traditional boosting algorithms is the ability to parallelize not only during inference, but also during training. Let $f_{t}$ be the model maintained by Algorithm \ref{alg:regression-multicalibrator} at round $t$ with $m$ level sets. Given a data set $X$, $f_{t}$ creates a partition of $X$ defined by $X^{t+1}_{i}= \{x | f_{t}(x) = v_{i}\}$. Since the $X_{i}$ are disjoint, each call $h^{t+1}_{i} = A_{\mathcal{H}}(X^{t+1}_{i})$ can be made on a separate worker followed by a combine and round operation to obtain $\tilde{f}_{t+1}$ and $f_{t+1}$ respectively, as shown in Figure~\ref{fig:diagram}. A parallel inference pass at round $t$ works nearly identically, but uses the historical weak learners $h^{t+1}_{i}$ obtained from training and applies them to each set $X^{t+1}_{i}$. 
\begin{figure}[t]
\centering
\includegraphics[trim = {.6cm .1cm .1cm .1cm}, clip, width=.8\linewidth]{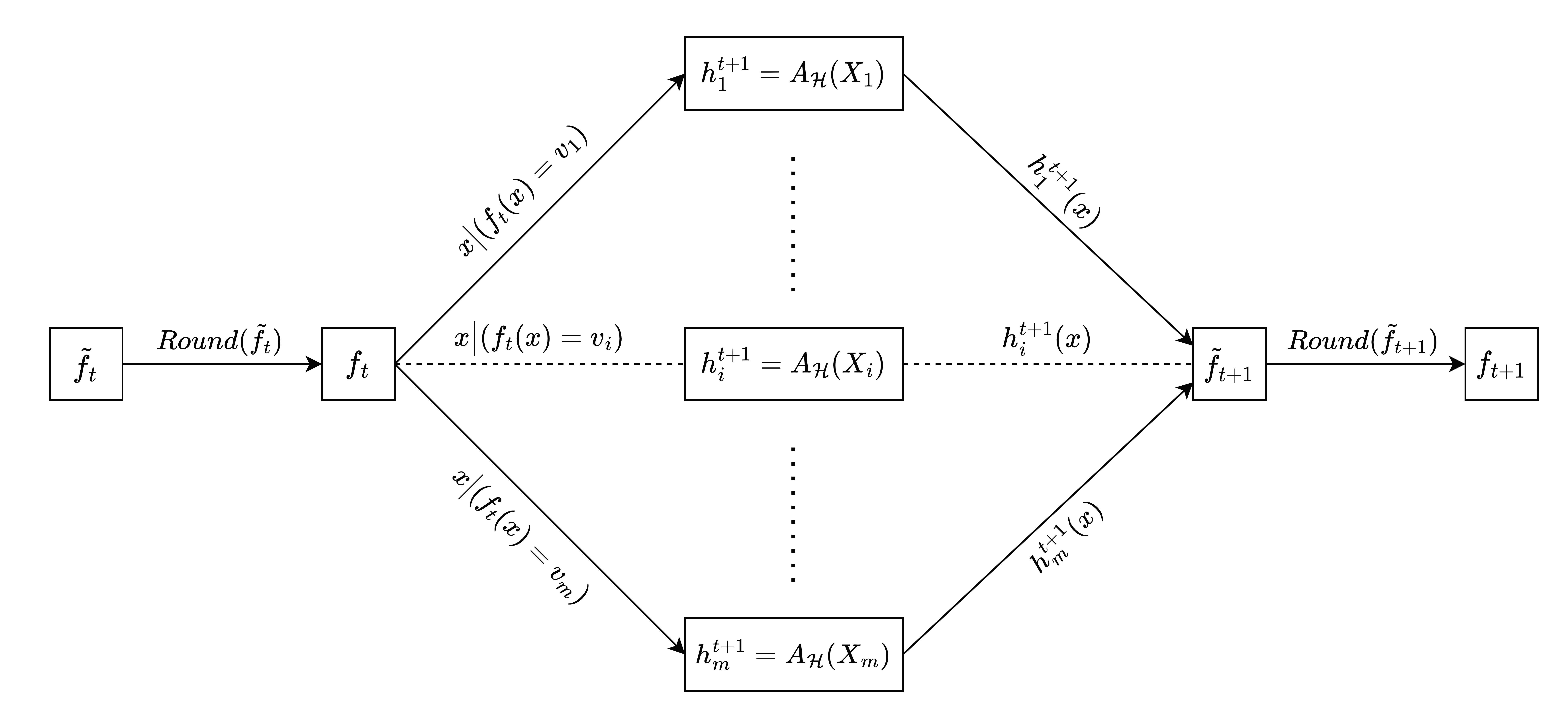}
\caption{The update process at round $t$ with $m$ level sets during training.}
\label{fig:diagram}
\end{figure}

\subsection{Prediction on Synthetic Data}\label{ssec:synthetic-task}
From Theorem \ref{thm:weaklearnercharacterize}, we know that multicalibration with respect to a hypothesis class $\mathcal{H}$ satisfying our weak learning condition implies Bayes optimality. To visualize the fast convergence of our algorithm to Bayes optimality, we create two synthetic datasets; each dataset contains one million samples with two features. We label these points using two functions, \ref{eq:C_0} and \ref{eq:C_1}, defined below and pictured in Figure \ref{fig:bayes-optimal}). We attempt to learn the underlying function with Algorithm \ref{alg:regression-multicalibrator}. 

\[
C_{0}(x) = 
\begin{cases}
(x+1)^{2} + (y - 1)^{2}, & \text{if } x \leq 0, y \geq 0 \\
(x-1)^{2} + (y - 1)^{2}, & \text{if } x > 0, y \geq 0 \\
(x+1)^{2} + (y + 1)^{2}, & \text{if } x \leq 0, y < 0 \\
(x-1)^{2} + (y + 1)^{2}, & \text{if } x > 0, y < 0 \\
\end{cases}\label{eq:C_0} \tag{$C_{0}$}
\]
\[
C_{1}(x) = 
\begin{cases}
x + 20xy^{2}\cos(-8x)\sin(8y)\left(\frac{(1.5x + 4)(x+1)^2}{y+3} + (y-1)^{2}\right), & \text{if } x \leq 0, y \geq 0 \\
x + 20xy^{2}\cos(8x)\sin(8y)\left(\frac{(1.5x + 4)(x-1)^2}{y+3} + (y-1)^{2}\right), & \text{if } x > 0, y \geq 0 \\
x + 20xy^{2}\cos(-8x)\sin(8y)\left(\frac{(1.5x + 4)(x+1)^2}{y+3} + (y+1)^{2}\right), & \text{if } x \leq 0, y < 0 \\
x + 20xy^{2}\cos(8x)\sin(8y)\left(\frac{(1.5x + 4)(x-1)^2}{y+3} + (y+1)^{2}\right), & \text{if } x > 0, y < 0 \\
\end{cases}\label{eq:C_1} \tag{$C_{1}$}
\]
\\

In Figure \ref{fig:simple-learn}, we show an example of Algorithm \ref{alg:regression-multicalibrator} learning \ref{eq:C_0} using a discretization of five-hundred level sets and a weak learner hypothesis class of depth one decision trees. Each image in figure \ref{fig:simple-learn} corresponds to the map produced by Algorithm \ref{alg:regression-multicalibrator} at the round listed in the top of the image. As the round count increases, the number of non-empty level sets increases until each level set is filled, at which point the updates become more granular. The termination round titled `final round' occurs at $T = 199$ and paints an approximate map of \ref{eq:C_0}. The image titled `out of sample' is the map produced on a set of one million points randomly drawn outside of the training sample, and shows that Algorithm \ref{alg:regression-multicalibrator} is in fact an approximation of the Bayes Optimal \ref{eq:C_0}.

\begin{center}
\begin{figure}[H]
\begin{subfigure}[c]{.475\linewidth}
    \centering
    \includegraphics[trim={2.2cm 2.2cm 2.2cm 2.2cm},clip, width = \linewidth]{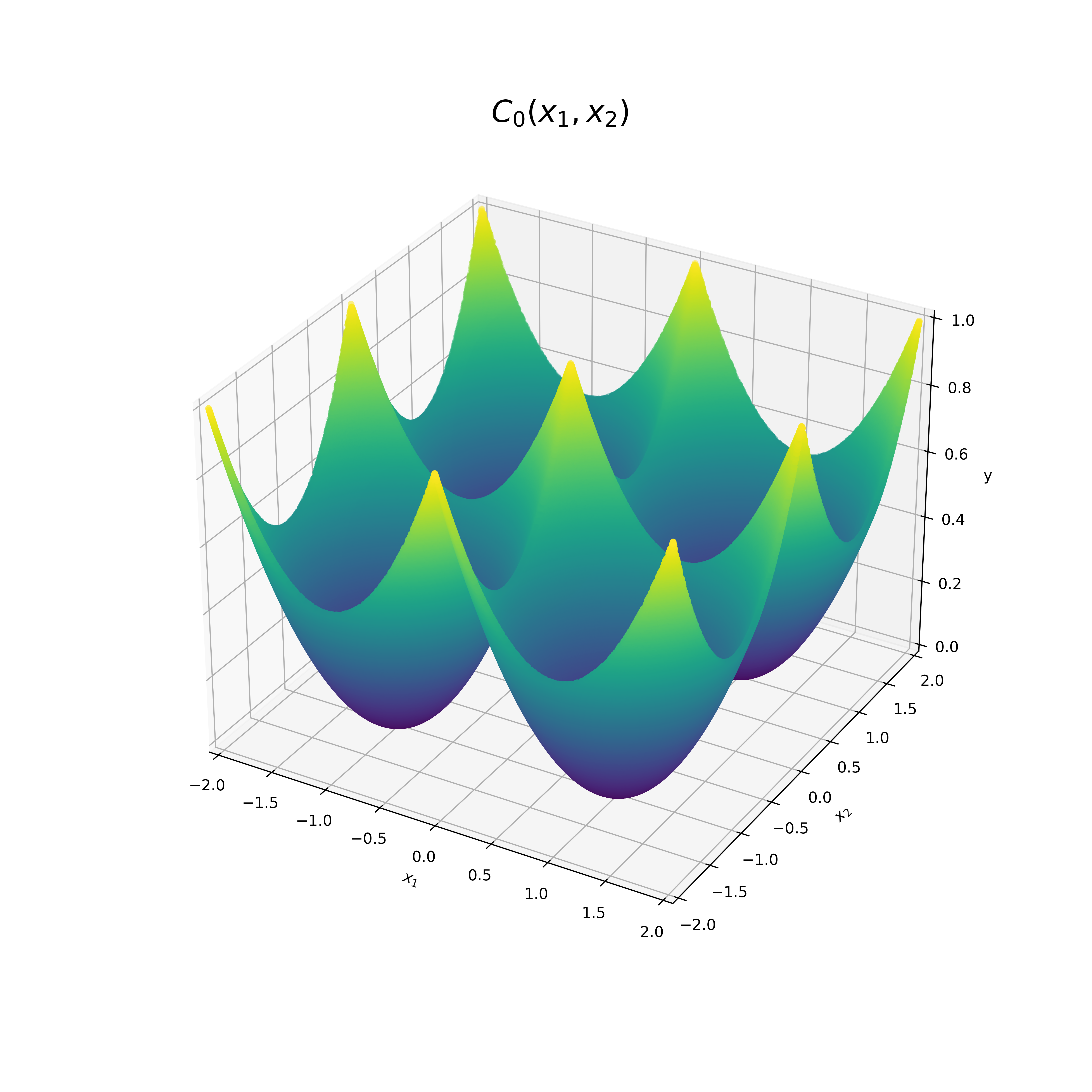}
\end{subfigure}
\begin{subfigure}[c]{.475\linewidth}
    \centering
    \includegraphics[trim={2.2cm 2.2cm 2.2cm 2.2cm},clip,width = \linewidth]{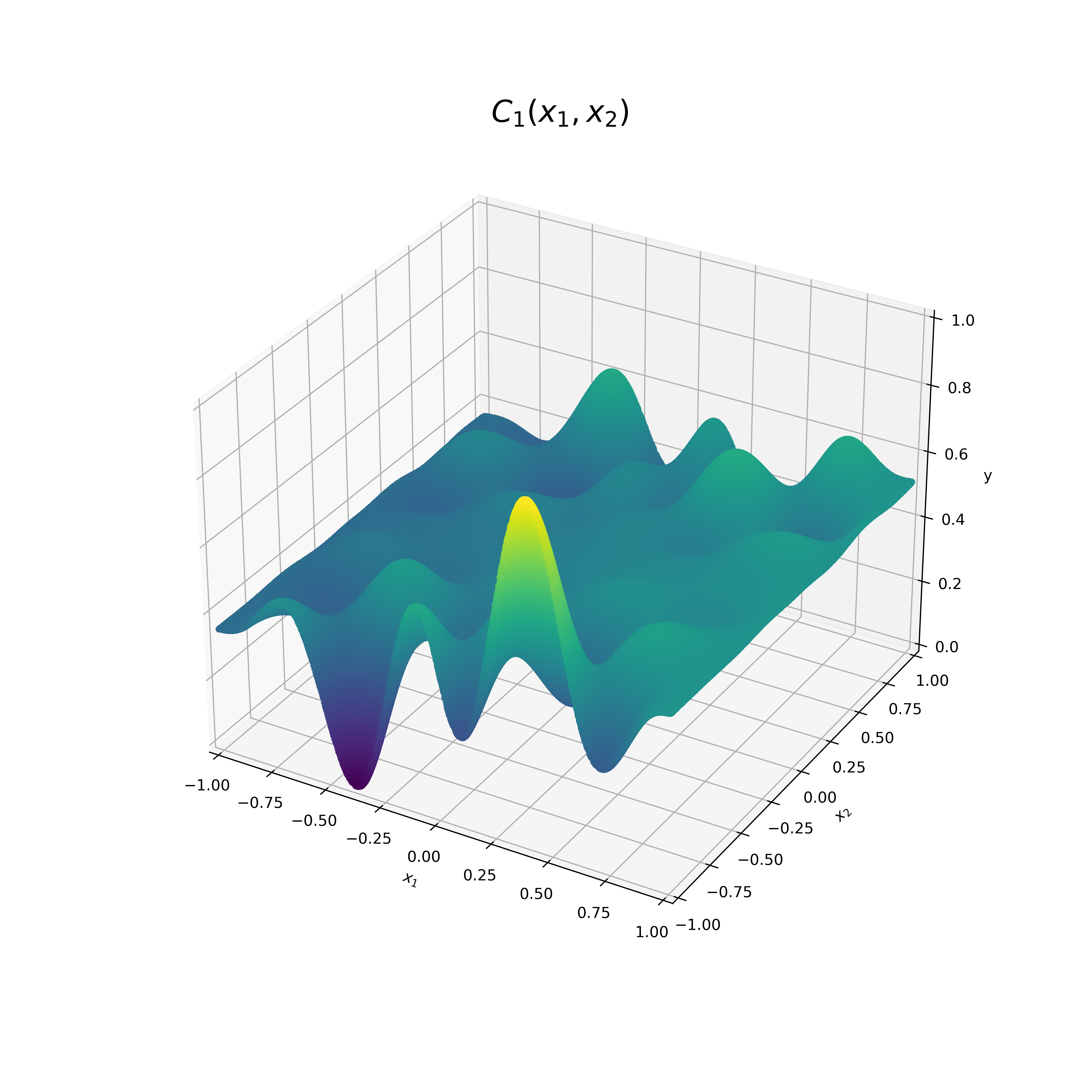}
\end{subfigure}  
\caption{\ref{eq:C_0} maps $x_{1},x_{2}\in[-2,2]$ to four cylindrical cones symmetric about the origin. \ref{eq:C_1} maps $x_{1},x_{2}\in[-1,1]$ to a hilly terrain from a more complex function.}
\label{fig:bayes-optimal}
\end{figure}
\end{center}
\begin{figure}
\begin{center}
\includegraphics[scale = .5, width = .97\linewidth]{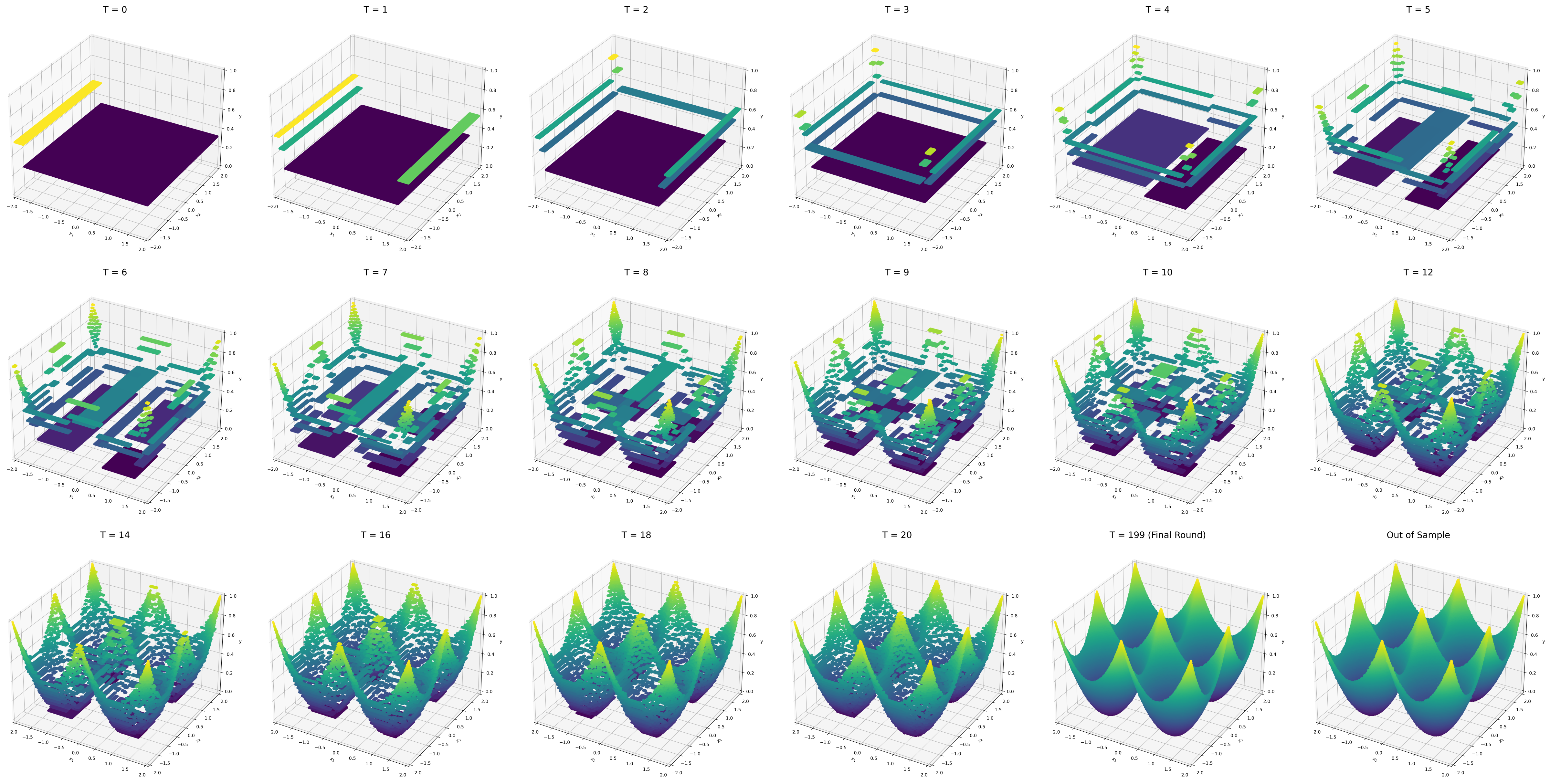}
\caption{Evolution of Algorithm \ref{alg:regression-multicalibrator} learning \ref{eq:C_0}.}
\label{fig:simple-learn}
\end{center}
\end{figure}

Figure \ref{fig:complex-learn} plots the same kind of progression as Figure \ref{fig:simple-learn}, but with a more complicated underlying function \ref{eq:C_1} using a variety of weak learner classes. We are able to learn this more complex surface out of sample with all base classes except for linear regression, which results in a noisy out-of-sample plot. 

\begin{figure}
    \centering
    \includegraphics[width = \linewidth]{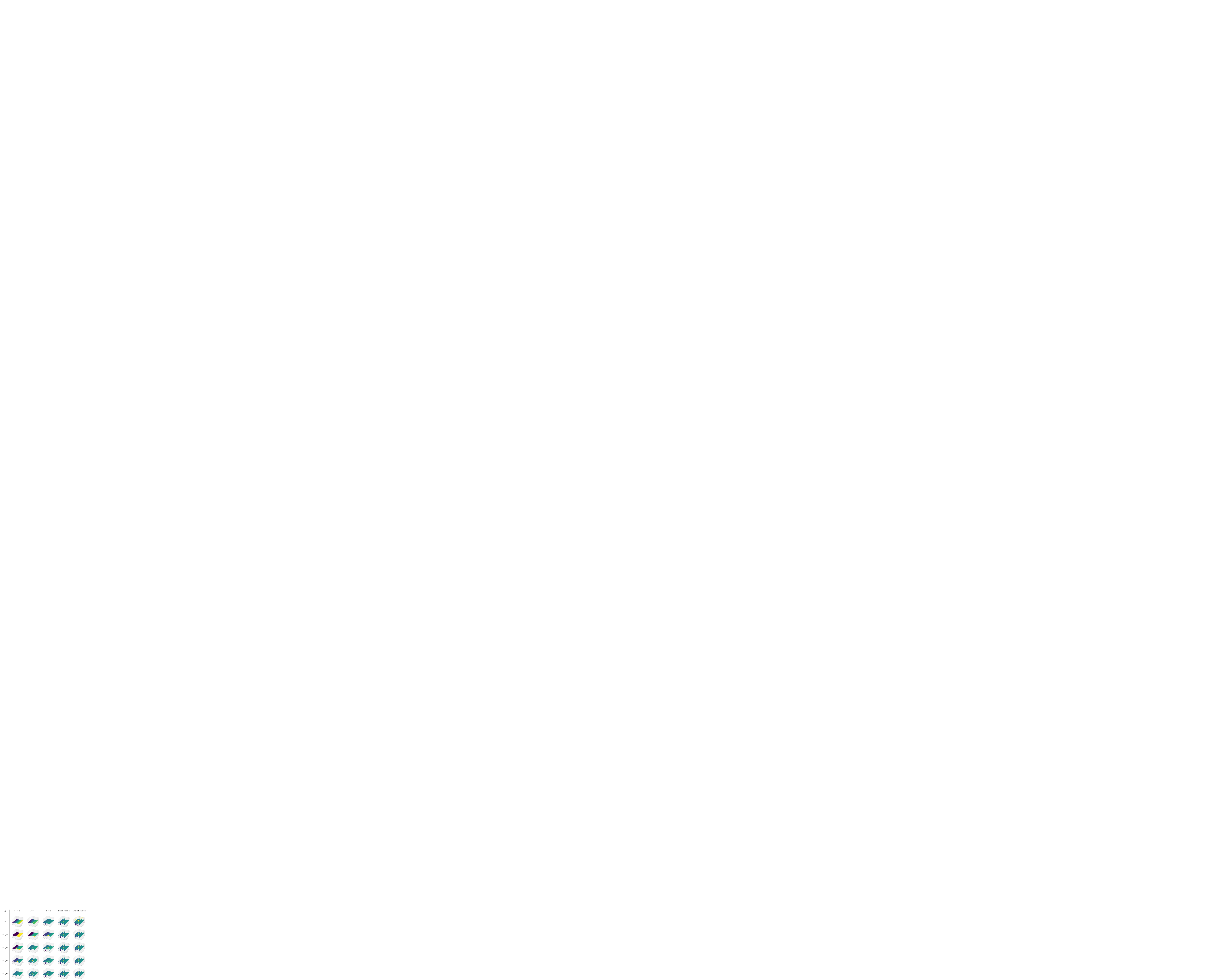}
    \caption{Stages of Algorithm \ref{alg:regression-multicalibrator} learning \ref{eq:C_1} with linear regression (LR) and varying depth $d$ decision trees (DT($d$)). In the out of sample plot for linear regression, points are not mapped to their proper position, implying \ref{eq:C_1} cannot be learned by boosting linear functions. All other hypothesis classes eventually converge to \ref{eq:C_1}.}
    \label{fig:complex-learn}
\end{figure}

\subsection{Prediction on Census Data}\label{ssec:census-task}
We evaluate the empirical performance of Algorithm \ref{alg:regression-multicalibrator} on US Census data compiled using the Python folktables package \cite{ding2021retiring}. In this dataset, the feature space consists of demographic information about individuals (see Table~\ref{tab:feature-space}), and the labels correspond to the individual's annual income. 
\begin{table}[H]
    \centering
    \begin{tabular}{c|c|c|c}
    feature & description & feature & description\\
    \hline
    AGEP & age & POBP & place of birth  \\ 
    COW & class of worker & RELP & relationship\\
    SCHL & education level & WKHP & work hours per week \\ 
    MAR & marital status & SEX & binary sex \\
    OCCP & occupation & RAC1P & race \\ 
    \end{tabular}
    \caption{Features included in income prediction task.}
    \label{tab:feature-space}
\end{table}
We cap income at \$100,000 and then rescale all labels into $[0,1]$. On an 80/20\% train-test split with ~500,000 total samples, we compare the performance of Algorithm \ref{alg:regression-multicalibrator} with Gradient Boosting with two performance metrics: mean squared error (MSE), and mean squared calibration error (MSCE). For less expressive weak learner classes (such as DT(1), see Figure \ref{fig:dt2_folk}), Algorithm \ref{alg:regression-multicalibrator} has superior MSE out of sample compared to Gradient Boosting through one hundred rounds while maintaining significantly lower MSCE, and converges quicker. However, as the weak learning class becomes more expressive (e.g. increasing decision tree depths), Algorithm \ref{alg:regression-multicalibrator} is more prone to overfitting than gradient boosting (see Figure \ref{fig:gb_vs_ls}).

\begin{figure}[H]
\centering
\includegraphics[trim = {15 11 15 11}, clip, width = .77\linewidth]{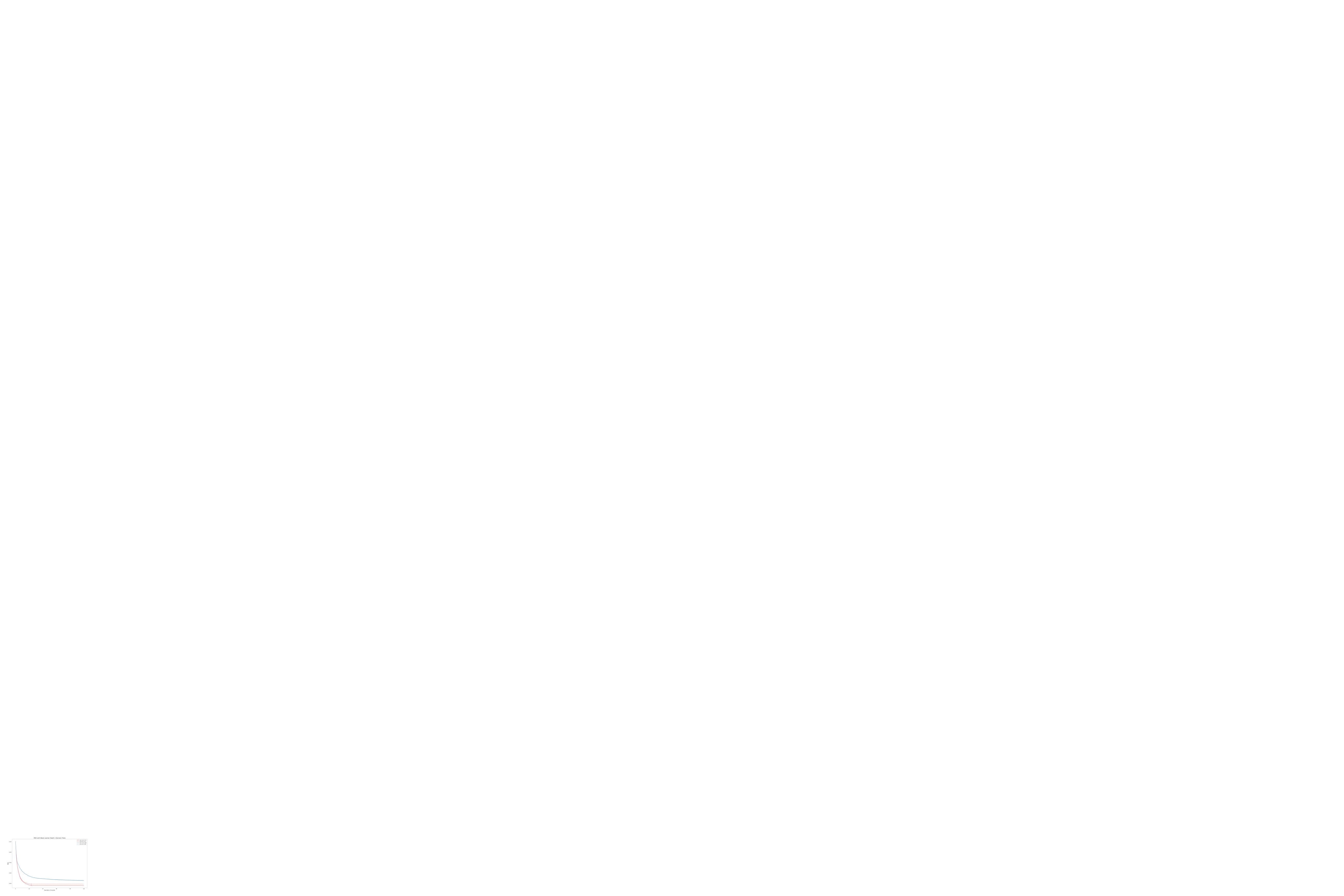} \\
\includegraphics[trim = {15 11 15 11}, clip, width = .77\linewidth]{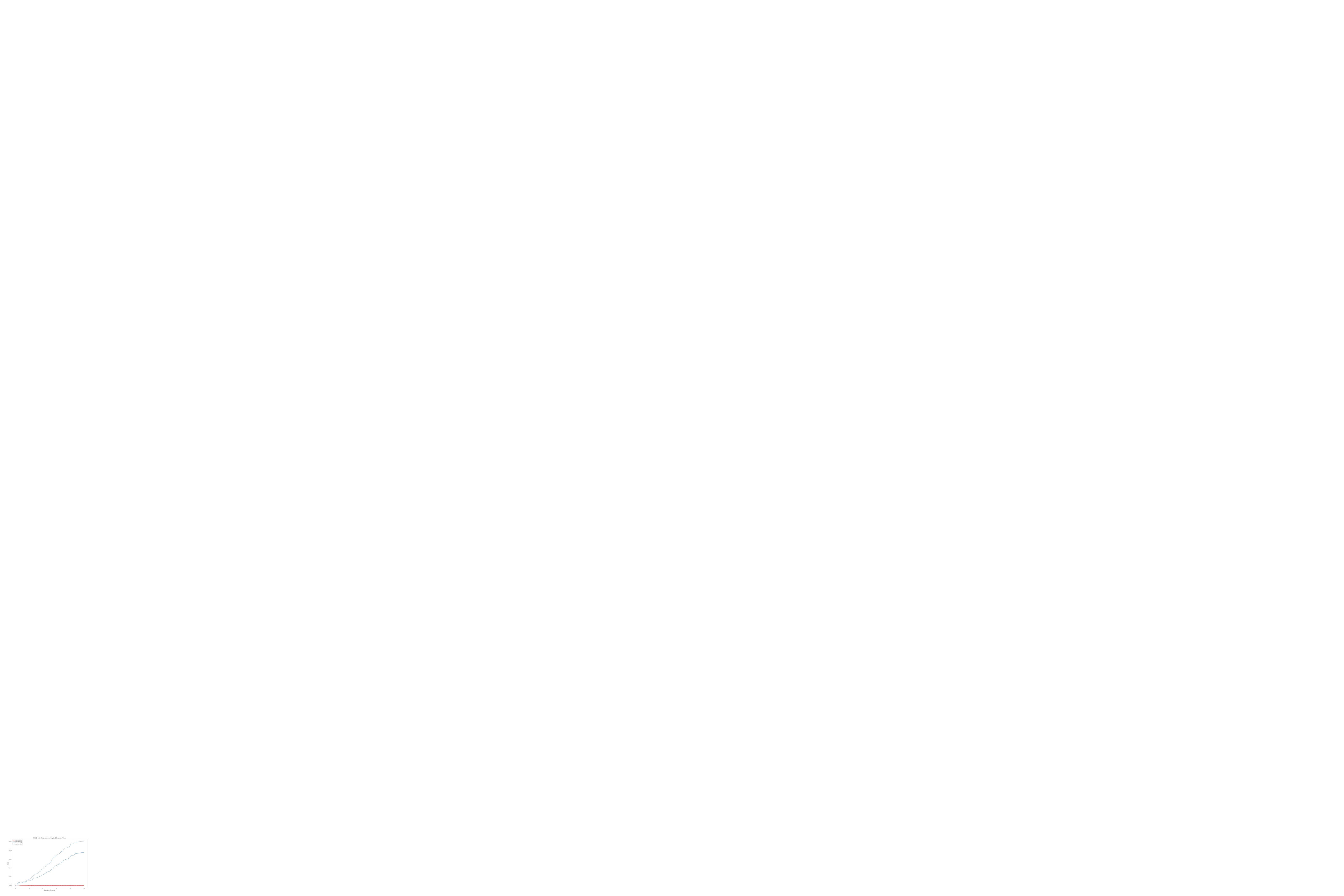}
\caption{Comparison of Algorithm \ref{alg:regression-multicalibrator} (LS) and Gradient Boosting (GB), both using depth 1 regression trees. * indicates termination round of Algorithm \ref{alg:regression-multicalibrator}.}
\label{fig:dt2_folk}
\end{figure}

In Table \ref{tab:ls-gb-times}, we compare the time taken to train $n$ weak learners with Algorithm \ref{alg:regression-multicalibrator} and with scikit-learn's version of Gradient Boosting on our census data. 
Recall that our algorithm trains multiple weak learners per round of boosting, and so comparing the two algorithms for a fixed number of calls to the weak learner is distinct from comparing them for a fixed number of rounds. 
Because models output by Algorithm~\ref{alg:regression-multicalibrator} may be more complex than those produced by Gradient Boosting run for the same number of rounds, we use number of weak learners trained as a proxy for model complexity, and compare the two algorithms holding this measure fixed. 
We see the trend for Gradient Boosting is linear with respect to number of weak learners, whereas Algorithm \ref{alg:regression-multicalibrator} does not follow the same linear pattern upfront. This is due to not being able to fully leverage parallelization of training weak learners in early stages of boosting. At each round, Algorithm~\ref{alg:regression-multicalibrator} calls the weak learner on every large enough level set of the current model, and it is these independent calls that can be easily parallelized. However, in the early rounds of boosting the model may be relatively simple, and so many level sets may be sparsely populated. As the model becomes more expressive over subsequent rounds, the weak learner will be invoked on more sets per round, allowing us to fully utilize parallelizability. 
\begin{table}[H]
    \centering
    \begin{tabular}{|c|c|c|c|c|c|c|c|c|c|}
    \hline
    \multirow{2}{*}{\# Weak Learners} & \multicolumn{3}{c|}{DT(1)} & \multicolumn{3}{c|}{DT(2)} & \multicolumn{3}{c|}{DT(3)}\\
      \cline{2-10} 
      & LS & GB & Faster? & LS & GB & Faster? & LS & GB & Faster?\\ 
    \hline
    \multicolumn{10}{|c|}{50 level sets}\\
    \hline
    100  & 9.11  & 11.97  & \checkmark & 5.86  & 23.01  & \checkmark & 6.88  & 32.92  & \checkmark \\ 
    300  & 18.70 & 35.81  & \checkmark & 14.90 & 69.17  & \checkmark & 15.64 & 102.14 & \checkmark \\ 
    500  & 27.00 & 58.19  & \checkmark & 21.74 & 115.65 & \checkmark & 24.77 & 169.90 & \checkmark \\ 
    1000 & 46.73 & 116.49 & \checkmark & 42.92 & 231.74 & \checkmark & 46.38 & 336.89 & \checkmark \\ 
    \hline
    \multicolumn{10}{|c|}{100 level sets}\\
    \hline 
    100  & 7.18  & 11.97  & \checkmark & 5.29  & 23.01  & \checkmark & 5.06  & 32.92  & \checkmark \\ 
    300  & 13.08 & 35.81  & \checkmark & 13.55 & 69.17  & \checkmark & 14.72 & 102.14 & \checkmark \\ 
    500  & 21.20 & 58.19  & \checkmark & 19.57 & 115.65 & \checkmark & 21.79 & 169.90 & \checkmark \\ 
    1000 & 41.99 & 116.49 & \checkmark & 36.26 & 231.74 & \checkmark & 40.92 & 336.89 & \checkmark \\ 
    \hline
     \multicolumn{10}{|c|}{300 level sets}\\
    \hline 
    100  & 5.87  & 11.97  & \checkmark & 9.18  & 23.01  & \checkmark & 6.54  & 32.92  & \checkmark \\ 
    300  & 13.21 & 35.81  & \checkmark & 17.46 & 69.17  & \checkmark & 11.13 & 102.14 & \checkmark \\ 
    500  & 19.05 & 58.19  & \checkmark & 22.20 & 115.65 & \checkmark & 19.64 & 169.90 & \checkmark \\ 
    1000 & 32.80 & 116.49 & \checkmark & 36.61 & 231.74 & \checkmark & 27.12 & 336.89 & \checkmark \\ 
    \hline
    \end{tabular}
    \caption{Time (in seconds) comparison of Algorithm \ref{alg:regression-multicalibrator} (LS) with fifty level sets and Gradient Boosting to train certain numbers of estimators for various weak learner classes.}
    \label{tab:ls-gb-times}
\end{table}

In Figure \ref{fig:gb_vs_ls}, we measure MSE and MSCE for Algorithm \ref{alg:regression-multicalibrator} and Gradient Boosting over rounds of training on our census data. Again, we note that one round of Algorithm \ref{alg:regression-multicalibrator} is not equivalent to one round of Gradient Boosting, but intend to demonstrate error comparisons and rates of convergence. For the linear regression plots, Gradient Boosting does not reduce either error since combinations of linear models are also linear. As the complexity of the underlying model class increases, Gradient Boosting surpasses Algorithm \ref{alg:regression-multicalibrator} in terms of MSE, though it does not minimize calibration error. 

We notice that Algorithm \ref{alg:regression-multicalibrator}, like most machine learning algorithms, is prone to overfitting when allowed. Future performance hueristics we intend to investigate include validating updates, complexity penalties, and weighted mixtures of updates. 

\begin{figure}[H]
    \centering
    \includegraphics[trim = {15 11 15 11}, clip, width = .45\linewidth]{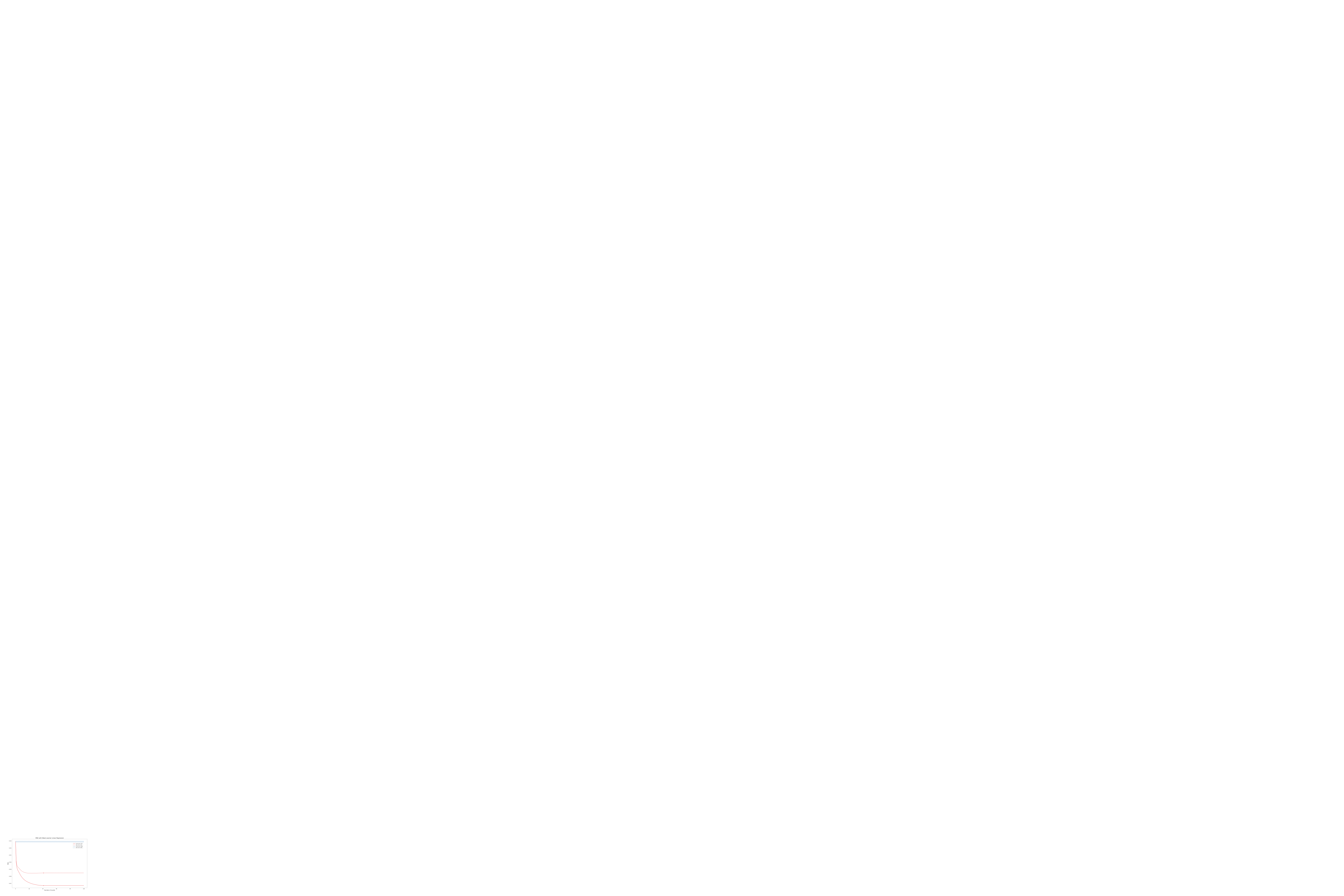}
    \includegraphics[trim = {15 11 15 11}, clip, width = .45\linewidth]{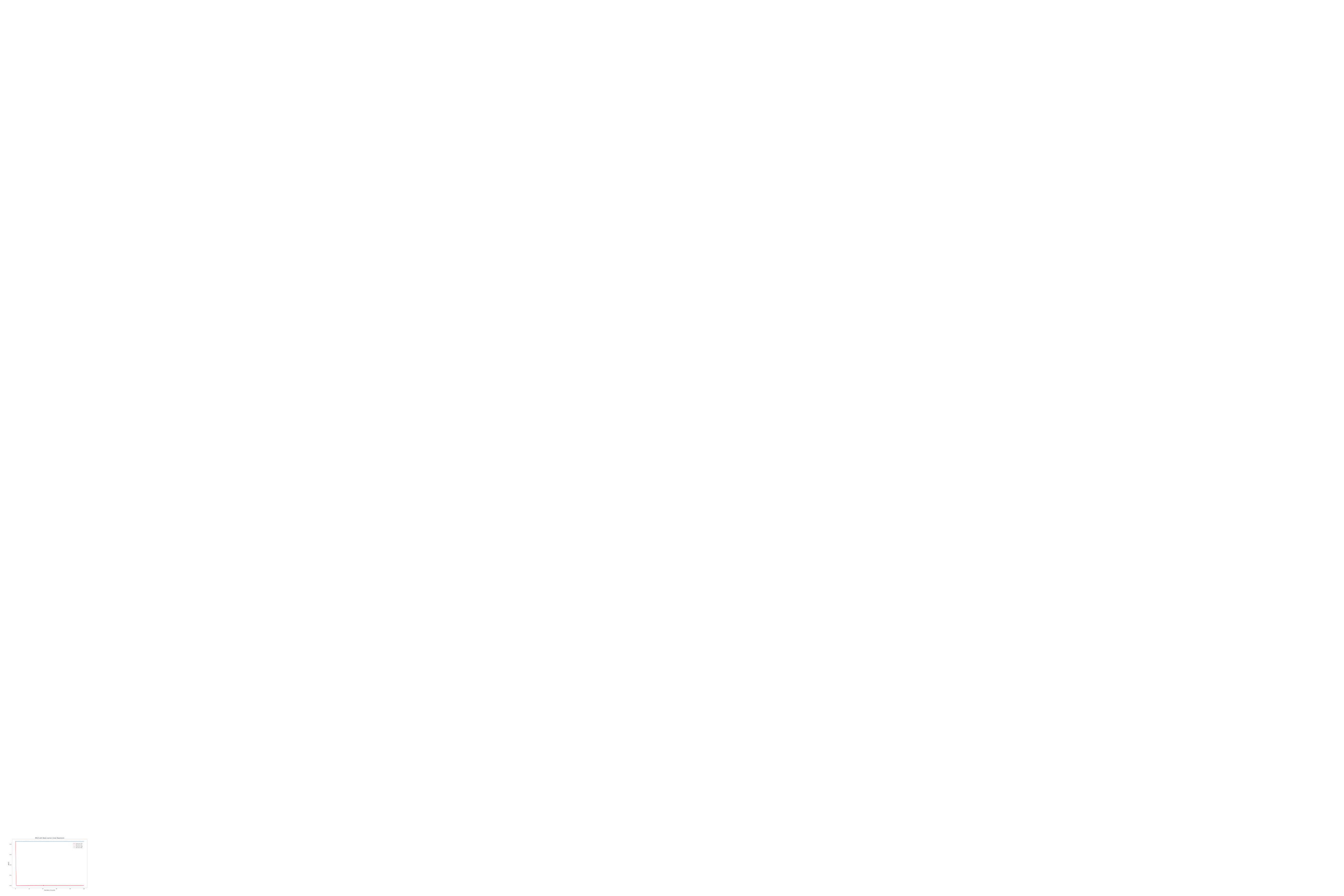}
    \\
    \includegraphics[trim = {15 11 15 11}, clip, width = .45\linewidth]{Experimentation/gb_vs_ls/dt1_mse.pdf}
    \includegraphics[trim = {15 11 15 11}, clip, width = .45\linewidth]{Experimentation/gb_vs_ls/dt1_mse.pdf}
    \\
    \includegraphics[trim = {15 11 15 11}, clip, width = .45\linewidth]{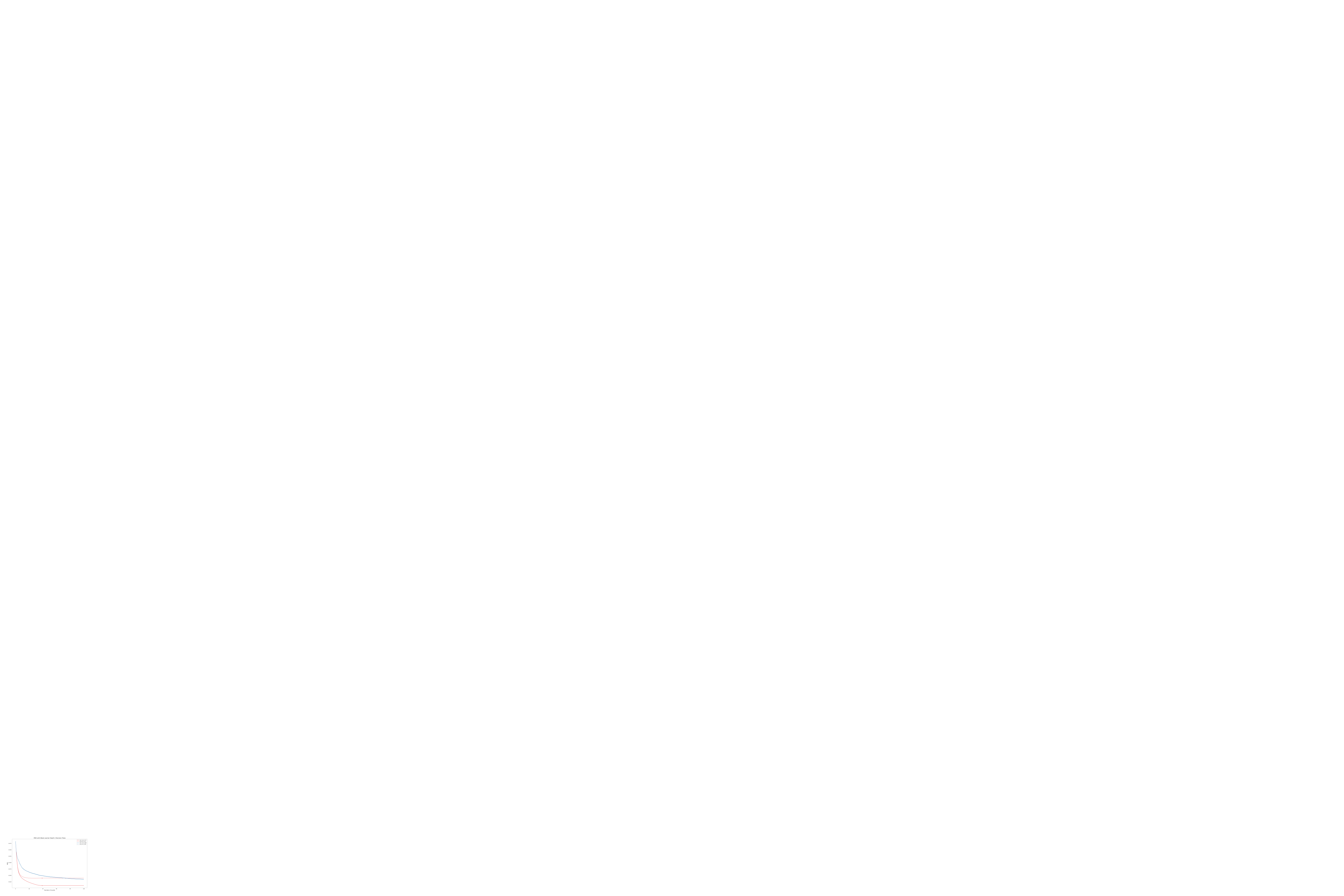}
    \includegraphics[trim = {15 11 15 11}, clip,width = .45\linewidth]{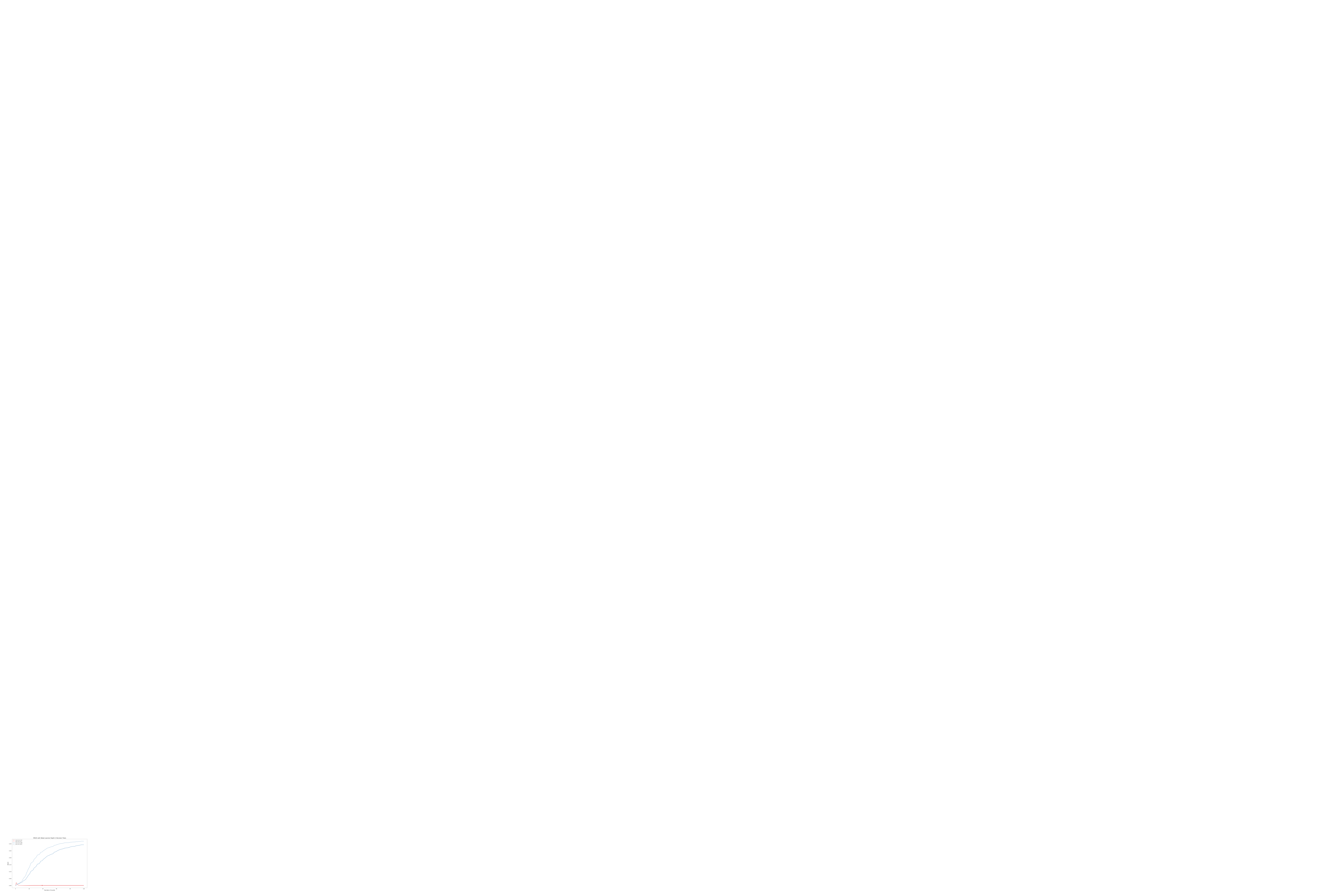}
    \\
    \includegraphics[trim = {15 11 15 11}, clip, width = .45\linewidth]{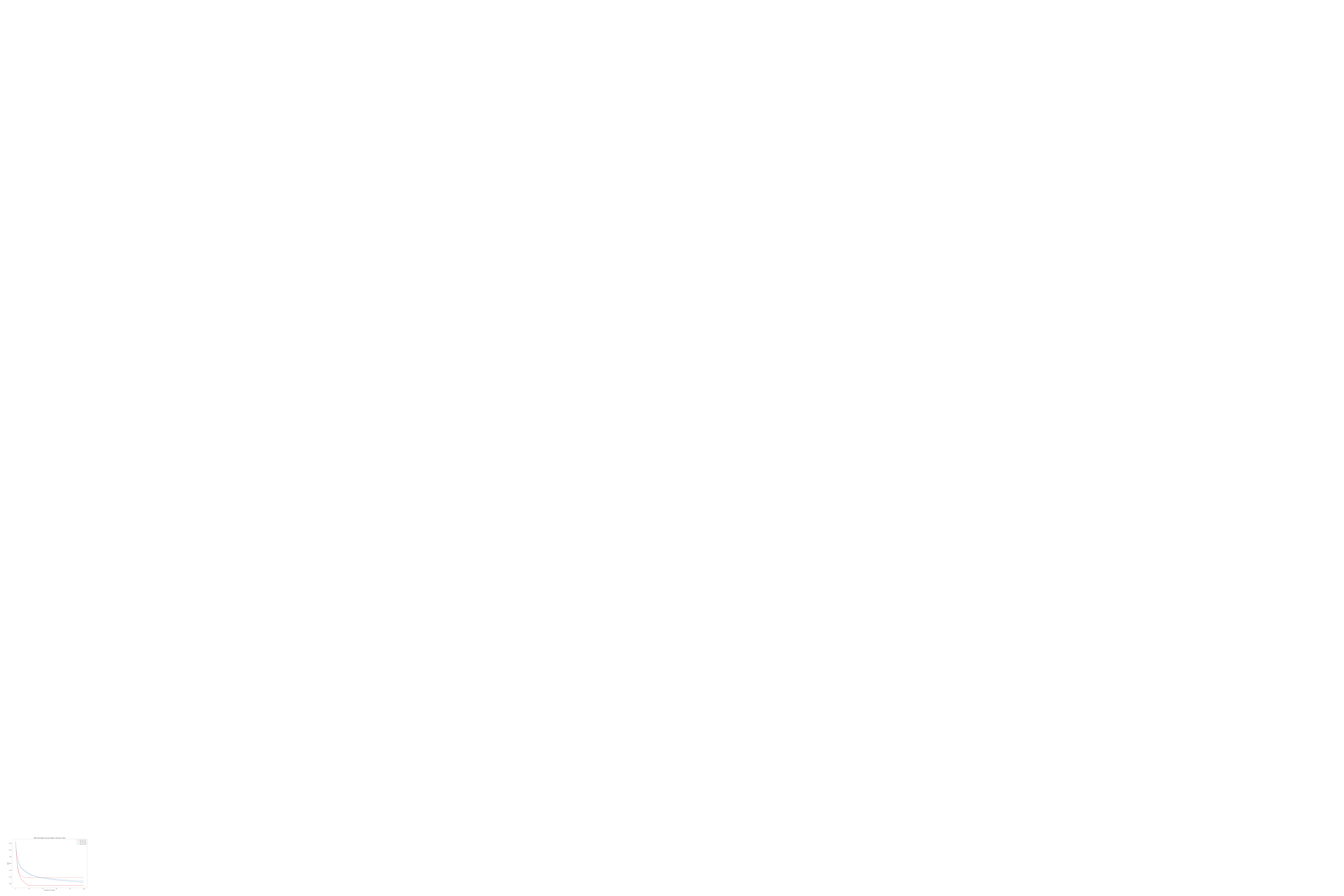}
    \includegraphics[trim = {15 11 15 11}, clip, width = .45\linewidth]{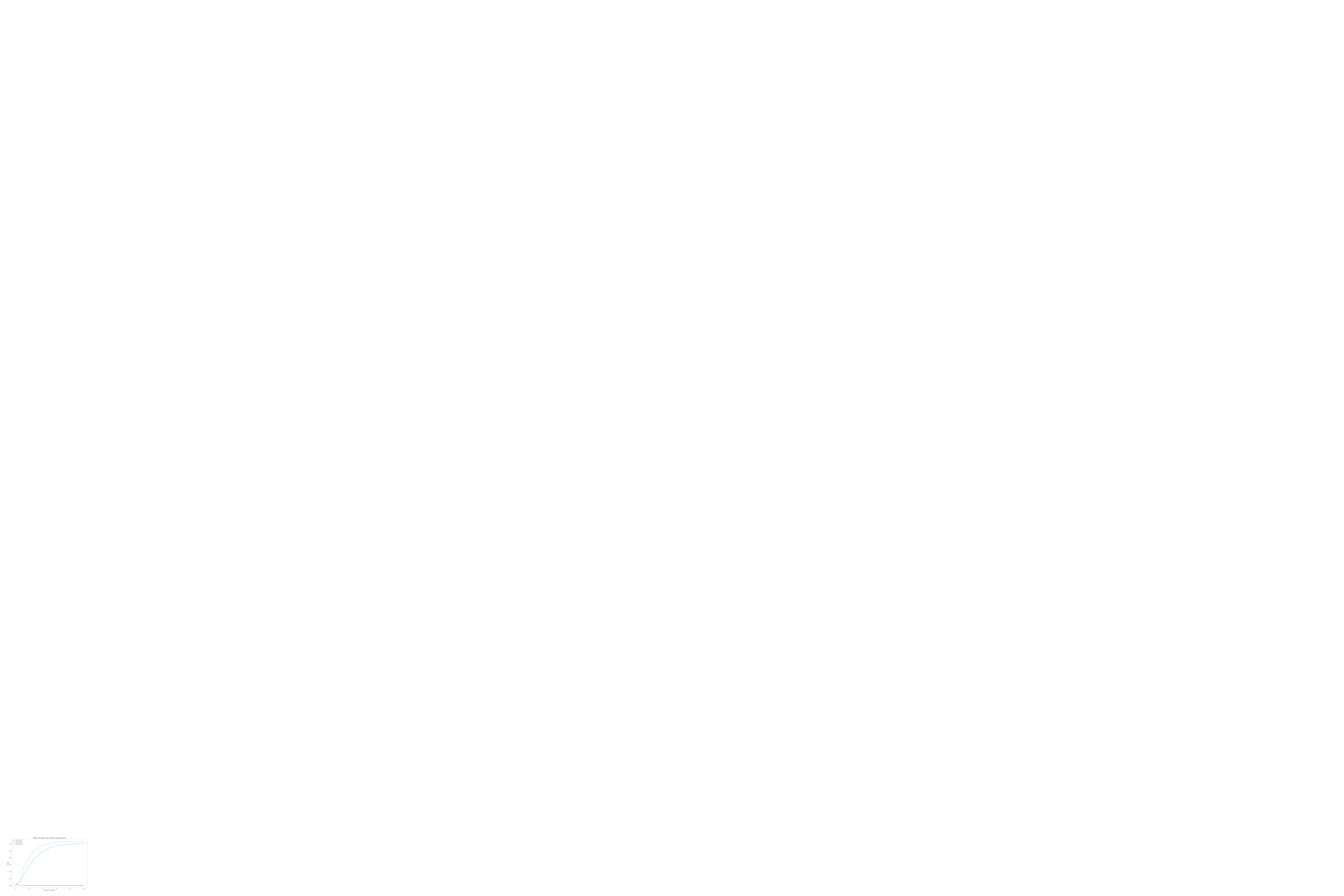}
    \\
    
    \caption{MSE and MSCE comparison of Algorithm \ref{alg:regression-multicalibrator} (LS) and Gradient Boosting (GB) on linear regression and decision trees of varying depths. * indicates termination round of LS and occurs, from top to bottom, at $T = 41, 23, 39, 20$.}
    \label{fig:gb_vs_ls}
\end{figure}


\else
In this section, we study Algorithm \ref{alg:regression-multicalibrator} empirically via an efficient, open-source Python implementation of our algorithm on both synthetic and real regression problems. 
An important feature of Algorithm \ref{alg:regression-multicalibrator} which distinguishes it from traditional boosting algorithms is the ability to parallelize not only during inference, but also during training. Let $f_{t}$ be the model maintained by Algorithm \ref{alg:regression-multicalibrator} at round $t$ with $m$ level sets. Given a data set $X$, $f_{t}$ creates a partition of $X$ defined by $X^{t+1}_{i}= \{x | f_{t}(x) = v_{i}\}$. Since the $X_{i}$ are disjoint, each call $h^{t+1}_{i} = A_{\mathcal{H}}(X^{t+1}_{i})$ can be made on a separate worker followed by a merge and round operation to obtain $\tilde{f}_{t+1}$ and $f_{t+1}$ respectively, as shown in Figure~\ref{fig:diagram}. A parallel inference pass at round $t$ works nearly identically, but uses the historical weak learners $h^{t+1}_{i}$ obtained from training and applies them to each set $X^{t+1}_{i}$. We compare the training times of Algorithm \ref{alg:regression-multicalibrator} with scikit-learn's Gradient Boosting regressor \cite{scikit-learn} in Appendix \ref{ap:experimental}.
\begin{figure}[t]
\centering
\includegraphics[trim = {.1cm .1cm .1cm .1cm}, clip, width=\linewidth]{Experimentation/diagram.png}
\caption{The update process at round $t$ with $m$ level sets during training.}
\label{fig:diagram}
\end{figure}
From Theorem \ref{thm:weaklearnercharacterize}, we know that multicalibration with respect to a hypothesis class $\mathcal{H}$ satisfying our weak learning condition implies Bayes optimality. To visualize the fast convergence of our algorithm to Bayes optimality, we create two synthetic datasets; each dataset contains one million samples with two features. We define the label of these points using two functions, \ref{eq:C_0} and \ref{eq:C_1} (pictured in Figure \ref{fig:bayes-optimal}), with differing complexity and attempt to learn the underlying function with Algorithm \ref{alg:regression-multicalibrator}.
\begin{center}
\begin{figure}[h]
\begin{subfigure}[c]{.475\linewidth}
    \centering
    \includegraphics[trim={2.2cm 2.2cm 2.2cm 2.2cm},clip, width = \linewidth]{Experimentation/simple.png}
\end{subfigure}
\begin{subfigure}[c]{.475\linewidth}
    \centering
    \includegraphics[trim={2.2cm 2.2cm 2.2cm 2.2cm},clip,width = \linewidth]{Experimentation/complex}
\end{subfigure}  
\caption{\ref{eq:C_0} maps $x_{1},x_{2}\in[-2,2]$ to four cylindrical cones symmetric about the origin. \ref{eq:C_1} maps $x_{1},x_{2}\in[-1,1]$ to a hilly terrain from a more complex function.}
\label{fig:bayes-optimal}
\end{figure}
\end{center}
\begin{figure*}[t]
\begin{center}
\includegraphics[scale = .5, width = .97\linewidth]{Experimentation/high-res-simple-table.png}
\caption{Evolution of Algorithm \ref{alg:regression-multicalibrator} learning \ref{eq:C_0}.}
\label{fig:simple-learn}
\end{center}
\end{figure*}
In Figure \ref{fig:simple-learn}, we show an example of Algorithm \ref{alg:regression-multicalibrator} learning \ref{eq:C_0} using a discretization of five-hundred level sets and a weak learner hypothesis class of depth one decision trees. Each image in figure \ref{fig:simple-learn} corresponds to the map produced by Algorithm \ref{alg:regression-multicalibrator} at the round listed in the top of the image. As the round count increases, the number of non-empty level sets increases until each level set is filled, at which point the updates become more granular. The termination round titled `final round' occurs at $T = 199$ and paints an approximate map of \ref{eq:C_0}. The image titled `out of sample' is the map produced on a set of one million points randomly drawn outside of the training sample, and shows that Algorithm \ref{alg:regression-multicalibrator} is in fact an approximation of the Bayes Optimal \ref{eq:C_0}.
In Appendix~\ref{ssec:synthetic-task}, Figure \ref{fig:complex-learn} plots the same kind of progression as Figure \ref{fig:simple-learn}, but with a more complicated underlying function \ref{eq:C_1} using a variety of weak learner classes. We are able to learn this more complex surface out of sample with all base classes except for linear regression, which results in a noisy out-of-sample plot. 

We evaluate the empirical performance of Algorithm \ref{alg:regression-multicalibrator} on US Census data compiled using the Python folktables package \cite{ding2021retiring}. In this dataset, the feature space consists of demographic information about individuals, and the labels correspond to the individual's annual income. We cap income at \$100,000 and then rescale all labels into $[0,1]$. On an 80/20\% train-test split with ~500,000 total samples, we compare the performance of Algorithm \ref{alg:regression-multicalibrator} with Gradient Boosting with two performance metrics: mean squared error (MSE), and mean squared calibration error (MSCE). For less expressive weak learner classes (such as DT(1), see Figure \ref{fig:dt2_folk}), Algorithm \ref{alg:regression-multicalibrator} has superior MSE out of sample compared to Gradient Boosting through one hundred rounds while maintaining significantly lower MSCE, and converges quicker. However, as the weak learning class becomes more expressive (e.g. increasing decision tree depths), Algorithm \ref{alg:regression-multicalibrator} is more prone to overfitting than gradient boosting (see Appendix~\ref{ssec:census-task}, Figure \ref{fig:gb_vs_ls}).
\begin{figure}[H]
\centering
\includegraphics[trim = {15 11 15 11}, clip, width = \linewidth]{Experimentation/gb_vs_ls/dt1_mse.pdf} 
\\
\includegraphics[trim = {15 11 15 11}, clip, width = \linewidth]{Experimentation/gb_vs_ls/dt1_msce.pdf}
\caption{Comparison of Algorithm \ref{alg:regression-multicalibrator} (LS) and Gradient Boosting (GB), both using depth 1 regression trees. * indicates termination round of Algorithm \ref{alg:regression-multicalibrator}.}
\label{fig:dt2_folk}
\end{figure}
\fi

\bibliographystyle{plainnat}
\bibliography{references}

\ifarxiv
\appendix
\section{Generalization Bounds}
\label{sec:generalization}
Our analysis of Algorithm \ref{alg:regression-multicalibrator} assumed direct access to the data distribution $\cD$. In practice, we will run the algorithm on the empirical distribution over a sample of $n$ points $D \sim \cD^n$. In this section, we show that when we do this, so long as $n$ is sufficiently large, both our squared error and our multicalibration guarantees carry over from the empirical distribution over $D$ to the distribution $\cD$ from which $D$ was sampled. Most generalization bounds for multicalibration algorithms (e.g. \cite{hebert2018multicalibration,jung2021moment,jung2022batch,shabat2020sample}) are either stated and proven for finite classes $\cH$, or are proven for algorithms that do not operate as empirical risk minimization algorithms, but instead gain access to a fresh sample of data from the distribution at each iteration, or are proven for hypotheses classes that are fixed independently of the algorithm. We have a different challenge: Like \cite{hebert2018multicalibration,jung2021moment} we study an iterative algorithm whose final hypothesis class is not fixed up front, but implicitly defined as a function of $\cH$. But we wish to study the algorithms as they are used---as empirical risk minimization algorithms---so we do not want our analysis to depend on using a fresh sample of data at each iteration. And unlike the analysis in \cite{jung2022batch}, for us $\cH$ is continuously large (since it is closed under affine transformations), so we cannot rely on bounds that depend on $\log |\cH|$. Instead we give a uniform convergence analysis that depends on the pseudo-dimension of our class of weak learners $\cH$:

\begin{definition}{Pseudodimension}[\cite{pollard2012convergence}]
Let $\cH$ be a class of functions from $\cX$ to $\R$. We say that a set $S = (x_1, \ldots, x_m, y_1, \ldots, y_m) \in \cX^m \times \R^m$ is pseudo-shattered by $\cH$ if for any $(b_1, \ldots, b_m) \in \{0,1\}^m$ there exists $h \in \cH$ such that $\forall i, h(x_i) > y \Longleftrightarrow b_i = 1$ The pseudodimension of $\cH,$ denoted $\pdim(\cH)$ is the largest integer $m$ for which $\cH$ pseudo-shatters some set $S$ of cardinality $m$.
\end{definition}

Although hypotheses in $\cH$ are continuously valued, Algorithm \ref{alg:regression-multicalibrator} outputs functions that have finite range $[1/m]$, and so we can view them as multi-class classification functions. Our analysis will proceed by studying the generalization properties of these multiclass functions, which we will characterize using Natarajan dimension:

\begin{definition}[Shattering for multiclass functions]\cite{natarajan1989learning, shalev2014understanding}
A set $C \subseteq \cX$ is shattered by $\cH$ if there exists two functions $f_0, f_1:C \rightarrow [k]$ such that 
\begin{enumerate}
\item For every $x \in C, f_0(x) \ne f_1(x)$.
\item For every $B \subseteq C$ there exists a function $h \in \cH$ such that 
\[
\forall x \in B, h(x) = f_0(x) \text{ and } \forall x \in C \ B, h(x)=f_1(x).
\]
\end{enumerate}

\end{definition}

\begin{definition}[Natarajan dimension]\cite{natarajan1989learning, shalev2014understanding}
The Natarajan dimension of $\cH$, denoted $\ndim(\cH),$ is the maximal size of a shattered set $C \subseteq \cX$.
\end{definition}

We can then rely the following standard uniform convergence bound for multiclass classification. This statement is slightly modified from the result in Shalev-Schwartz and Ben-David to account for our use of squared error. The result still holds on account of the fact that the Cherhoff bound only relies on the loss function being bounded, and ours is indeed bounded between 0 and 1. 

\begin{theorem}[Multiclass uniform convergence]\cite{shalev2014understanding}
\label{thm:uc}
Let $\epsilon, \delta > 0$ and let $\cH$ be a class of functions $h: \cX \rightarrow [1/k]$ such that the Natarajan dimension of $\cH$ is $d$. Let $\cD \in \Delta (\cX \times [0,1]) $ be an arbitrary distribution and let $D = \{(x_1, y_1), \ldots, (x_n, y_n)\}_{(x_i, y_i) \sim \cD}$ be a sample of $n$ points from $\cD$.  Then for 
\[ n = O\left( \frac{d \log(k) + \log(1/\delta)}{\eps^2}\right),\]
\[
\Pr\left[\max_{h \in \cH}\left\vert \E_{(x,y)\sim \cD} [(y-h(x))^2] - \E_{(x,y) \sim D} [(y-h(x))^2] \right\vert \ge \epsilon]\right ] \le \delta. 
\]
\end{theorem}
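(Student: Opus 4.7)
The plan is to treat this as a standard uniform convergence result for a real-valued loss class, where the role usually played by VC dimension is instead played by Natarajan dimension because our functions are discretely valued in $[1/k]$. First I would define the loss class $\cL_\cH = \{\ell_h : (x,y) \mapsto (y - h(x))^2 \mid h \in \cH\}$. Since $y \in [0,1]$ and $h(x) \in [0,1]$, every $\ell_h$ takes values in $[0,1]$, so single-function deviations are controlled by Hoeffding's inequality. The goal is to turn this pointwise concentration into a uniform bound over all of $\cH$.

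The key structural fact I would invoke is Natarajan's lemma: for a class $\cH$ of functions into a $(k{+}1)$-element range with Natarajan dimension $d$, the growth function satisfies $\Pi_\cH(n) \le n^d (k+1)^{2d}$. Because $\ell_h(x,y)$ is a deterministic function of $h(x)$ and $y$, the effective number of distinct loss patterns $(\ell_h(x_1,y_1),\ldots,\ell_h(x_n,y_n))$ that can be realized on a sample of size $n$ is at most $\Pi_\cH(n)$. I would then run the standard symmetrization argument (ghost sample plus Rademacher/random-swap reduction as in Vapnik--Chervonenkis / Shalev-Shwartz--Ben-David Chapter~29), which reduces the supremum over $\cH$ to a supremum over a finite set of size $\Pi_\cH(2n)$, and then apply Hoeffding plus a union bound to get
\begin{equation*}
\Pr\Bigl[\sup_{h\in\cH}\bigl|\E_\cD[\ell_h] - \E_D[\ell_h]\bigr| \ge \epsilon\Bigr] \le 4\,\Pi_\cH(2n)\,\exp(-n\epsilon^2/8).
\end{equation*}

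Plugging in Natarajan's lemma gives an upper bound of the form $4 (2n)^d (k+1)^{2d} \exp(-n\epsilon^2/8)$. Setting this at most $\delta$ and solving for $n$ yields the stated sample complexity $n = O\bigl((d\log k + \log(1/\delta))/\epsilon^2\bigr)$, where the $\log n$ factor from the polynomial term $(2n)^d$ is absorbed in the usual way (e.g.\ by noting $n \le \exp(n\epsilon^2/16)$ once $n$ is above a mild threshold, or by a slightly more careful inversion).

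I expect the only real subtlety is ensuring that the symmetrization / growth function step is applied to the \emph{loss} class rather than directly to $\cH$, and observing that the map $h \mapsto \ell_h$ does not increase the combinatorial complexity: two hypotheses $h,h'$ that agree on the sample produce identical loss vectors, so the loss growth function is bounded by $\Pi_\cH(n)$. Everything else is bookkeeping, and since the claim is cited as a direct adaptation of a textbook result with the only change being that squared loss replaces 0/1 loss (which matters only insofar as boundedness in $[0,1]$ is preserved so Hoeffding still applies), no new ideas beyond this are needed.
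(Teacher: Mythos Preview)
Your proposal is correct and is precisely the standard argument. The paper does not actually give its own proof of this statement; it cites it directly from Shalev-Shwartz and Ben-David, noting only that the textbook result (stated there for 0/1 loss) carries over because ``the Chernoff bound only relies on the loss function being bounded, and ours is indeed bounded between 0 and 1.'' Your sketch spells out exactly this: the loss class inherits the combinatorial complexity of $\cH$ (bounded via Natarajan's lemma), symmetrization plus Hoeffding over the finite set of realized loss patterns gives the deviation bound, and boundedness of squared loss in $[0,1]$ is the only property of the loss that is used. So you have supplied the proof that the paper merely gestures at, and your one substantive observation---that replacing 0/1 loss by squared loss is harmless because only boundedness matters for the concentration step---is verbatim the paper's justification.
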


Our strategy will be to bound the Natarajan dimension of the class of models that can be output by Algorithm \ref{alg:regression-multicalibrator} in terms of the pseudodimension of the underlying weak learner, then apply the above uniform convergence result. To do so, we will first use the following lemma, which bounds the Natarajan dimension of functions that can be described as post-processings of binary valued-functions from a class of bounded VC-dimension. 

\begin{lemma}\cite{shalev2014understanding}
\label{lem:ndim-bound}
Suppose we have $\ell$ binary classifiers from binary class $\cH_{\textup{bin}}$ and a rule $r: \{0,1\}^\ell \rightarrow [k]$ that determines a multiclass label according to the predictions of the $\ell$ binary classifiers. Define the hypothesis class corresponding to this rule as 
\[
\cH = \{ r(h_1(\cdot), \ldots, h_\ell(\cdot)) : (h_1, \ldots, h_\ell) \in (\cH_\text{bin})^\ell\}.
\]
Then, if $d = \textup{VCdim}(\cH_\textup{bin}),$
\[
\ndim(\cH) \le 3 \ell d \log (\ell d).
\]
\end{lemma}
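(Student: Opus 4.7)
The plan is to bound the Natarajan dimension of $\cH$ by a counting argument using Sauer's lemma applied to the binary base class $\cH_{\textup{bin}}$. The intuition is that any hypothesis in $\cH$ is determined by the $\ell$-tuple of binary predictions that its components make on the evaluation points, so the number of distinct classifiers induced on $m$ points is controlled by the growth function of $\cH_{\textup{bin}}$ raised to the $\ell$-th power, while Natarajan shattering demands exponentially many induced classifiers.

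In more detail, I would proceed as follows. First, suppose $C = \{x_1, \ldots, x_m\} \subseteq \cX$ is a set shattered by $\cH$ in the Natarajan sense, witnessed by $f_0, f_1 : C \to [k]$. By the definition of Natarajan shattering, for each of the $2^m$ subsets $B \subseteq C$ there exists $h^{(B)} \in \cH$ agreeing with $f_1$ on $B$ and with $f_0$ on $C \setminus B$; crucially, distinct $B$'s yield distinct restrictions of $h^{(B)}$ to $C$, because $f_0(x) \ne f_1(x)$ pointwise. Hence the number of distinct functions $C \to [k]$ realizable by $\cH$ is at least $2^m$. Second, each $h \in \cH$ is a function $r(h_1, \ldots, h_\ell)$ of $\ell$ binary hypotheses, so the restriction of $h$ to $C$ is determined by the $m \times \ell$ binary matrix $(h_j(x_i))_{i,j}$. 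The number of distinct such matrices is at most the number of distinct ways to label $C$ using each $h_j$, raised to the $\ell$-th power.

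Third, invoke Sauer's lemma: since $\mathrm{VCdim}(\cH_{\textup{bin}}) = d$, the number of distinct labelings of $C$ by functions in $\cH_{\textup{bin}}$ is at most $(em/d)^d$ (for $m \ge d$). Consequently the number of realizable restrictions $h|_C$ is bounded by $(em/d)^{d\ell}$, and combining with the lower bound $2^m$ from Natarajan shattering gives
\[
2^m \le \left(\frac{em}{d}\right)^{d\ell}, \qquad \text{i.e.,} \qquad m \le d\ell \log_2\!\left(\frac{em}{d}\right).
\]

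The only remaining work is the algebraic manipulation of this implicit inequality to conclude $m \le 3 d\ell \log(d\ell)$. This is the step I expect to be the trickiest to get tight constants on, but it is a standard calculation: using the elementary fact that if $a \le b \log(ca)$ for constants $b, c \ge 1$ then $a = O(b \log(bc))$, one substitutes $a = m$, $b = d\ell$, $c = e/d$, and chases the logarithm to arrive at the claimed bound $\ndim(\cH) \le 3 d\ell \log(d\ell)$. Since the statement is quoted verbatim from Shalev-Shwartz and Ben-David, I would cite their Lemma (on Natarajan dimension of compositions) for the final constant if a self-contained bookkeeping gets unwieldy.
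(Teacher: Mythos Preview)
The paper does not supply its own proof of this lemma; it is quoted directly from \cite{shalev2014understanding} and used as a black box in the proof of Lemma~\ref{lem:final-bound}. Your proposal correctly reconstructs the standard argument behind the textbook result: Natarajan-shattering of $m$ points forces at least $2^m$ distinct restrictions, while the composition structure together with Sauer's lemma caps the number of restrictions at $(em/d)^{d\ell}$, and solving the resulting implicit inequality yields the stated bound. This is exactly the approach taken in the cited source, so there is nothing to contrast.
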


Recall that the VC-dimension of a binary classifier is defined as follows:

\begin{definition}[VC-dimension]\cite{vcdim} Let $\cH$ be a class of binary classifiers $h: \cX \rightarrow \{0,1\}$. Let $S = \{x_1, \ldots, x_m\}$ and let $\Pi_\cH(S) = \{ \left( h(x_1), \ldots, h(x_m) \right): h \in \cH\} \subseteq \{0,1\}^m$. We say that $S$ is shattered by $\cH$ if $\Pi_\cH(S) = \{0,1\}^m$. The Vapnik-Chervonenkis (VC) dimension of $\cH$, denoted $\text{VCdim}(\cH)$, is the cardinality of the largest set $S$ shattered by $\cH$.

\end{definition}

\begin{lemma}
\label{lem:final-bound}
Let $\hboost$ be the class of models output by $\textup{RegressionMulticalibrate}(f, \alpha, A_\cH, \cdot, B)$ (Algorithm \ref{alg:regression-multicalibrator}) for any input distribution $\cD$ and let $d$ be the pseudodimension of its input weak learner class $\cH$. 
\[ 
\ndim\left({\hboost}\right)\le 24(B/\alpha)^3 d \log \left((2B/\alpha)^3 d\right).
\]
\end{lemma}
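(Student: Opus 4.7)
The plan is to view each $f_T \in \hboost$ as a deterministic post-processing of a bounded number of binary classifiers derived from thresholding functions in $\cH$, and then invoke Lemma~\ref{lem:ndim-bound}. First I would count how many weak learners can appear in any $f_T$. By Theorem~\ref{thm:alg-analysis}, the algorithm halts after at most $T \leq 2B/\alpha$ rounds, and in each round it invokes the regression oracle $A_\cH$ once per level set $v \in [1/m]$, where $m = 2B/\alpha$. Hence every $f_T \in \hboost$ is determined by a tuple of at most $Tm$ functions $(h^t_v)_{t \in [T],\, v \in [1/m]}$ drawn from $\cH$, together with the fixed initialization $f_0 = \textup{Round}(f; m)$ (which depends on the input $f$ but not on $\cD$).

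Next, since Algorithm~\ref{alg:regression-multicalibrator} rounds every weak learner's output to the discrete range $[1/m]$, the value $\textup{Round}(h^t_v(x); m) \in [1/m]$ is determined by comparing $h^t_v(x)$ to a fixed set of $m$ thresholds (the midpoints between consecutive elements of $[1/m]$, plus endpoints). Thus $f_T(x)$ can be computed by a fixed rule $r \colon \{0,1\}^\ell \to [1/m]$ that reads in the $\ell := Tm \cdot m \leq (2B/\alpha)^3$ binary values $\{\mathbbm{1}[h^t_v(x) > c_i] : t \in [T],\, v \in [1/m],\, i \in [m]\}$ and simulates the recursion: starting from $f_0(x)$, at each round $t$ it identifies the current level-set label $v = f_{t-1}(x)$ and then uses the $m$ threshold comparisons associated with $h^t_v$ alone to recover $f_t(x)$. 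All branching is determined by previously computed rounded values, so $r$ depends only on its binary inputs.

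Each of the $\ell$ binary classifiers lies in the class $\cH_{\textup{bin}} := \{x \mapsto \mathbbm{1}[h(x) > c] : h \in \cH,\, c \in \mathbb{R}\}$, which by the definition of pseudodimension satisfies $\textup{VCdim}(\cH_{\textup{bin}}) \leq \pdim(\cH) = d$. Applying Lemma~\ref{lem:ndim-bound} with $\ell \leq (2B/\alpha)^3$ and this VC-dimension bound yields
\[
\ndim(\hboost) \;\leq\; 3\ell d \log(\ell d) \;\leq\; 3(2B/\alpha)^3 \, d \, \log\!\bigl((2B/\alpha)^3 d\bigr) \;=\; 24(B/\alpha)^3 d \log\!\bigl((2B/\alpha)^3 d\bigr),
\]
which is exactly the claimed bound. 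I do not expect a major obstacle here; the one step to check carefully is the bookkeeping that the rule $r$ is genuinely a function of only the $\ell$ bits and the fixed $f_0$, which follows because every conditional in the algorithm's execution on a fixed $x$ branches on rounded outputs that were themselves determined by those same $\ell$ comparisons.
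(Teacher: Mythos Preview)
Your proposal is correct and takes essentially the same approach as the paper: represent the output as a fixed rule over at most $Tm^2 \le (2B/\alpha)^3$ binary thresholds of the weak learners, bound the VC dimension of that threshold class by $\pdim(\cH)$, and apply Lemma~\ref{lem:ndim-bound}. The only wrinkle is your closing sentence: the first branch in the recursion reads $f_0(x)$, which is \emph{not} determined by the $\ell$ threshold bits, so the rule $r$ as you describe it is not literally a function $\{0,1\}^\ell \to [1/m]$. This is harmless because $f_0$ is a single fixed function and hence contributes nothing to the growth function on any finite sample; the paper instead handles it by throwing in $m$ additional threshold bits encoding $f_0$ and still arrives at the same bound $\ell \le (2B/\alpha)^3$.
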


\begin{proof}

Let $m$ be defined (as in $RegressionMulticalibrate(f, \alpha, A_\cH, \cD, B)$) to be $2B/\alpha$. Because our models are always rounded to the nearest value in $[1/m]$, we can think of the model $f_t$ generated in every round of the algorithm multiclass classification problems over $m$ classes. We will show that our final model can be written as a decision rule that maps the outputs of some $\ell$ Boolean classifiers  to $[1/m]$, and that these Boolean classifiers have VC dimension that is bounded by the pseudodimension of the weak learner class. Then, we will apply Lemma \ref{lem:ndim-bound} to get an upper bound on the Natarajan dimension of the class of models in terms of $\alpha, B$, and the pseudodimension of the input weak learner class $\cH$.

Consider the initial round of the algorithm. We can convert our (rounded) initial regressor $f_0$ to a series of $m$ Boolean thresholdings $g_v$ which return 1 when $f_0(x) \ge v$:

\[
g_{v}^{0} = 
\begin{cases}
    1 &\text{if } f_0(x) \ge v,\\ 
    0 &\text{otherwise.}
    \end{cases}.
\]

These $m$ Boolean thresholdings can then be mapped back to the original prediction over $[1/m]$ using a decision rule $r: \{0,1\}^m \rightarrow [1/m]$ which picks the largest of the thresholds that evaluates to 1, and assigns that index to the prediction:

\[
r_0(\{g_v^0\}_{v \in [1/m]})(x) = \arg\max_{i \in [1/m]} i \ind[g_v(x) = 1].
\]

Note that since our initial predictor $f_0$ was already rounded to take values in $[1/m]$, the largest $v$ such that $f_0(x) \ge v$ will be exactly $f_0(x)$, so $r_0$ is exactly equivalent to $f_0$. Similarly, at round $t+1$ of $\textup{RegressionMulticalibrate}(f, \alpha, A_\cH, \cD, B)$, we will show that the model $f_{t+1}$ can be written as a decision rule $r_{t+1}$ over $m + (t+1) m^2$ binary classifiers $g$, where 
\[
g_{v,i}^{t} = 
    \begin{cases}
    1 &\text{if } h_v^{t}(x) \ge i - 1/(2m),\\ 
    0 &\text{otherwise.}
    \end{cases},
\] 

Here, the thresholds measure halfway between each level set, as $h_v^{t}(x)$ has yet to be rounded to the nearest level set. We can write a decision rule that maps these thresholds to classifications over $[1/m]$:
\[
r_{t+1}\left(r_t, \{g_{v,i}^{t+1}\}_{i,v \in [1/m]}\right)(x) = \sum_{v \in [1/m]} \ind[r_t(x)=v] \arg\max_{i \in [1/m]} \left( i \cdot \ind[g_{v,i}^{t+1}(x) = 1] \right),
\]

Now, we need to show that this decision rule evaluated at round $t$ is equivalent to $f_t$. We proceed inductively. For our base case, we have already argued that our initial decision rule $r_0$ is equivalent to the classifier $f_0$. Now, say that we have decision rule $r_t$ over binary classifiers $g$ that is equivalent to model $f_t$. Then, we can write 

\begin{align*}
r_{t+1}\left(r_t, \{g_{v,i}^{t+1}\}_{i,v \in [1/m]}\right)(x) &= \sum_{v \in [1/m]} \ind[r_t(x)=v] \arg\max_{i \in [1/m]} \left( i \cdot \ind[g_{v,i}^{t+1}(x) = 1] \right), \\
&= \sum_{v \in [1/m]} \ind[f_t(x) = v] \arg\max_{i \in [1/m]} \left( i \cdot \ind[g_{v,i}^{t+1}(x) = 1] \right) \\
&= \sum_{v \in [1/m]} \ind[f_t(x) = v] \arg\max_{i \in [1/m]} \left( i \cdot \ind[h_v^{t+1}(x) \ge i - 1/(2m)] \right) \\
& = \sum_{v \in [1/m]} \ind[f_t(x) = v] \round(h_v^{t+1}(x)) \\
& = f_{t+1}(x),
\end{align*}

\noindent where the second line comes from the inductive hypothesis and the second to last line's equality comes from the fact that the largest $i$ such that $h_v^{t+1}(x) - 1/(2m) \ge i$ will be the exact rounded prediction of $h_v^{t+1}(x)$.

Now, we need to show that at round $t+1$, the decision rule is a decision rule over $m + (t+1)m^2$ binary classifiers. Note that our initial decision rule $r_0$ has $m = m + 0 \cdot m^2$ binary classifiers. Say that at round $t$ we have a decision rule $r_t$ over $m + tm^2$ classifiers. In the following round, we build $m^2$ new Boolean classifiers $g_v,i$ for $v,i \in [1/m]$. So, at round $t+1$ we have $m + tm^2 + m^2 = m + (t+1)m^2$ classifiers total. 

From Theorem \ref{thm:alg-analysis}, we know that Algorithm \ref{alg:regression-multicalibrator} halts after at most $T \le 2B/\alpha$ rounds, at which point it outputs model $f_{T-1}$. So, we can rewrite $f_{T-1}$ as a decision rule $r_{T-1}$ composed of at most $m + (T-1)m^2 < Tm^2$ Boolean models. Plugging in our bound for $T$ and definition of $m$, this gives us a decision rule $r_{T-1}$ composed of at most $\left(\frac{2B}{\alpha}\right)^3$ Boolean classifiers. 

Let $\cG$ be the class of Boolean threshold functions over $\cH$, i.e. functions $g: \cX \rightarrow \{0,1\}$ such that 

\[
g(x) = 
\begin{cases}
1 & h(x) \ge i \\
0 & h(x) < i,
\end{cases}
\]

\noindent for some $h \in \cH$ and $i \in \R$. Say that the VC-dimension of $\cG$ is $d'$.
Then, applying lemma \ref{lem:ndim-bound}, it follows that 

\begin{align*}
\ndim(\hboost) &\le 3 \left(\frac{2B}{\alpha}\right)^3 d' \log \left(\left(\frac{2B}{\alpha}\right)^3 d'\right), \\
&= 24 \left(\frac{B}{\alpha}\right) d' \log \left( \left(\frac{2B}{\alpha}\right)^3 d' \right).
\end{align*}

Now, it remains to show that we can bound the VC-dimension of these thresholding functions by the pseudodimension of the weak learner class $\cH$. Note that $\cG$ as we have defined it above is a richer hypothesis class than the actual class of thresholding functions used in the above analysis, because it can threshold at any value in $\R$ rather than being restricted to $[1/m]$. Thus, its VC dimension can only be greater than the VC dimension of the class of threshold functions over $\cH$ restricted to $[1/m]$, and hence an upper bound on the VC dimension of $\cG$ in terms of the pseudodimension of $\cH$ will also be an upper bound on the VC dimension of the restricted class of threshold functions.

Let $d$ be the pseudodimension of $\cH$, and say that $d < d'$. By the definition of VC-dimension, $\{0,1\}^{d+1}$ must be shattered by $\cG$. I.e., for any set of $d+1$ points $x_1, \ldots, x_{d+1} \in \cX$ with arbitrary labels $b_1, \ldots, b_{d+1}$, there is some hypothesis $g \in \cG$ that realizes those labels on $(x_1, \ldots, x_{d+1})$. Consider the function $g$ that, given the $d+1$ points in $\cX$, realizes the labels $b_1, \ldots, b_{d+1}$. By the construction of $\cG$, $g$ is a thresholding of some function $h \in \cH$ at some point $i$. So, there is be some $i \in \R$ such that $h(x_i) > i \Rightarrow b_i = 1$ and such that $b_i = 1 \Rightarrow h(x_i) > i$. But this means that $\{0,1\}^{d+1}$ is pseudo-shattered by $\cH,$ and thus the pseudodimension of $\cH$ is not $d$. Thus, it cannot be the case that $d < d',$ and hence $d' \le d$, i.e. the VC dimension of $\cG$ is bounded above by the pseudodimension of $\cH$. Plugging this bound into the above bound on the Natarajan dimension gives us that

\begin{align*}
\ndim(\hboost) &\le 24 \left(\frac{B}{\alpha}\right) d' \log \left( \left(\frac{2B}{\alpha}\right)^3 d' \right),\\
&\le 24 \left(\frac{B}{\alpha}\right) d \log \left( \left(\frac{2B}{\alpha}\right)^3 d \right).
\end{align*}
\end{proof}

Now, we can state the following uniform convergence theorem for our final model. 
\begin{theorem}[Squared Error Generalization for Algorithm \ref{alg:regression-multicalibrator}.]
\label{thm:alg-convergence}
Let $\epsilon, \delta, \alpha, B > 0$. Let $\hboost$ be the class of models that can be output by $\textup{RegressionMulticalibrate}(f, \alpha, A_\cH, \cdot, B)$ (Algorithm \ref{alg:regression-multicalibrator}) for any input distribution $\cD$ and let $d$ be the pseudodimension of its input weak learner class $\cH$. Let $D = \{(x_1, y_1), \ldots, (x_n, y_n)\}_{(x_i, y_i) \sim \cD}$ be a sample of $n$ points drawn i.i.d. from $\cD$.  Then if 
\[ n = O\left(\frac{dB^3\log^2(dB/\alpha)}{\alpha^3\epsilon^2} + \frac{\log(1/\delta)}{\epsilon^2} \right)\] 
\[
\Pr\left[\max_{h \in \hboost}\left\vert \E_{(x,y)\sim \cD} [(y-h(x))^2] - \E_{(x,y) \sim D} [(y-h(x))^2] \right\vert \ge \epsilon]\right ] \le \delta. 
\]
\end{theorem}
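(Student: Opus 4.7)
The plan is to combine the two ingredients already developed in this appendix: the Natarajan dimension bound of Lemma \ref{lem:final-bound} and the multiclass uniform convergence bound of Theorem \ref{thm:uc}. Since $\hboost$ is closed under the rounding operation employed by Algorithm \ref{alg:regression-multicalibrator}, every $f \in \hboost$ maps into the discrete range $[1/m]$ with $m = 2B/\alpha$. Thus $\hboost$ can be viewed as a class of multiclass classifiers over $k = m = 2B/\alpha$ classes, and Theorem \ref{thm:uc} applies directly once we have a handle on $\ndim(\hboost)$.

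The first step is to invoke Lemma \ref{lem:final-bound}, which gives
\[
\ndim(\hboost) \;\le\; 24\,(B/\alpha)^3\, d\,\log\!\bigl((2B/\alpha)^3 d\bigr),
\]
where $d = \pdim(\cH)$. The second step is to plug this bound, together with $k = 2B/\alpha$, into the sample complexity expression from Theorem \ref{thm:uc}, namely
\[
n \;=\; O\!\left(\frac{\ndim(\hboost)\,\log k + \log(1/\delta)}{\epsilon^2}\right).
\]
Substituting yields a numerator of the form
\[
(B/\alpha)^3\, d\,\log\!\bigl((B/\alpha)^3 d\bigr)\,\log(B/\alpha) \;+\; \log(1/\delta),
\]
and the product of the two logarithmic factors is bounded by $O(\log^2(dB/\alpha))$, which gives exactly the claimed rate. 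Because the squared loss is bounded in $[0,1]$ on our rounded hypotheses, the uniform convergence conclusion of Theorem \ref{thm:uc} transfers without modification.

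The proof is essentially a one-line reduction, so there is no real technical obstacle; the only care required is the bookkeeping on the logarithmic factors to ensure that the compound $\log\!\bigl((B/\alpha)^3 d\bigr)\cdot \log(B/\alpha)$ absorbs neatly into $\log^2(dB/\alpha)$, and the verification that applying Theorem \ref{thm:uc} to $\hboost$ (rather than to $\cH$ itself) is justified, since $\hboost$ is precisely the class whose members the algorithm can actually output. All the heavy lifting was done in Lemma \ref{lem:final-bound}, where the recursive composition of the algorithm's level-set updates was unrolled into a decision rule over $O((B/\alpha)^3)$ threshold functions derived from $\cH$, and the VC dimension of these thresholds was bounded by the pseudodimension of $\cH$.
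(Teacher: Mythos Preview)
Your proposal is correct and follows exactly the same approach as the paper, which proves the theorem in a single sentence: ``This follows directly from Theorem \ref{thm:uc} and the bound on the Natarajan dimension in Lemma \ref{lem:final-bound}.'' Your elaboration of the logarithmic bookkeeping (absorbing $\log((B/\alpha)^3 d)\cdot\log(B/\alpha)$ into $O(\log^2(dB/\alpha))$) is accurate and simply makes explicit what the paper leaves implicit.
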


\begin{proof}
This follows directly from Theorem \ref{thm:uc} and the bound on the Natarajan dimension in Lemma \ref{lem:final-bound}.
\end{proof}

We also would like to know that our multicalibration guarantees are generalizable. Rather than doing a bespoke analysis here, we can rely on the connection that we have established between failure of multicalibration and ability to improve squared error and argue that if the final hypothesis output by the algorithm was not multicalibrated with high probability then it would be possible to improve its squared error out-of-sample. Thus, by our previous generalization result for squared error, it would be possible to improve the squared error in-sample as well, giving us a contradiction.

\begin{theorem}[Multicalibration generalization guarantee]
Let $\epsilon, \delta, \alpha, B > 0$ and consider the model $f_{T-1}$ output by $\textup{RegressionMulticalibrate}(f, \alpha, A_\cH, D, B)$ for some sample $D$ of $n$ points drawn i.i.d. from distribution $\cD$ such that 
\[ n = O\left(\frac{dB^3\log^2(dB/\alpha)}{\alpha^3\epsilon^2} + \frac{\log(1/\delta)}{\epsilon^2} \right)\] 
Then if $\epsilon \leq \frac{\alpha}{4B}$, with probability greater than or equal to $1-2\delta$ it follows that $f_{T-1}$ is $2\alpha$-approximately multicalibrated with respect to the distribution $\cD$.
\end{theorem}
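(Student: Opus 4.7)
The proof is by contrapositive. Assuming $f_{T-1}$ is not $2\alpha$-approximately multicalibrated on $\cD$, I would use the first direction of Theorem~\ref{thm:improve-multical-equivalent} to produce a one-step update of $f_{T-1}$ with strictly smaller squared error on $\cD$, then transfer this improvement to the empirical distribution $D$ via squared-error generalization, finally contradicting the halting condition of Algorithm~\ref{alg:regression-multicalibrator} at round $T$.

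The first step is to set up two uniform-convergence events. The first, by Theorem~\ref{thm:alg-convergence} applied directly to $\hboost$, guarantees with probability at least $1-\delta$ that $|\E_D[(g-y)^2] - \E_\cD[(g-y)^2]| \leq \epsilon$ uniformly over $g \in \hboost$. The second is an analogous uniform-convergence statement over the class $\tilde\hboost^+$ of pre-rounded one-step updates $\tilde f = \sum_{v\in [1/m]} \mathbbm 1[g(x)=v]\, h_v(x)$ with $g \in \hboost$ and $h_v \in \cH_B$. The decision-rule counting argument in the proof of Lemma~\ref{lem:final-bound} extends naturally to $\tilde\hboost^+$: adding one more level of weak learners adds at most $m \cdot \pdim(\cH)$ dimensions, which is polynomial in $d$ and $B/\alpha$, so an analogue of Theorem~\ref{thm:alg-convergence} (with the loss range $(\sqrt B+1)^2$ absorbed into the constants hidden in the stated $n$) gives uniform convergence on $\tilde\hboost^+$ with probability at least $1-\delta$. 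A union bound then gives both events with probability at least $1-2\delta$.

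Next, under the contrapositive hypothesis, the first direction of Theorem~\ref{thm:improve-multical-equivalent} yields, for each level set $v$, a function $h_v^\cD \in \cH_B$ whose aggregation $\tilde f_T^\cD := \sum_v \mathbbm 1[f_{T-1}(x)=v]\, h_v^\cD(x) \in \tilde\hboost^+$ satisfies, by the same calculation as in the proof of Theorem~\ref{thm:alg-analysis} with $2\alpha$ in place of $\alpha$,
\[
\E_\cD\!\left[(f_{T-1}-y)^2\right] - \E_\cD\!\left[(\tilde f_T^\cD - y)^2\right] \;\geq\; \frac{2\alpha}{B}.
\]
The two uniform-convergence events then transport this gap to $D$:
\[
\E_D\!\left[(f_{T-1}-y)^2\right] - \E_D\!\left[(\tilde f_T^\cD - y)^2\right] \;\geq\; \frac{2\alpha}{B} - 2\epsilon.
\]
Now let $\tilde f_T$ denote the pre-rounded update the algorithm actually selects on $D$. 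Since each $h_v^D$ is chosen to minimize $\E_D[(h-y)^2 \mid f_{T-1}(x)=v]$ over $h\in\cH$, we have $\E_D[(\tilde f_T - y)^2] \leq \E_D[(\tilde f_T^\cD - y)^2]$. Combining with the one-sided rounding inequality $\E_D[(f_T - y)^2] \leq \E_D[(\tilde f_T - y)^2] + 1/m$ from the proof of Theorem~\ref{thm:alg-analysis} and using $m = 2B/\alpha$, one obtains
\[
\E_D\!\left[(f_{T-1}-y)^2\right] - \E_D\!\left[(f_T - y)^2\right] \;\geq\; \frac{2\alpha}{B} - 2\epsilon - \frac{1}{m} \;=\; \frac{3\alpha}{2B} - 2\epsilon \;\geq\; \frac{\alpha}{2B},
\]
where the final inequality uses the hypothesis $\epsilon \leq \alpha/(4B)$. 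This contradicts the halting criterion of Algorithm~\ref{alg:regression-multicalibrator}, which requires $\E_D[(f_{T-1}-y)^2] - \E_D[(f_T - y)^2] < \alpha/(2B)$, and so completes the argument.

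The main obstacle is the second uniform-convergence event on the pre-rounded class $\tilde\hboost^+$: its elements are real-valued functions with range $[-\sqrt B, \sqrt B]$ rather than $[0,1]$, so the pseudodimension analysis of Lemma~\ref{lem:final-bound} must be extended by one more layer of weak learners per level set and the squared-loss range must be absorbed into the sample-complexity constants. Going through $\tilde\hboost^+$ rather than a rounded one-step class is crucial, because the algorithm's greedy optimality is naturally stated in terms of the pre-rounded update $\tilde f_T$, and there is no clean inequality in the needed direction between the empirical squared error of a pre-rounded function and that of its rounding. Once uniform convergence on $\tilde\hboost^+$ is established, the choice $m = 2B/\alpha$ together with $\epsilon \leq \alpha/(4B)$ is exactly tight enough to close the accounting.
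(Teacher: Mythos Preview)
Your overall strategy---assume $f_{T-1}$ is not $2\alpha$-multicalibrated on $\cD$, build a one-step update whose squared error on $\cD$ drops by $\Theta(\alpha/B)$, transfer the drop to $D$ by uniform convergence, and contradict the halting condition---is exactly the paper's. The difference is where the uniform-convergence step is placed. The paper works with the \emph{rounded} update $f'_T = \textup{Round}\big(\sum_v \mathbbm 1[f_{T-1}=v]\,A_\cH(\cD\mid f_{T-1}=v);\,m\big)$, which takes values in $[1/m]$ and therefore lies in the same structural class that Lemma~\ref{lem:final-bound} already bounds; Theorem~\ref{thm:alg-convergence} then applies verbatim to both $f_{T-1}$ and $f'_T$ (accounting for the $2\delta$), and the final comparison is closed via $\E_D[(f_{T-1}-y)^2]-\E_D[(f'_T-y)^2]\le \E_D[(f_{T-1}-y)^2]-\E_D[(f_T-y)^2]<\alpha/(2B)$, argued from oracle optimality on $D$. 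No new generalization lemma over a pre-rounded class is invoked.

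Your route through $\tilde\hboost^+$ is motivated by a legitimate worry---oracle optimality is stated at the pre-rounded level, and rounding need not preserve the order of squared errors---but it introduces two concrete gaps. First, Theorem~\ref{thm:improve-multical-equivalent} does not give $h_v^\cD\in\cH_B$: the miscalibration witness $h$ is in $\cH_B$, but the improving function $h'=v+\eta h$ from Lemma~\ref{lem:multicalToImprovement} lies only in $\cH$, with no a~priori norm bound (the norm-controlled refinement in Section~\ref{sec:constrained} gives $\max_x h'(x)^2\le(1+\sqrt B/\alpha_v)^2$, and the per-level-set miscalibration $\alpha_v$ can be arbitrarily small). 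Hence $\tilde f_T^\cD$ need not belong to your $\tilde\hboost^+$, and your second uniform-convergence event does not cover it. Second, even if you enlarge $\tilde\hboost^+$ to accommodate such functions, the squared loss is bounded by $(\sqrt{B'}+1)^2$ rather than $1$; this factor cannot be ``absorbed into the constants,'' because the theorem's sample size already carries explicit polynomial dependence on $B$ and $\alpha$, and an extra $B^2$ (or worse) in the $1/\epsilon^2$ term does not fit the stated $n$. The paper's choice to stay inside the $[1/m]$-valued class throughout is precisely what sidesteps both issues.
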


\begin{proof}
Let $D = \{(x_1,y_1), \ldots, (x_n, y_n)\}_{(x_i, y_i) \sim \cD}$. Consider the model $f_{T-1}$ output by \\ $\textup{RegressionMulticalibrate}(f, \alpha, A_\cH, D, B)$, and recall that within the run of the algorithm there was also a model $f_T$ defined in the final round. Say that model $f_{T-1}$ is not $2\alpha$-approximately multicalibrated with respect to $\cH_B$ and the true distribution $\cD$.

Since the algorithm running on the sample halted, it must have been that the model in the final round improved in squared error by less than $\alpha/(2B)$ when measured with respect to the sample $D$:

\[\E_{(x,y)\sim D}[(f_{T-1} - y)^2] - \E_{(x,y)\sim D}[(f_{T} - y)^2] \le (\alpha/2B).\]

Consider what happens if we run the algorithm again, but with $f_{T-1}$ as its initial model and now with the underlying distribution as input rather than the sample of $n$ points. Let $f'_T$ be the model found in the first round of running this process $\textup{RegressionMulticalibrate}(f_{T-1}, \alpha, A_\cH, \cD, B)$. Since $f_{T-1}$ is not $2\alpha-$approximately multicalibrated with respect to $\cD$ and $\cH_B$, then by an identical argument as in the proof of Theorem \ref{thm:alg-analysis}, it it must be that a single round of the algorithm improves the squared error on $\cD$ by at least $\alpha/B$. Thus, $\E_{(x,y)\sim \cD}[(f_{T-1} - y)^2] - \E_{(x,y)\sim \cD}[(f'_{T} - y)^2] >  \alpha/B$.

We know from our previous convergence bound, Theorem \ref{thm:alg-convergence}, that with probability $1- \delta, \vert \E_{(x,y)\sim D}[(f'_T - y)^2] - \E_{(x,y)\sim \cD}[(f_T - y)^2] \vert < \epsilon$. So, $f'_T$ must with high probability also improve on the sample $D$:

\begin{align*}
\frac{\alpha}{B} &< \E_{(x,y)\sim \cD}[(f_{T-1} - y)^2] - \E_{(x,y)\sim \cD}[(f'_{T} - y)^2] \\
& < \E_{(x,y)\sim \cD}[(f_{T-1} - y)^2] - \E_{(x,y)\sim D}[(f'_{T} - y)^2] + \epsilon &\text{(with probability $\ge 1 - \delta$) } \\
& < \E_{(x,y)\sim D}[(f_{T-1} - y)^2] - \E_{(x,y)\sim D}[(f'_{T} - y)^2] + 2\epsilon &\text{(with probability $\ge 1 - 2\delta$) } \\
& < \frac{\alpha}{2B} + 2\epsilon,
\end{align*}

where the last line comes from the fact that the error of $f'_{T}$ on $D$ cannot be less than the error of $f_T$ on $D$, or else the regression oracle would have found it. Now we have a contradiction: since we have set $\epsilon \le \frac{\alpha}{4B}$, 

\begin{align*}
\frac{\alpha}{B} &< \frac{\alpha}{2B} + 2\frac{\alpha}{4B} \\
    &= \frac{\alpha}{B}.
\end{align*}

So, it must follow that $f_{T-1}$ is, with probability $1-2\delta$, $2\alpha-$approximately multicalibrated. 
\end{proof}


\else
\newpage
\appendix
\onecolumn
\section{Relationships between Multicalibration Definitions}
\label{ap:remarks}

Note that multicalibration with respect to function classes is strictly more general than notions of multicalibration with respect to a class of (potentially sensitive) groups. When the functions $h(x)$ have binary range, we can view them as indicator functions for some subset of the data domain $S \subset \cX$, in which case multicalibration corresponds to asking for calibration conditional on membership in these subsets $S$. Allowing the functions $h$ to have real valued range is only a more general condition.  Our notion of approximate multicalibration takes a weighted average over the level sets $v$ of the predictor $f$, weighted by the probability that $f(x) = v$. This is necessary for any kind of out of sample generalization statement --- otherwise we could not even necessarily measure calibration error from a finite sample. 

Other work on multicalibration use related measures of multicalibration that we think of as $\ell_1$ or $\ell_\infty$ variants, that we can write as 

\[K_1(f,h,\cD) = \sum_{v \in R(f)}\Pr_{(x,y) \sim \cD}[f(x) = v]\left|\E_{(x,y) \sim \cD}[h(x)(y - v) | f(x) = v]  \right|\]

and 

\[K_\infty(f,h,\cD) = \max_{v \in R(f)}\Pr_{(x,y) \sim \cD}[f(x) = v]\left(\E_{(x,y) \sim \cD}[h(x)(y - v) | f(x) = v]  \right).\] 

These notions are related to each other:

$K_2(f,h,\cD) \leq K_1(f,h,\cD) \leq \sqrt{K_2(f,h,\cD)}$ and $K_\infty(f,h,\cD) \leq K_1(f,h,\cD) \leq m K_\infty(f,h,\cD) $ \citep{rothuncertain}.

\section{Proofs}
\label{ap:proofs}

\improveMulticalEquivalent*

\begin{proof}
We prove each direction in turn.
\begin{lemma}
\label{lem:multicalToImprovement}
Fix a  model $f:\cX\rightarrow \R$. Suppose for some $v \in R(f)$ there is an $h \in \cH$ such that:
$$\E[h(x)(y-v)| f(x) = v] \geq \alpha$$
Let $h' = v + \eta h(x)$ for $\eta = \frac{\alpha}{\E[h(x)^2 | f(x) =v]}$. Then:
$$\E[(f(x)-y)^2 - (h'(x) - y)^2 | f(x) = v] \geq \frac{\alpha^2}{\E[h(x)^2 | f(x) =v]}$$
\end{lemma}
\begin{proof}
We calculate:
\begin{eqnarray*}
 \E[(f(x)-y)^2 - (h'(x) - y)^2 | f(x) = v] 
&=& \E[(v-y)^2 - (v+\eta h(x) - y)^2 | f(x) = v] \\
\ifarxiv
&=& \E[v^2 - 2vy + y^2 - (v + \eta h(x))^2 + 2y(v+\eta h(x)) - y^2 | f(x) = v] \\
\else
&=& \E[v^2 - 2vy + y^2 - (v + \eta h(x))^2 \\
&& + 2y(v+\eta h(x)) - y^2 | f(x) = v] \\
\fi 
&=& \E[2y\eta h(x) - 2v \eta h(x) - \eta^2 h(x)^2 | f(x) = v] \\
&=& \E[2\eta h(x)(y-v) - \eta^2 h(x)^2 | f(x) = v] \\
&\geq& 2\eta \alpha - \eta^2 \E[h(x)^2 | f(x) = v] \\
&=& \frac{\alpha^2}{\E[h(x)^2|f(x) = v]}
\end{eqnarray*}
Where the last line follows from the definition of $\eta$. 
\end{proof}
The first direction of Theorem \ref{thm:improve-multical-equivalent} follows from Lemma \ref{lem:multicalToImprovement}, and the observation that since $\cH$ is closed under affine transformations, the function $h'$ defined in the statement of Lemma \ref{lem:multicalToImprovement} is in $\cH$. Now for the second direction. Note that we prove a more general statement in this lemma, showing that for any set $S$, if there exists an $h$ that has squared error $\alpha$ better than the best constant predictor $\bar{y}_S$ for that set, then $\E[h(x)(y - \bar{y}_S) \mid x \in S] \geq \alpha/2$. The result claimed in part 2 of Theorem~\ref{thm:improve-multical-equivalent} follows as a corollary. For any calibrated $f$ we have $v = \E[y \mid f(x) = v]$, and so for each level set of $f$, $f$ is the best constant predictor on that set. 
\begin{lemma}
\label{lem:improvementToMultical}
Fix a model $f:\cX\rightarrow \R$. Suppose for some $S \subset \cX$ there is an $h \in \cH$ such that:
$$\E[(\bar{y}_S-y)^2 - (h(x) - y)^2 | x\in S] \geq \alpha,$$
where $\bar{y}_S = \E[y \mid x\in S]$.
Then it must be that:
$$\E[h(x)(y-\bar{y}_S) | x\in S] \geq \frac{\alpha}{2}$$
\end{lemma}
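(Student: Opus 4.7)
The plan is a direct algebraic manipulation that exploits the defining property of $\bar{y}_S$, namely that $\E[y - \bar{y}_S \mid x \in S] = 0$. I would begin by rewriting the squared error of $h$ by adding and subtracting $\bar{y}_S$ inside the square, to decouple how far $h(x)$ is from the conditional mean $\bar{y}_S$ from how far $y$ is from $\bar{y}_S$. Concretely:
\begin{align*}
(\bar{y}_S - y)^2 - (h(x) - y)^2
&= (\bar{y}_S - y)^2 - \bigl((h(x) - \bar{y}_S) + (\bar{y}_S - y)\bigr)^2 \\
&= -(h(x) - \bar{y}_S)^2 + 2(h(x) - \bar{y}_S)(y - \bar{y}_S).
\end{align*}

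Next I would take the conditional expectation of this identity given $x \in S$. The centering property $\E[y - \bar{y}_S \mid x \in S] = 0$ kills the $\bar{y}_S$ term in the cross product, giving
\[
\E[(h(x) - \bar{y}_S)(y - \bar{y}_S)\mid x \in S] = \E[h(x)(y - \bar{y}_S)\mid x \in S].
\]
So the hypothesized lower bound on the squared-error improvement rewrites as
\[
2\,\E[h(x)(y - \bar{y}_S)\mid x \in S] - \E[(h(x) - \bar{y}_S)^2 \mid x \in S] \ge \alpha.
\]

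Since the variance-like term $\E[(h(x) - \bar{y}_S)^2 \mid x\in S]$ is nonnegative, we can drop it on the left-hand side and conclude $\E[h(x)(y-\bar{y}_S)\mid x\in S] \ge \alpha/2$, which is exactly the desired inequality. There is really no main obstacle here: everything is mechanical once one decides to expand around $\bar{y}_S$ rather than around $y$. The only small subtlety worth flagging is that the factor of $1/2$ (rather than $1$) comes precisely from throwing away the nonnegative $\E[(h(x) - \bar{y}_S)^2 \mid x\in S]$ term; had we kept it, we would obtain the sharper identity expressing the squared-error gap exactly in terms of the covariance $\E[h(x)(y-\bar{y}_S)\mid x\in S]$ and the second moment of $h - \bar{y}_S$.
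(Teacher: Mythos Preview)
Your proof is correct and uses essentially the same ingredients as the paper's: expand the squared-error gap around $\bar{y}_S$, use the centering identity $\E[y-\bar{y}_S\mid x\in S]=0$, and drop the nonnegative term $\E[(h(x)-\bar{y}_S)^2\mid x\in S]$ to absorb the factor of $2$. The paper runs the same computation in the reverse direction (starting from $\E[h(x)(y-\bar{y}_S)\mid x\in S]$ and subtracting the nonnegative square before reassembling $(\bar{y}_S-y)^2-(h(x)-y)^2$), but the algebra and the single inequality step are identical.
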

\begin{proof}
We calculate: 

\begin{eqnarray*}
\E_{(x,y) \sim \cD}[h(x)(y-\bar{y}_S) | x\in S]
&=& \E_{(x,y) \sim \cD}[h(x)y | x\in S] - \bar{y}_S \E_{(x,y) \sim \cD}[h(x) | x\in S] \\
&=& \frac{1}{2}\left(2\E_{(x,y) \sim \cD}[h(x)y | x\in S] - 2  \bar{y}_S \E_{(x,y) \sim \cD}[h(x) | x\in S] \right) \\
\ifarxiv
&\geq&  \frac{1}{2}\left(2\E_{(x,y) \sim \cD}[h(x)y | x\in S] - 2  \bar{y}_S \E_{(x,y) \sim \cD}[h(x) | x\in S] - \E_{(x,y) \sim \cD}[(h(x) - \bar{y}_S)^2 | x\in S] \right) \\
\else 
&\geq&  \frac{1}{2}\bigg(2\E_{(x,y) \sim \cD}[h(x)y | x\in S] - 2  \bar{y}_S \E_{(x,y) \sim \cD}[h(x) | x\in S] \\
&& - \E_{(x,y) \sim \cD}[(h(x) - \bar{y}_S)^2 | x\in S] \bigg) \\
\fi 
&=& \frac{1}{2} \left( \E_{(x,y) \sim \cD}[2h(x) y - h(x)^2 - \bar{y}_S^2 | x\in S] \right) \\ 
&=& \frac{1}{2} \left( \E_{(x,y) \sim \cD}[2h(x) y - h(x)^2 - 2\bar{y}_Sy +  \bar{y}_S^2 | x\in S] \right) \\
&=& \frac{1}{2}\left( \E_{(x,y) \sim \cD}[(\bar{y}_S - y)^2 - (h(x)- y)^2 | x\in S]\right) \\
&\geq& \frac{\alpha}{2}
\end{eqnarray*}
where the 3rd to last line follows from adding and subtracting $\bar{y}_S^2$.
\end{proof}
\end{proof}

\AlgAnalysis*

\begin{remark}\label{rem:hbound}
Note the form of this theorem --- we do not promise multicalibration at approximation parameter $\alpha$ for all of $\cH$, but only for $\cH_B$ --- i.e. those functions in $\cH$ satisfying a bound on their squared value. This is necessary, since $\cH$ is closed under affine transformations. To see this, note that if $\E[h(x)(y-v)] \geq \alpha$, then it must be that $\E[c\cdot h(x)(y-v)] \geq c\cdot \alpha$. Since $h'(x) = c h(x)$ is also in $\cH$ by assumption, approximate multicalibration bounds must always also be paired with a bound on the norm of the functions for which we promise those bounds. 
\end{remark}

\begin{proof}
Since $f_0$ takes values in $[0,1]$ and $y \in [0,1]$, we have $\textrm{err}_0 \leq 1$, and by definition $\textrm{err}_T \geq 0$ for all $T$. By construction, if the algorithm has not halted at round $t$ it must be that $\textrm{err}_t \leq \textrm{err}_{t-1} - \frac{\alpha}{2B}$, and so the algorithm must halt after at most $T \leq \frac{2B}{\alpha}$ many iterations to avoid a contradiction.

It remains to show that when the algorithm halts at round $T$, the model $f_{T-1}$ that it outputs is $\alpha$-approximately multi-calibrated with respect to $\cD$ and $\cH_B$. We will show that if this is not the case, then $\textrm{err}_{T-1} - \textrm{err}_{T} > \frac{\alpha}{2B}$, which will be a contradiction to the halting criterion of the algorithm. 

Suppose that $f_{T-1}$ is not $\alpha$-approximately multicalibrated with respect to $\cD$ and $\cH_B$. This means there must be some $h \in \cH_B$ such that:
\ifarxiv
$$\sum_{v \in [1/m]}\Pr_{(x,y) \sim \cD}[f_{T-1}(x) = v]\left(\E_{(x,y) \sim \cD}[h(x)(y-v) | f_{T-1}(x) = v] \right)^2 > \alpha$$
\else 
$\sum_{v \in [1/m]}\Pr_{(x,y) \sim \cD}[f_{T-1}(x) = v]\left(\E_{(x,y) \sim \cD}[h(x)(y-v) | f_{T-1}(x) = v] \right)^2 > \alpha$.
\fi 

For each $v \in [1/m]$ define 
\ifarxiv 
$$\alpha_v =\Pr_{(x,y) \sim \cD}[f_{T-1}(x) = v]\left(\E_{(x,y) \sim \cD}[h(x)(y-v) | f_{T-1}(x) = v] \right)^2$$
\else
$\alpha_v =\Pr_{(x,y) \sim \cD}[f_{T-1}(x) = v]\left(\E_{(x,y) \sim \cD}[h(x)(y-v) | f_{T-1}(x) = v] \right)^2$
\fi 
So we have $\sum_{v \in [1/m]} \alpha_v > \alpha$. 

Applying the 1st part of Theorem \ref{thm:improve-multical-equivalent} we learn that for each $v$, there must be some $h_v \in \cH$ such that:
\ifarxiv
\begin{eqnarray*}
\E[(f_{T-1}(x)-y)^2 - (h_v(x) - y)^2 | f_{T-1}(x) = v] &>& \frac{1}{\E[h(x)^2 | f_{T-1}(x) =v]}\cdot \frac{\alpha_v}{\Pr_{(x,y) \sim \cD}[f_{T-1}(x) = v]} \\
&\geq& \frac{1}{B}\frac{\alpha_v}{\Pr_{(x,y) \sim \cD}[f_{T-1}(x) = v]}
\end{eqnarray*}
\else
\begin{eqnarray*}
&& \E[(f_{T-1}(x)-y)^2 - (h_v(x) - y)^2 | f_{T-1}(x) = v] \\
&>& \frac{1}{\E[h(x)^2 | f_{T-1}(x) =v]}\cdot 
  \frac{\alpha_v}{\Pr_{(x,y) \sim \cD}[f_{T-1}(x) = v]} \\
&\geq& \frac{1}{B}\frac{\alpha_v}{\Pr_{(x,y) \sim \cD}[f_{T-1}(x) = v]}
\end{eqnarray*}
\fi 
where the last inequality follows from the fact that $h \in \cH_B$
Now we can compute:
\begin{eqnarray*}
&& \E_{(x,y) \sim \cD}[(f_{T-1}(x) - y)^2 - (\tilde f_{T}(x) - y)^2] \\
&=& \sum_{v \in [1/m]} \Pr_{(x,y) \sim \cD}[f_{T-1}(x)=v]\E_{(x,y) \sim \cD}[(f_{T-1}(x) - y)^2 - (\tilde f_{T}(x) - y)^2 | f_{T-1}(x)=v] \\
&=& \sum_{v \in [1/m]} \Pr_{(x,y) \sim \cD}[f_{T-1}(x)=v]\E_{(x,y) \sim \cD}[(f_{T-1}(x) - y)^2 - (h_v^T(x) - y)^2 | f_{T-1}(x)=v] \\
&\geq&  \sum_{v \in [1/m]} \Pr_{(x,y) \sim \cD}[f_{T-1}(x)=v]\E_{(x,y) \sim \cD}[(f_{T-1}(x) - y)^2 - (h_v(x) - y)^2 | f_{T-1}(x)=v]  \\
&\geq&  \sum_{v \in [1/m]} \frac{\alpha_v}{B} \\
&>& \frac{\alpha}{B}
\end{eqnarray*}
Here the third line follows from the definition of $\tilde f_T$ and the fourth line follows from the fact $h_v \in \cH$ and that $h_v^T$ minimizes squared error on $\cD^T_v$ amongst all $h \in \cH$.

Finally we calculate:
\begin{eqnarray*}
&& \textrm{err}_{T-1} - \textrm{err}_{T} \\
&=& \E_{(x,y) \sim \cD}[(f_{T-1}(x) - y)^2 - (f_{T}(x) - y)^2] \\
&=& \E_{(x,y) \sim \cD}[(f_{T-1}(x) - y)^2 - (\tilde f_{T}(x) - y)^2]  + \E_{(x,y) \sim \cD}[(\tilde f_{T}(x) - y)^2 - (f_{T}(x) - y)^2] \\
&>& \frac{\alpha}{B} + \E_{(x,y) \sim \cD}[(\tilde f_{T}(x) - y)^2 - (f_{T}(x) - y)^2] \\
&>& \frac{\alpha}{B} - \frac{1}{m} \\
&\geq& \frac{\alpha}{2B}
\end{eqnarray*}
where the last equality follows from the fact that $m \geq \frac{2B}{\alpha}$.

The 2nd inequality follows from the fact that for every pair $(x,y)$: 
$$(\tilde f_{T}(x) - y)^2 - (f_{T}(x) - y)^2 \geq -\frac{1}{m}$$
To see this we consider two cases. Since $y \in [0,1]$, if $\tilde f_T(x) > 1$ or $\tilde f_T(x) < 0$ then the Round operation decreases squared error and we have  $(\tilde f_{T}(x) - y)^2 - (f_{T}(x) - y)^2 \geq 0$. In the remaining case we have $f_T(x) \in [0,1]$ and $\Delta =  \tilde f_T(x) -  f_T(x)$ is such that $|\Delta| \leq \frac{1}{2m}$. In this case we can compute:  
\begin{eqnarray*}
(\tilde f_{T}(x) - y)^2 - (f_{T}(x) - y)^2 
&=& (f_T(x) + \Delta - y)^2 - (f_{T}(x) - y)^2 \\ 
&=& 2\Delta (f(x) - y) + \Delta^2 \\
&\geq& -2|\Delta| + \Delta^2 \\
&\geq& -\frac{1}{m} 
\end{eqnarray*}
\end{proof}

\thmBoosting*
\begin{proof}
At each round $t$ before the algorithm halts, we have by construction that $\textrm{err}_{t} \leq \textrm{err}_{t-1} - \frac{\alpha}{2 B}$, and since the squared error of $f_0$ is at most $1$, and squared error is non-negative, we must have $T \leq \frac{2B}{\alpha} = \frac{2}{\gamma^2}$.

Now suppose the algorithm halts at round $T$ and outputs $f_{T-1}$. It must be that $\textrm{err}_T > \textrm{err}_{T-1} - \frac{\gamma^2}{2}$. Suppose also that $f_{T-1}$ is not $2\gamma$-approximately Bayes optimal:
$$\E_{(x,y) \sim \cD}[(f_{T-1}(x) - y)^2 - (f^*(x) - y)^2] >  2\gamma$$
We can write this condition as:
$$\sum_{v \in [1/m]}\Pr[f_{T-1}(x) = v]\cdot \E_{(x,y) \sim \cD}[(f_{T-1}(x) - y)^2 - (f^*(x) - y)^2 | f_{T-1}(x) = v] >  2\gamma$$
Define the set:
$$S = \{v \in [1/m] : \E_{(x,y) \sim \cD}[(f_{T-1}(x) - y)^2 - (f^*(x) - y)^2 | f_{T-1}(x) = v] \geq \gamma\}$$
to denote the set of values $v$ in the range of $f_{T-1}$ such that conditional on $f_{T-1}(x) = v$, $f_{T-1}$ is at least $\gamma$-sub-optimal. Since we have both $y \in [0,1]$ and $f_{T-1}(x) \in [0,1]$, for every $v$ we must have that $\E[(f_{T-1}(x) - y)^2 - (f^*(x) - y)^2 | f_{T-1}(x) = v] \leq 1$. Therefore we can bound:
\begin{eqnarray*}
2\gamma &<& \sum_{v \in [1/m]}\Pr[f_{T-1}(x) = v]\cdot \E_{(x,y) \sim \cD}[(f_{T-1}(x) - y)^2 - (f^*(x) - y)^2 | f_{T-1}(x) = v] \\
&\leq& \Pr_{(x,y) \sim \cD}[x \in S] + (1-\Pr_{(x,y) \sim \cD}[x \in S]) \gamma
\end{eqnarray*}
Solving we learn that:
$$\Pr_{(x,y) \sim \cD}[x \in S] \geq \frac{2\gamma-\gamma}{(1-\gamma)} \geq 2\gamma - \gamma = \gamma$$

Now observe that by the fact that $\cH$ is assumed to satisfy the $\gamma$-weak learning assumption with respect to $\cD$, at the final round $T$ of the algorithm, for every $v \in S$ we have that $h_v^T$ satisfies:
$$\E_{(x,y) \sim \cD}[(f_{T-1}(x) - y)^2 - (h_v^T(x) - y)^2 | f_{T-1}(x) = v] \geq \gamma$$
Let $\tilde {\textrm{err}}_T = \E_{(x,y) \sim \cD}[(\tilde f_T(x) - y)^2]$
Therefore we have:
\begin{eqnarray*}
\textrm{err}_{T-1}-\tilde{\textrm{err}}_T &=& \sum_{v \in [1/m]}\Pr_{(x,y) \sim \cD}[f_{T-1}(x) = v]\E_{(x,y) \sim \cD}[(f_{T-1}(x) - y)^2 - (h_v^T(x) - y)^2 | f_{T-1}(x) = v] \\
&\geq& \Pr_{(x,y) \sim \cD}[f_{T-1}(x) \in S] \gamma \\
&\geq& \gamma^2
\end{eqnarray*}
We recall that $|\tilde{\textrm{err}}_T - \textrm{err}_T| \leq 1/m = \frac{\gamma^2}{2}$ and so we can conclude that 
$$\textrm{err}_{T-1}-\textrm{err}_T \geq \frac{\gamma^2}{2}$$
which contradicts the fact that the algorithm halted at round $T$, completing the proof. 
\end{proof}

\thmWeaklearnercharacterize*

\begin{proof}
To avoid measurability issues we assume that models $f$ have a countable range (which is true in particular whenever $\cX$ is countable) --- but this assumption can be avoided with more care.

First we show that if $\cH$ satisfies the weak learner condition relative to $\cD$, then multicalibration with respect to $\cH$ implies Bayes optimality over $\cD$. Suppose not. Then there exists a function $f$ that is multicalibrated with respect to $\cD$ and $\cH$, but is such that:
$$\E_{(x,y) \sim \cD}[(f(x) - y)^2] > \E_{(x,y) \sim \cD}[(f^*(x) - y)^2]$$

By linearity of expectation we have:
$$\sum_{v \in R(f)}\Pr[f(x) = v]\cdot \E_{(x,y) \sim \cD}[(f(x) - y)^2 - (f^*(x) - y)^2 | f(x) = v] > 0$$

In particular  there must be some $v \in R(f)$ with $\Pr_{x \sim \cD_\cX}[f(x) = v] > 0$ such that:
$$\E_{(x,y) \sim \cD}[(f(x) - y)^2 | f(x) = v] > \E_{(x,y) \sim \cD}[(f^*(x) - y)^2 | f(x) = v]$$

Let $S = \{x : f(x) = v\}$. Observe that if $\cH$ is closed under affine transformation, the constant function $h(x) = 1$ is in $\cH$, and hence multicalibration with respect to $\cH$ implies calibration. Since $f$ is calibrated, we know that: 
$$\E_{(x,y) \sim \cD}[(v - y)^2 | x \in S] = \min_{c \in \R} \E_{(x,y) \sim \cD}[(c - y)^2 | x \in S]$$ Thus by the weak learning assumption there must exist some $h \in \cH$ such that:
$$\E[(v-y)^2 - (h(x) - y)^2 | x \in S] = \E[(f(x)-y)^2 - (h(x) - y)^2 | f(x) = v] > 0$$

By Theorem \ref{thm:improve-multical-equivalent}, there must therefore exist some $h' \in \cH$ such that:
$$\E_{(x,y) \sim \cD}[h'(x)(y-v) | f(x) = v] > 0$$ implying that $f$ is \emph{not} multicalibrated with respect to $\cD$ and $\cH$, a contradiction. 

In the reverse direction, we show that for any $\cH$ that does \emph{not} satisfy the weak learning condition with respect to $\cD$, then multicalibration with respect to $\cH$ and $\cD$ does not imply Bayes optimality over $\cD$. In particular, we exhibit a function $f$ such that $f$ is multicalibrated with respect to $\cH$ and $\cD$, but such that:
$$\E_{(x,y) \sim \cD}[(f(x)-y)^2] > \E_{(x,y) \sim \cD}[(f^*(x)-y)^2]$$

Since $\cH$ does not satisfy the weak learning assumption over $\cD$, there must exist some set $S \subseteq \cX$ with $\Pr[x \in S] > 0$ such that 
$$\E_{(x,y) \sim \cD}[(f^*(x) - y)^2 | x \in S] < \min_{c \in \R} \E_{(x,y) \sim \cD}[(c - y)^2 | x \in S] $$
but for every $h \in \cH$:
$$\E_{(x,y) \sim \cD}[(h(x) - y)^2 | x \in S] \geq  \min_{c \in \R} \E_{(x,y) \sim \cD}[(c - y)^2 | x \in S]. $$

Let $c(S) = \E_{(x,y) \sim \cD}[y | x \in S]$. We define $f(x)$ as follows:
$$f(x) = \begin{cases}
  f^*(x)  & x \not \in S \\
  c(S) & x \in S
\end{cases}$$
We can calculate that:
\begin{eqnarray*}
&& \E_{(x,y) \sim \cD}[(f(x)-y)^2] \\ &=& \Pr_{(x,y) \sim \cD}[x \in S] \E_{(x,y) \sim \cD}[(c(S)-y)^2 | x \in S] + \Pr_{(x,y) \sim \cD}[x \not\in S] \E_{(x,y) \sim \cD}[(f^*(x)-y)^2 | x \not \in S] \\
&>& \Pr_{(x,y) \sim \cD}[x \in S] \E_{(x,y) \sim \cD}[(f^*(x)-y)^2 | x \in S] + \Pr_{(x,y) \sim \cD}[x \not\in S] \E_{(x,y) \sim \cD}[(f^*(x)-y)^2 | x \not \in S] \\
&=&  \E_{(x,y) \sim \cD}[(f^*(x)-y)^2]
\end{eqnarray*}
In other words, $f$ is not Bayes optimal. So if we can demonstrate that $f$ is multicalibrated with respect to $\cH$ and $\cD$ we are done. Suppose otherwise. Then there exists some $h \in \cH$ and some $v \in R(f)$ such that 
$$\E_{(x,y) \sim \cD}[h(x)(y-v) | f(x) = v] > 0$$
By Theorem \ref{thm:improve-multical-equivalent}, there exists some $h' \in \cH$ such that:
$$\E_{(x,y) \sim \cD}[(h'(x) - y)^2 | f(x) = v] < \E_{(x,y) \sim \cD}[(f(x) - y)^2 | f(x) = v] $$

We first observe that it must be that $v = c(S)$. If this were not the case, by definition of $f$ we would have that:
$$\E_{(x,y) \sim \cD}[(h'(x) - y)^2 | f(x) = v] < \E_{(x,y) \sim \cD}[(f^*(x) - y)^2 | f(x) = v]$$
which would contradict the Bayes optimality of $f^*$. 
Having established that $v = c(S)$ we can calculate:
\begin{eqnarray*}
&& \E_{(x,y) \sim \cD}[(h'(x) - y)^2 | f(x) = c(S)] \\
&=& \Pr_{(x,y) \sim \cD} [x \in S] \E_{(x,y) \sim \cD}[(h'(x) - y)^2 | x \in S] + \\ && \Pr_{(x,y) \sim \cD} [x \not \in S, f(x) = c(S)] \E_{(x,y) \sim \cD}[(h'(x) - y)^2 | x \not \in S, f(x) = c(S)] \\
&\geq& \Pr_{(x,y) \sim \cD} [x \in S] \E_{(x,y) \sim \cD}[(h'(x) - y)^2 | x \in S] + \\ 
&& \Pr_{(x,y) \sim \cD} [x \not \in S, f(x) = c(S)] \E_{(x,y) \sim \cD}[(f(x) - y)^2 | x \not \in S, f(x) = c(S)] 
\end{eqnarray*}
where in the last inequality we have used the fact that by definition, $f(x) = f^*(x)$ for all $x \not\in S$, and so is pointwise Bayes optimal for all $x \not \in S$. 

Hence the only way we can have $\E_{(x,y) \sim \cD}[(h'(x) - y)^2 | f(x) = c(S)] < \E_{(x,y) \sim \cD}[(f(x) - y)^2 | f(x) = c(S)] $ is if:
$$\E_{(x,y) \sim \cD}[(h'(x) - y)^2 | x \in S] < \E_{(x,y) \sim \cD}[(c(S) - y)^2 | x \in S]   $$
But this contradicts our assumption that $\cH$ violates the weak learning condition on $S$, which completes the proof. 

\end{proof}

\thmmctransference*
\begin{proof}
We prove the theorem in two lemmas as follows. 

\begin{lemma}
Fix a distribution $\cD \in \Delta \cZ$ and two classes of functions $\cH$ and $\cC$ that are closed under affine transformations. Then if $f:\cX\rightarrow [0,1]$ is multicalibrated with respect to $\cD$ and $\cH$, and if $\cH$ satisfies the weak learning condition relative to $\cC$ and $\cD$, then in fact $f$ is multicalibrated with respect to $\cD$ and $\cC$ as well.
\end{lemma}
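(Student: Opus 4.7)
I would argue by contradiction, chaining the two directions of Theorem~\ref{thm:improve-multical-equivalent} through the weak learning condition so that a failure of multicalibration with respect to $\cC$ is routed into a failure of multicalibration with respect to $\cH$. Following the paper's convention for this kind of statement I would assume $f$ has a countable range so that conditioning on level sets is unproblematic.

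First, suppose for contradiction that $f$ is not multicalibrated with respect to $\cD$ and $\cC$. Then there is some $c\in\cC$ with $\sum_{v\in R(f)} \Pr[f(x)=v]\bigl(\E[c(x)(y-v)\mid f(x)=v]\bigr)^2 > 0$, so at least one level set $v$ has $\Pr[f(x)=v] > 0$ and $\E[c(x)(y-v)\mid f(x)=v]\neq 0$. Since $\cC$ is closed under affine transformations, $-c\in\cC$, so we may assume this conditional expectation is strictly positive for some $c\in\cC$. Applying part~1 of Theorem~\ref{thm:improve-multical-equivalent} to the class $\cC$ then produces a $c'\in\cC$ whose conditional squared error on $\{f(x)=v\}$ is strictly smaller than that of the constant predictor $v$.

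Next I transfer this improvement into $\cH$ via the weak learning assumption. Because $\cH$ is closed under affine transformations, the constant function $1$ lies in $\cH$, so multicalibration of $f$ with respect to $\cH$ forces calibration, and in particular $v = \bar y_{S_v}$ on $S_v = \{x : f(x)=v\}$. Hence $c'$ strictly improves on the best constant predictor $\bar y_{S_v}$ on $S_v$ in squared error; the ($\gamma=0$) weak learning condition of $\cH$ relative to $\cC$ and $\cD$ then supplies an $h\in\cH$ that strictly improves on $\bar y_{S_v}$ as well. Applying part~2 of Theorem~\ref{thm:improve-multical-equivalent} to this $h$ (which is valid because $f$ is calibrated on this level set) yields $\E[h(x)(y-v)\mid f(x)=v] > 0$, contradicting the assumed multicalibration of $f$ with respect to $\cH$.

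I expect no serious obstacle: the proof is essentially a three-step diagram chase $\cC\text{-miscalibration} \Rightarrow \cC\text{-squared-error improvement} \Rightarrow \cH\text{-squared-error improvement} \Rightarrow \cH\text{-miscalibration}$. The only point requiring a little care is confirming that the strict-inequality regime ($\gamma=0$) of Definition~\ref{def:weaklearner-C} is the right interface for the strict-inequality output of Theorem~\ref{thm:improve-multical-equivalent} part~1, and that the calibration of $f$ (needed to apply part~2) is free from the closure of $\cH$ under affine transformations. Both are immediate, so the contradiction closes cleanly.
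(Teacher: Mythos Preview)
Your proposal is correct and follows essentially the same argument as the paper: assume a failure of $\cC$-multicalibration at some level set, use closure under negation and part~1 of Theorem~\ref{thm:improve-multical-equivalent} to get a squared-error improvement in $\cC$, note that closure of $\cH$ under affine maps gives calibration so $v=\bar y_{S_v}$, invoke the weak learning condition to transfer the improvement to $\cH$, and use part~2 of Theorem~\ref{thm:improve-multical-equivalent} to contradict $\cH$-multicalibration. The steps, their order, and the justifications all match the paper's proof.
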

\begin{proof}
We assume for simplicity that $f$ has a countable range (which is without loss of generality e.g. whenever $\cX$ is countable). 
Suppose for contradiction that $f$ is not multicalibrated with respect to $\cC$ and $\cD$. In this case there must be some $c \in \cC$ such that:
$$\sum_{v \in R(f)}\Pr[f(x) = v]\left(\E_{(x,y) \sim \cD}[c(x)(y-v)|f(x)=v]\right)^2 > 0$$
Since $\cC$ is closed under affine transformations (and so both $c$ and $-c$ are in $\cC$), there must be some $c' \in \cC$ and some $v \in R(f)$ with $\Pr[f(x) = v] > 0$ such that:
$$\E_{(x,y) \sim \cD}[c'(x)(y-v)|f(x)=v] > 0$$
Therefore, by the first part of Theorem \ref{thm:improve-multical-equivalent}, there must be some $c'' \in \cC$ such that:
$$\E_{(x,y) \sim \cD}[(c''(x)-y)^2 | f(x) = v] < \E_{(x,y) \sim \cD}[(v-y)^2 | f(x) = v] $$
Since $\cH$ is closed under affine transformations, the function $h(x) = 1$ is in $\cH$ and so multicalibration with respect to $\cH$ implies calibration. Thus $v = \bar y_{S_v}$ for $S_v = \{x : f(x) = v\}$. Therefore, the fact that $\cH$ satisfies the weak learning condition relative to $\cC$ and $\cD$ implies that there must be some $h \in \cH$ such that:
$$\E_{(x,y) \sim \cD}[(h(x)-y)^2 | f(x) = v] < \E_{(x,y) \sim \cD}[(v-y)^2 | f(x) = v]$$
Finally, the second part of Theorem \ref{thm:improve-multical-equivalent} implies that:
$$\E_{(x,y) \sim \cD}[h(x)(y-v)|f(x)=v] > 0$$
which is a violation of our assumption that $f$ is multicalibrated with respect to $\cH$ and $\cD$, a contradiction. 
\end{proof}

\begin{lemma}
Fix a distribution $\cD \in \Delta \cZ$ and two classes of functions $\cH$ and $\cC$. Then if $f:\cX\rightarrow [0,1]$ is calibrated and multicalibrated with respect to $\cD$ and $\cH$, and if $\cH$ satisfies the weak learning condition relative to $\cC$ and $\cD$, then:
$$\E_{(x,y) \sim \cD}[(f(x)-y)^2] \leq \min_{c \in \cC}\E_{(x,y) \sim \cD}[(c(x)-y)^2] $$
\end{lemma}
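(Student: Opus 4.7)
The plan is to mirror the contradiction strategy used in the previous lemma, replacing the role of $-c$ vs $c'$ (which gave strict positive correlation) with a direct comparison of squared errors. Assume for contradiction that there is some $c \in \cC$ with $\E[(c(x)-y)^2] < \E[(f(x)-y)^2]$. Since both expectations decompose via linearity over the (countable) level sets of $f$, I would first locate a level set $v \in R(f)$ with $\Pr[f(x)=v] > 0$ on which
\[
\E[(c(x)-y)^2 \mid f(x)=v] < \E[(v-y)^2 \mid f(x)=v].
\]

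Next I would invoke calibration of $f$: since $f$ is calibrated we have $v = \bar y_{S_v}$ where $S_v = \{x : f(x)=v\}$, which means that $v$ is exactly the best constant predictor on $S_v$ in squared error. Hence the inequality above says $c$ strictly beats the best constant predictor on the set $S_v$ by some positive margin. Applying the hypothesized weak learning condition of $\cH$ relative to $\cC$ on the set $S_v$ then yields an $h \in \cH$ with
\[
\E[(h(x)-y)^2 \mid x \in S_v] < \E[(v - y)^2 \mid x \in S_v],
\]
i.e. $h$ strictly improves over $f$ in squared error on the level set $\{f(x)=v\}$.

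Finally, because $f$ is calibrated, I can apply the second part of Theorem \ref{thm:improve-multical-equivalent} to this $h$: the strict squared-error improvement on a level set of $f$ implies $\E[h(x)(y-v) \mid f(x)=v] > 0$. Since $\Pr[f(x)=v] > 0$, this contributes strictly positively to $K_2(f,h,\cD)$, contradicting the assumption that $f$ is multicalibrated with respect to $\cH$ and $\cD$. The main (and only) subtlety is ensuring that we use calibration in two places: once to convert the hypothesis $f(x)=v$ into "the best constant predictor on $S_v$ is $v$" (so that the weak learning condition can be invoked), and once so that Theorem \ref{thm:improve-multical-equivalent}(2) is applicable to conclude the multicalibration violation. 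Everything else is a routine level-set decomposition.
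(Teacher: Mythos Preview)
Your proposal is correct and follows essentially the same approach as the paper's proof: assume a $c \in \cC$ beats $f$ in squared error, find a level set where $c$ beats the constant $v$, use calibration to identify $v$ with the best constant predictor on $S_v$, invoke the weak learning condition to get an $h \in \cH$ that also beats $v$ on $S_v$, and then apply the second part of Theorem~\ref{thm:improve-multical-equivalent} to derive a multicalibration violation. Your explicit note about the two distinct uses of calibration is a nice clarification that the paper leaves implicit.
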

\begin{proof}
We assume for simplicity that $f$ has a countable range (which is without loss of generality e.g. whenever $\cX$ is countable). Suppose for contradiction that there is some $c \in \cC$ such that:
$$\E_{(x,y) \sim \cD}[(c(x)-y)^2] 
 < \E_{(x,y) \sim \cD}[(f(x)-y)^2] $$
 Then there must be some $v \in R(f)$ with $\Pr[f(x) = v] > 0$ and:
$$\E_{(x,y) \sim \cD}[(c(x)-y)^2 | f(x) = v] 
 < \E_{(x,y) \sim \cD}[(v-y)^2 | f(x) = v] $$
 Since $f$ is calibrated, $v = \bar y_{S_v}$ for $S_v = \{x : f(x) = v\}$. Therefore, the fact that $\cH$ satisfies the weak learning condition relative to $\cC$ and $\cD$ implies that there must be some $h \in \cH$ such that:
$$\E_{(x,y) \sim \cD}[(h(x)-y)^2 | f(x) = v] < \E_{(x,y) \sim \cD}[(v-y)^2 | f(x) = v]$$
Finally, the second part of Theorem \ref{thm:improve-multical-equivalent} implies that:
$$\E_{(x,y) \sim \cD}[h(x)(y-v)|f(x)=v] > 0$$
which is a violation of our assumption that $f$ is multicalibrated with respect to $\cH$ and $\cD$, a contradiction.
\end{proof}
\end{proof}
\section{Generalization Bounds}
\label{sec:generalization}

\section{The Relationship Between Approximate Multicalibration and Approximate Bayes Optimality}
\label{ap:approx}

\section{Approximate Multicalibration and Loss Minimization for Constrained Classes}
\label{ap:omni}

\section{Additional Experimental Details}
\label{ap:experimental}

In this section we provide further details from Section
\ref{sec:experimental} regarding the empirical evaluation of Algorithm \ref{alg:regression-multicalibrator}.

\subsection{Prediction on Census Data}\label{ssec:census-task}
In the evaluation of Algorithm \ref{alg:regression-multicalibrator} on US Census data from the folktables package \cite{ding2021retiring}, the exact features included are:
\begin{table}[h]
    \centering
    \begin{tabular}{c|c}
    feature & description \\
    \hline
    AGEP & age \\ 
    COW & class of worker \\
    SCHL & education level \\ 
    MAR & marital status \\
    OCCP & occupation \\ 
    POBP & place of birth \\
    RELP & relationship \\ 
    WKHP & work hours per week \\
    SEX & binary sex \\ 
    RAC1P & race \\
    \end{tabular}
    \caption{Features included in income prediction task.}
    \label{tab:feature-space}
\end{table}

\begin{table}[H]
    \centering
    \begin{tabular}{|c|c|c|c|c|c|c|c|c|c|}
    \hline
    \multirow{2}{*}{\# Weak Learners} & \multicolumn{3}{c|}{DT(1)} & \multicolumn{3}{c|}{DT(2)} & \multicolumn{3}{c|}{DT(3)}\\
      \cline{2-10} 
      & LS & GB & Faster? & LS & GB & Faster? & LS & GB & Faster?\\ 
    \hline
    \multicolumn{10}{|c|}{50 level sets}\\
    \hline
    100  & 9.11  & 11.97  & \checkmark & 5.86  & 23.01  & \checkmark & 6.88  & 32.92  & \checkmark \\ 
    300  & 18.70 & 35.81  & \checkmark & 14.90 & 69.17  & \checkmark & 15.64 & 102.14 & \checkmark \\ 
    500  & 27.00 & 58.19  & \checkmark & 21.74 & 115.65 & \checkmark & 24.77 & 169.90 & \checkmark \\ 
    1000 & 46.73 & 116.49 & \checkmark & 42.92 & 231.74 & \checkmark & 46.38 & 336.89 & \checkmark \\ 
    \hline
    \multicolumn{10}{|c|}{100 level sets}\\
    \hline 
    100  & 7.18  & 11.97  & \checkmark & 5.29  & 23.01  & \checkmark & 5.06  & 32.92  & \checkmark \\ 
    300  & 13.08 & 35.81  & \checkmark & 13.55 & 69.17  & \checkmark & 14.72 & 102.14 & \checkmark \\ 
    500  & 21.20 & 58.19  & \checkmark & 19.57 & 115.65 & \checkmark & 21.79 & 169.90 & \checkmark \\ 
    1000 & 41.99 & 116.49 & \checkmark & 36.26 & 231.74 & \checkmark & 40.92 & 336.89 & \checkmark \\ 
    \hline
     \multicolumn{10}{|c|}{300 level sets}\\
    \hline 
    100  & 5.87  & 11.97  & \checkmark & 9.18  & 23.01  & \checkmark & 6.54  & 32.92  & \checkmark \\ 
    300  & 13.21 & 35.81  & \checkmark & 17.46 & 69.17  & \checkmark & 11.13 & 102.14 & \checkmark \\ 
    500  & 19.05 & 58.19  & \checkmark & 22.20 & 115.65 & \checkmark & 19.64 & 169.90 & \checkmark \\ 
    1000 & 32.80 & 116.49 & \checkmark & 36.61 & 231.74 & \checkmark & 27.12 & 336.89 & \checkmark \\ 
    \hline
    \end{tabular}
    \caption{Time (in seconds) comparison of Algorithm \ref{alg:regression-multicalibrator} (LS) with fifty level sets and Gradient Boosting to train certain numbers of estimators for various weak learner classes.}
    \label{tab:ls-gb-times}
\end{table}

In Table \ref{tab:ls-gb-times}, we compare the time taken to train $n$ weak learners with Algorithm \ref{alg:regression-multicalibrator} and with scikit-learn's version of Gradient Boosting on our census data. 
Recall that our algorithm trains multiple weak learners per round of boosting, and so comparing the two algorithms for a fixed number of calls to the weak learner is distinct from comparing them for a fixed number of rounds. 
Because models output by Algorithm~\ref{alg:regression-multicalibrator} may be more complex than those produced by Gradient Boosting run for the same number of rounds, we use number of weak learners trained as a proxy for model complexity, and compare the two algorithms holding this measure fixed. 
We see the trend for Gradient Boosting is linear with respect to number of weak learners, whereas Algorithm \ref{alg:regression-multicalibrator} does not follow the same linear pattern upfront. This is due to not being able to fully leverage parallelization of training weak learners in early stages of boosting. At each round, Algorithm~\ref{alg:regression-multicalibrator} calls the weak learner on every large enough level set of the current model, and it is these independent calls that can be easily parallelized. However, in the early rounds of boosting the model may be relatively simple, and so many level sets may be sparsely populated. As the model becomes more expressive over subsequent rounds, the weak learner will be invoked on more sets per round, allowing us to fully utilize parallelizability. 

\begin{figure}[H]
    \centering
    \includegraphics[trim = {15 11 15 11}, clip, width = .45\linewidth]{Experimentation/gb_vs_ls/LR_mse.pdf}
    \includegraphics[trim = {15 11 15 11}, clip, width = .45\linewidth]{Experimentation/gb_vs_ls/LR_msce.pdf}
    \\
    \includegraphics[trim = {15 11 15 11}, clip, width = .45\linewidth]{Experimentation/gb_vs_ls/dt1_mse.pdf}
    \includegraphics[trim = {15 11 15 11}, clip, width = .45\linewidth]{Experimentation/gb_vs_ls/dt1_mse.pdf}
    \\
    \includegraphics[trim = {15 11 15 11}, clip, width = .45\linewidth]{Experimentation/gb_vs_ls/dt2_mse.pdf}
    \includegraphics[trim = {15 11 15 11}, clip,width = .45\linewidth]{Experimentation/gb_vs_ls/dt2_msce.pdf}
    \\
    \includegraphics[trim = {15 11 15 11}, clip, width = .45\linewidth]{Experimentation/gb_vs_ls/dt3_mse.pdf}
    \includegraphics[trim = {15 11 15 11}, clip, width = .45\linewidth]{Experimentation/gb_vs_ls/dt3_msce.pdf}
    \\
    
    \caption{MSE and MSCE comparison of Algorithm \ref{alg:regression-multicalibrator} (LS) and Gradient Boosting (GB) on linear regression and decision trees of varying depths. * indicates termination round of LS and occurs, from top to bottom, at $T = 41, 23, 39, 20$.}
    \label{fig:gb_vs_ls}
\end{figure}

In Figure \ref{fig:gb_vs_ls}, we measure MSE and MSCE for Algorithm \ref{alg:regression-multicalibrator} and Gradient Boosting over rounds of training on our census data. Again, we note that one round of Algorithm \ref{alg:regression-multicalibrator} is not equivalent to one round of Gradient Boosting, but intend to demonstrate error comparisons and rates of convergence. For the linear regression plots, Gradient Boosting does not reduce either error since combinations of linear models are also linear. As the complexity of the underlying model class increases, Gradient Boosting surpasses Algorithm \ref{alg:regression-multicalibrator} in terms of MSE, though it does not minimize calibration error.


We notice that Algorithm \ref{alg:regression-multicalibrator}, like most machine learning algorithms, is prone to overfitting when allowed. Future performance hueristics we intend to investigate include validating updates, complexity penalties, and weighted mixtures of updates.

\subsection{Prediction on Synthetic Data}\label{ssec:synthetic-task}
In the synthetic prediction tasks, we produce labels for each point using one the following two functions (Figure~\ref{fig:bayes-optimal-appendix}):
\[
C_{0}(x) = 
\begin{cases}
(x+1)^{2} + (y - 1)^{2}, & \text{if } x \leq 0, y \geq 0 \\
(x-1)^{2} + (y - 1)^{2}, & \text{if } x > 0, y \geq 0 \\
(x+1)^{2} + (y + 1)^{2}, & \text{if } x \leq 0, y < 0 \\
(x-1)^{2} + (y + 1)^{2}, & \text{if } x > 0, y < 0 \\
\end{cases}\label{eq:C_0} \tag{$C_{0}$}
\]
\[
C_{1}(x) = 
\begin{cases}
x + 20xy^{2}\cos(-8x)\sin(8y)\left(\frac{(1.5x + 4)(x+1)^2}{y+3} + (y-1)^{2}\right), & \text{if } x \leq 0, y \geq 0 \\
x + 20xy^{2}\cos(8x)\sin(8y)\left(\frac{(1.5x + 4)(x-1)^2}{y+3} + (y-1)^{2}\right), & \text{if } x > 0, y \geq 0 \\
x + 20xy^{2}\cos(-8x)\sin(8y)\left(\frac{(1.5x + 4)(x+1)^2}{y+3} + (y+1)^{2}\right), & \text{if } x \leq 0, y < 0 \\
x + 20xy^{2}\cos(8x)\sin(8y)\left(\frac{(1.5x + 4)(x-1)^2}{y+3} + (y+1)^{2}\right), & \text{if } x > 0, y < 0 \\
\end{cases}\label{eq:C_1} \tag{$C_{1}$}
\]
\bigskip
\begin{figure}[h]
\centering
\begin{subfigure}[c]{.45\linewidth}
    \centering
    \includegraphics[trim={2.2cm 2.2cm 2.2cm 2.2cm},clip, width = \linewidth]{Experimentation/simple.png}
\end{subfigure}
\begin{subfigure}[c]{.45\linewidth}
    \centering
    \includegraphics[trim={2.2cm 2.2cm 2.2cm 2.2cm},clip,width = \linewidth]{Experimentation/complex}
\end{subfigure}  
\caption{3-D visual mapping of \ref{eq:C_0} and \ref{eq:C_1}. Repeated image of Figure \ref{fig:bayes-optimal} with larger scale for ease of viewing.}
\label{fig:bayes-optimal-appendix}
\end{figure}

\begin{figure}[h]
    \centering
    \includegraphics[width = \linewidth]{Experimentation/high-res-table-complex.pdf}
    \caption{Stages of Algorithm \ref{alg:regression-multicalibrator} learning \ref{eq:C_1} with linear regression (LR) and varying depth $d$ decision trees (DT($d$)). In the out of sample plot for linear regression, points are not mapped to their proper position, implying \ref{eq:C_1} cannot be learned by linear functions. All other hypothesis classes eventually converge to \ref{eq:C_1}.}
    \label{fig:complex-learn}
\end{figure}

\fi

\end{document}